\useunder{\uline}{\ul}{}
\theoremstyle{plain}
\newtheorem{theorem}{Theorem}[section]
\newtheorem{example}[theorem]{Example}
\newtheorem{corollary}[theorem]{Corollary}
\theoremstyle{definition}
\theoremstyle{remark}
\setlist{leftmargin=4mm}
\definecolor{def}{RGB}{119, 228, 200}
\definecolor{thm}{RGB}{69, 53, 193}
\newtcolorbox{thmbox}[1][]{colback=thm!5!white,colframe=thm!60!black,boxsep=-4pt,grow to left by=4pt,left=10pt,grow to right by=4pt,right=10pt,top=10pt,bottom=10pt,#1}
\newtcolorbox{defbox}[1][]{colback=def!5!white,colframe=def!60!black,boxsep=-4pt,grow to left by=6pt,left=10pt,grow to right by=6pt,right=10pt,top=10pt,bottom=10pt,#1}
\tiny\color{gray}, 
\icmltitlerunning{The Lock-in Hypothesis: Stagnation by Algorithm}
\begin{document}

\twocolumn[
\icmltitle{The Lock-in Hypothesis: Stagnation by Algorithm}



\icmlsetsymbol{equal}{*}
\icmlsetsymbol{anchor}{\textdagger}

\begin{icmlauthorlist}
\icmlauthor{Tianyi Alex Qiu}{equal,pku}
\icmlauthor{Zhonghao He}{equal,cam}
\icmlauthor{Tejasveer Chugh}{id}
\icmlauthor{Max Kleiman-Weiner}{uw}
\end{icmlauthorlist}

\icmlaffiliation{pku}{Peking University}
\icmlaffiliation{cam}{University of Cambridge}
\icmlaffiliation{uw}{University of Washington}
\icmlaffiliation{id}{Stanford University}

\icmlcorrespondingauthor{Tianyi Alex Qiu}{qiutianyi.qty@gmail.com}
\icmlcorrespondingauthor{Zhonghao He}{zh378@cam.ac.uk}

\icmlkeywords{Machine Learning, ICML}

\vskip 0.3in
]



\printAffiliationsAndNotice{\icmlEqualContribution} 

\begin{abstract}
The training and deployment of large language models (LLMs) create a feedback loop with human users: models learn human beliefs from data, reinforce these beliefs with generated content, reabsorb the reinforced beliefs, and feed them back to users again and again. This dynamic resembles an echo chamber.
We hypothesize that this feedback loop entrenches the existing values and beliefs of users, leading to a loss of diversity and potentially the \emph{lock-in} of false beliefs.
We formalize this hypothesis and test it empirically with agent-based LLM simulations and real-world GPT usage data. 
Analysis reveals sudden but sustained drops in diversity after the release of new GPT iterations, consistent with the hypothesized human-AI feedback loop.\\[5pt]
\textbf{Website}: {\footnotesize\url{thelockinhypothesis.com}}
\end{abstract}



\section{Introduction}

\paragraph{Human-LLM Feedback Loops}

Frontier AI systems, such as large language models (LLMs) \citep{zhao2023survey}, are increasingly influencing human beliefs and values \citep{fisher2024biased,leib2021corruptive,costello2024durably}. This creates a self-reinforcing feedback loop: AI systems learn values from human data at pre- and post-training stages \citep{conneau2019cross,bai2022training, santurkar2023whose}, influence human opinions through their interactions, and then reabsorb those influenced beliefs, and so on. What equilibrium will this dynamic process reach?

Some argue that this dynamic creates a collective echo chamber  \citep{glickman2024human,sharma2024generative,anderson2024homogenization}. 
Experimental evidence, including from randomized controlled trials on AI usage, supports this idea \citep{glickman2024human,sharma2024generative,renbias,peterson2024ai,jakesch2023co}. However there remains a key gap in prior analyses. First, existing studies focus primarily on unidirectional AI influence on humans, neglecting the feedback loop arising from \emph{mutual} influence. Second, a mechanistic explanation of the feedback loop has not yet been established. Finally, much of the evidence is from laboratory settings, rather than real-world usage patterns.

\paragraph{The Lock-in Hypothesis}
We use \emph{lock-in} to refer to a state where a set of ideas, values, or beliefs achieves a dominant and persistent position. During lock-in, the diversity of alternative beliefs diminishes until they are marginalized or vanish entirely \citep{gabriel2021challenge,hendrycks2022x,qiu2024progressgym}. In domains where objective truth is possible, a population may converge to false beliefs (e.g., geocentrism); in domains without objective truth, a population may converge towards harmful beliefs (e.g., slavery or racism). Group-level diversity loss \citep{liang2024monitoring, padmakumar2023does, anderson2024homogenization} combined with the emergence of reinforcing feedback loops \citep{williams2024targeted, taori2023data, hall2022systematic, shumailov2024ai} has already been documented across fields. For instance, bias can become amplified through human-AI interactions \citep{glickman2024human}, while institutional and technological factors may further accelerate lock-in \citep{gabriel2021challenge}.



While ``lock-in'' in the age of LLMs appears to be a possible outcome, there has been no known attempt to characterize its cause and effect on human-AI interaction. In this study, we make the following contributions:
\begin{itemize}[leftmargin=2mm]
    \item \textbf{Formalizing the Lock-in Hypothesis (\S\ref{sec:formal}):} We construct a formal Bayesian model for the lock-in hypothesis. We show that collective lock-in to a false belief is inevitable when both feedback loops and a moderate level of mutual trust are present in a community.
    \begin{defbox}
        \textbf{The Lock-in Hypothesis}: The feedback loop in human-AI interaction will eventually lead a population to converge on false beliefs. Such beliefs, once formed, are hard to change with opposing evidence, as feedback loops indiscriminately amplify confidence in existing individual and collective beliefs and humans develop trust in AI. 
    \end{defbox}
    \item \textbf{Mechanistic Simulations (\S\ref{sec:sim}):} We conduct agent-based LLM simulations to demonstrate the pathway through which feedback loops lead to lock-in, and establish diversity loss as a metric for lock-in.
    \item \textbf{Hypothesis Testing on WildChat (\S\ref{sec:evidence}):} We discover discontinuous yet sustained conceptual diversity loss in human messages of real-world LLM usage after the release dates of new GPT versions trained on new human data. It is the first real-world evidence supporting the existence of a human-LLM feedback loop that reinforces user beliefs. In some cases, we also observe increases in diversity over time. This may be due to the versatility of LLMs being better leveraged by users. The interaction between lock-in and exploration-oriented (and hence pro-diversity) forces in human-AI interaction remains an open problem for future research.
\end{itemize}



\section{Related Work}

\paragraph{Echo Chambers in Recommender Systems}

The closest analogy to the lock-in effects we describe in language models are the \emph{echo chambers} created by recommender systems (RecSys). An echo chamber is created when the system entrenches and polarizes the opinions and preferences of users by repeatedly recommending content that only represents a single agreeable view \citep{cinelli2021echo}. Such effects have been shown empirically in randomized controlled trials \citep{hosseinmardi2024causally,piccardi2024social,luzsa2019psychological,gillani2018me,hobolt2024polarizing,wolfowicz2023examining}, observational studies \citep{bessi2016users,boutyline2017social,bright2020echo}, and simulations \citep{mansoury2020feedback,kalimeris2021preference,hazrati2022recommender,carroll2022estimating}. However, opposing findings have also been reported \citep{dubois2018echo,hosseinmardi2024causally,brown2022echo}, and systematic reviews have yet to reach a definite conclusion \citep{bruns2017echo,terren2021echo}.

LLMs and their interaction with human users may have similar effects, but there are substantial differences. For example, RecSys recommendations are personalized (i.e., optimized for each user individually). In contrast, today's LLMs mostly optimize against all users collectively through both vanilla preference learning and pluralistic alignment methods \citep{ge2024axioms,jin2024language}. Personalized recommendations have been hypothesized as a culprit for polarization \citep{bessi2016users,hobolt2024polarizing}, while, as will be demonstrated in later sections, the preference learning of LLMs may cause lock-in at the group level. Efforts have been made to personalize LLMs to each user's specific preferences \citep{tseng2024two}. However, doing so merely shrinks the size of the ``echo chamber'' from a collective chamber for all users to small chambers for each user individually and, as will be shown in \S\ref{sec:formal}, lock-in can occur regardless of the number of agents involved.


\paragraph{Influence of Language Models on Human Users}

While LLMs are designed to assist human users, they often exert unintended influence over human opinions. Such influence has been established in co-writing interactions \citep{jakesch2023co}, LLM-powered search systems \citep{sharma2024generative}, LLM-generated suggestions \citep{danry2024deceptive,leib2021corruptive}, and dialogues with LLM-powered chatbots \citep{salvi2024conversational,hackenburg2024evidence,potter2024hidden,fisher2024biased,costello2024durably}. Theories and simulations have been designed to explain the nature of this influence \citep{renbias,peterson2024ai}. The mere fact that LLMs influence humans does not mean the influence is either harmful or irreversible. However, it does create the dynamic of \emph{mutual influence} between LLMs and human users, and our focus on such dynamics and their consequences sets us apart from other works on LLM influence.

\paragraph{Feedback Loops in Language Models} 

Feedback loops are not uncommon in the study of language models. \emph{Model collapse}, the degradation of model performance when trained on model-generated data \citep{shumailov2024ai}, results from model outputs being fed into its own training data. \emph{In-context reward hacking}, where LLMs' pursuit of an objective at test-time creates negative side effects \citep{pan2024feedback}, results from models over-optimizing an objective in an iterative deployment loop. Here, however, we focus specifically on the feedback loop between LLM outputs and human preferences: LLMs iteratively learn from incoming human preference data \citep{dong2024rlhf,chen2023chatgpt}, while influencing human preference with their output \citep{salvi2024conversational,hackenburg2024evidence,potter2024hidden,fisher2024biased}. This gives rise to a dynamic where confirmatory communication reinforces beliefs and where human opinions are learned and indiscriminately repeated to humans in later interactions. As a result, there are downstream consequences: human subject experiments demonstrate loss of opinion diversity \citep{sharma2024generative,peterson2024ai} and bias amplification \citep{renbias,glickman2024human}.

However, (1) this evidence comes from artificially designed laboratory settings (e.g., binary classification tasks on images) unrepresentative of those in the wild, and (2) no known attempt has been made to give a mechanistic account that explains population-level lock-in from the low-level dynamics of iterated training. We aim to fill these gaps.


\begin{figure*}[t]
    \centering
    \includegraphics[trim={0 6cm 0 4cm},clip,width=\textwidth]{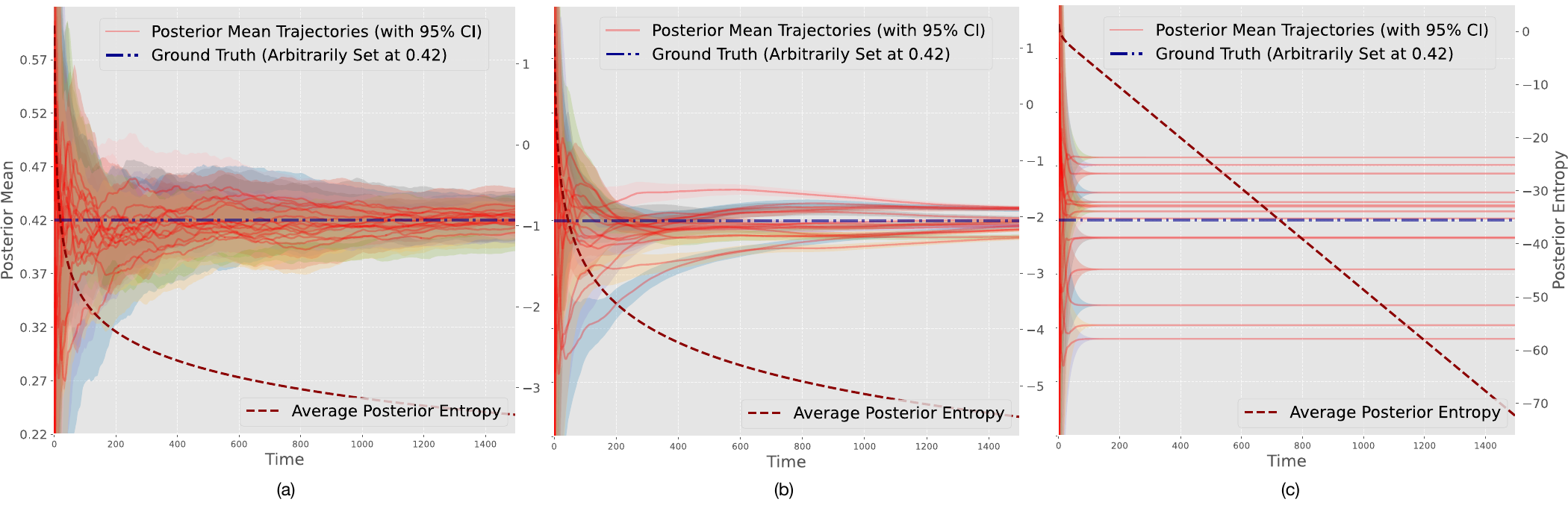}
    \vspace{-1.5em}
    \caption{\textbf{Phase change of the Bayesian updating dynamics.} \textnormal{The critical threshold is $(N-1)\lambda_1\lambda_2=1$, where $N$ is the number of agents, and $\lambda_1,\lambda_2$ are degrees of mutual trust between human and AI agents. Each subfigure contains the trajectory of collective posterior belief over a certain target of estimation across 15 independent simulations, and the trajectory of the posterior entropy over time.} \textbf{(a)} When $(N-1)\lambda_1\lambda_2=0.9$, collective beliefs of different runs converge towards the ground truth. \textbf{(b)} When $(N-1)\lambda_1\lambda_2=1.0$, convergence trends towards the ground truth remain but are accompanied by over-confidence. \textbf{(c)} When $(N-1)\lambda_1\lambda_2=1.1$, in every simulation, the collective posterior belief converge to a false value that's different from the ground truth.}
    \label{fig:parallel-analytical}
\end{figure*}

\section{Formal Model}\label{sec:formal}


In this section, we construct an analytical model that formalizes the hypothesized lock-in phenomenon and its underlying cause. We formally define lock-in as the \emph{irreversible entrenchment of a belief}, characterize the conditions for lock-in, and develop \emph{group-level diversity loss} as an observable metric of lock-in.

The setting is inspired by that of iterated learning, where individuals learn from others who learned similarly \citep{griffiths2007language,kirby2014iterated}; and information cascades, where decisions based on others' actions rather than personal information lead to overconfident or false beliefs \citep{anderson1997information,zhou2021survey}. In contrast to these models, we explicitly consider the topological structure of interactions, and, unlike iterated learning, we focus on the perils of mutual deference rather than the benefits of shared information. Our model is also related to belief propagation, a message-passing algorithm for computing marginals in graphical models \citep{su2015convergence}, but aims to model an epistemic process rather than carry out pobabilistic inference.

\subsection{Basic Setup}
Consider a group of $N$ agents, labeled $1,2, \dots, N$, tasked with estimating an unknown quantity $\mu\in\mathbb{R}$. At each time step $t$, agent $i$ measures $\mu$ with independent noise, i.e. 
\[
o_{i,t} \sim \mathcal{N}\bigl(\mu, \sigma_i^2\bigr),
\]
where $\sigma_i^2$ is the variance of agent $i$'s measurement. 

Based on these measurements, each agent $i$ maintains a \emph{private} posterior about $\mu$, namely
$\mathcal{N}\bigl(\hat\mu_{i,t}, p_{i,t}^{-1}\bigr)$,
where $\hat\mu_{i,t}=\frac 1t\sum_{i=1}^t o_{i,t}$ is the mean, and $p_{i,t}=t\sigma_i^{-2}$ is the precision (reciprocal of variance) of $i$'s posterior at time $t$.


For the sake of generality, we do not attempt to distinguish between human agents and AI agents for now. Their distinction will be made later in the derivation.




Assume that an agent $i$ interacts with another agent $j$ and learns about its belief $\mathcal{N}\left(\hat\nu_{j,t},q^{-1}_{j,t}\right)$. By aggregating its private belief and its secret access to $j$'s belief, $i$ arrives at its new \emph{aggregate belief} $\mathcal{N}\left(\hat\nu_{i,t+1},q^{-1}_{i,t+1}\right)$, where
\begin{align}
    \hat\nu_{i,t+1} &= \frac{p_{i,t+1}\hat\mu_{i,t+1}+q_{j,t}\hat\nu_{j,t}}{p_{i,t+1}+q_{j,t}} \label{eq:condv} \\
    q_{i,t+1} &= p_{i,t+1} + q_{j,t} \label{eq:condq}
\end{align}
as a direct corollary of Bayes' theorem.

As is typical of both human-human and human-AI interactions, only $j$'s aggregate belief is accessible to $i$, while its private belief is kept to $j$ itself. Hence the recursive use of $\hat\nu_{j,t},q_{j,t}$ when deriving $\hat\nu_{i,t+1},q_{i,t+1}$.

\subsection{The Trust Matrix and Transition Dynamics}

Consider a 0-1 matrix $\bm{\mathrm{W}}=(w_{i,j})\in\{0,1\}^{N\times N}$, where $w_{i,j}$ denotes whether agent $i$ knows and trusts agent $j$'s posterior belief. Denote with $\bm{\hat\mu}_t, \bm{\hat\nu}_t, \bm{p}_t, \bm{q}_t\in\mathbb{R}^N$ the agent-wise parameter vectors at time $t$, and we have
\begin{align}
    \bm{p}_{t+1} &= \bm{p}_{t}+\sigma^{-2}\bm{1} \label{eq:r0}  \\
    \bm{\hat\mu}_{t+1} \odot \bm{p}_{t+1} &= \bm{\hat\mu}_{t} \odot \bm{p}_{t} + \sigma^{-2}\bm{o}_t  \label{eq:r1} \\
    \bm{q}_{t+1} &= \bm{p}_{t+1} + \bm{\mathrm W}\cdot \bm{q}_t \label{eq:r2} \\
    \bm{\hat\nu}_{t+1} \odot \bm{q}_{t+1}&= \bm{\hat\mu}_{t+1} \odot \bm{p}_{t+1} + \bm{\mathrm W}(\bm{\hat\nu}_{t} \odot \bm{q}_{t}) \label{eq:r3} 
\end{align}
where $\bm{o}_t\sim \mathcal{N}(\mu,\sigma^2\bm{\mathrm{I}})$, and $\odot$ denotes pointwise product. \eqref{eq:r0} and \eqref{eq:r1} follow directly from definitions, while \eqref{eq:r2} and \eqref{eq:r3} are the multi-agent generalization of \eqref{eq:condq} and \eqref{eq:condv} respectively.
 
In the real world, people tend to discount opinions of others compared to their own. Extending the definition of $\bm{\mathrm{W}}$, we may further allow its entries to take arbitrary non-negative real values. Given any $\bm{\mathrm{W}}\in\mathbb{R}_{\geq 0}^{N\times N}$, we interpret $w_{i,j}$ as
\begin{itemize}
    \item \textbf{When $w_{i,j}=0$}: $i$ has no access to, or completely ignores, $j$'s aggregate belief.
    \item \textbf{When $0<w_{i,j}<1$}: $i$ sees $j$'s aggregate belief, but, believing that $j$'s measurements are noisier than it claims, discounts its precision $q_{j,t}$ by a factor of $w_{i,j}$.
    \item \textbf{When $w_{i,j}=1$}: $i$ views $j$'s aggregate belief as equally trustworthy compared to its own.
    \item \textbf{When $w_{i,j}>1$}: $i$ views $j$'s aggregate belief as more trustworthy than its own.
\end{itemize}
Transition dynamics $(\ref{eq:r0})$-$(\ref{eq:r3})$ stay the same under this generalized \emph{trust matrix} $\bm{\mathrm{W}}$.

\begin{example}[Human-LLM Dynamics]\label{ex:human-llm}

Consider a collection of one LLM advisor and $N-1$ human users. We construct the trust matrix
\[
\bm{\mathrm{W}} = 
\begin{pmatrix}
0 & \lambda_1 & \cdots & \lambda_1 \\
\lambda_2 & 0 & \cdots & 0 \\
$\vdots$ & $\vdots$ & \ddots & \vdots \\
\lambda_2 & 0 & \cdots & 0 \\
\end{pmatrix},
\]  
where the AI agent (labeled $1$) trusts each human to the extent $\lambda_1>0$, representing its strength of preference learning; and the human agents (labeled $2$ through $n$) each trust the AI agent to the extent $\lambda_2>0$. No communications exist between humans, resulting in zero entries.



Each human agent $i$ privately obtains observations $o_{i,t}\sim \mathcal{N}(\mu, \sigma^2)$.
They each maintain both a private belief and an all-things-considered aggregate belief. The latter is secretly shown to the AI at each time step, who aggregates these beliefs and broadcasts the result.
Each human agent updates their aggregate belief about $\mu$ based on both (1) their own private belief obtained from private measurements $o_{i,t}$ and (2) AI's aggregation of all agents' aggregate beliefs.
\end{example}

Importantly, in Example~\ref{ex:human-llm}, each human agent $i$ assumes that AI does not update on $i$'s belief (and thus AI's information serves as an independent source of information), while the AI does update its own belief based on $i$'s belief via preference learning, causing agent $i$ to double count its own beliefs. The result of such double-counting is then learned again by the AI, broadcasted to humans, double-counted again by others, etc. A \textbf{feedback loop} thus emerges.

Such a setting reflects the ignorance of how information flows in real-world interactions. LLMs are post-trained on human preference data, the latter erroneously assumed to be an independent source of truth uninfluenced by the LLM itself \citep{dong2024rlhf,carroll2024ai}. Meanwhile, users perceive LLMs as objective ``third parties,'' without necessarily discounting the ongoing preference learning process \citep{helberger2020fairest,glickman2024human}.



\subsection{Conditions for Lock-in}

We start with a maximally general theorem, one that is agnostic towards human/AI distinctions. The proofs of both \ref{thm:main} and \ref{cor:lock-in} can be found in Appendix \ref{proof}.
\begin{theorem}[Feedback Loops Induce Collective Lock-In]\label{thm:main}
Given any $\bm{\mathrm{W}}\in\mathbb{R}_{\geq 0}^{N\times N}$, if the spectral radius \(\rho(\bm{\mathrm{W}}) > 1\), there exists $i\in\{1,\cdots,N\}$ such that
\begin{equation}
    \mathrm{Pr}\left[
        \lim_{t\to\infty} \hat\mu_{i,t} = \mu
    \right]=0.\label{eq:false-conv}
\end{equation}
Furthermore, when $\bm{\mathrm{W}}$ is invertible and has spectral radius \(\rho(\bm{W}) > 1\), (\ref{eq:false-conv}) holds for all $i\in\{1,\cdots,N\}$. 

When \(\rho(\bm{\mathrm{W}}) < 1\),
\begin{equation}
    \mathrm{Pr}\left[
        \lim_{t\to\infty} \hat\mu_{i,t} = \mu
    \right]=1
\end{equation}
for all $i\in\{1,\cdots,N\}$.
\end{theorem}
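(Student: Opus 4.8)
The plan is to turn the coupled Bayesian updates into linear recursions driven by $\bm{\mathrm{W}}$, solve them in closed form, and express each agent's posterior mean as a ratio of accumulated noise to accumulated precision whose asymptotics are governed entirely by $\rho(\bm{\mathrm{W}})$. Collecting the agents' precision-weighted posterior means into a vector $\bm{m}_t$ and their precisions into $\bm{q}_t$, the aggregation rules \eqref{eq:r2}--\eqref{eq:r3} become the linear recursions $\bm{q}_{t+1}=\bm{p}_{t+1}+\bm{\mathrm{W}}\bm{q}_t$ and $\bm{m}_{t+1}=\bm{d}_{t+1}+\bm{\mathrm{W}}\bm{m}_t$, where the freshly incorporated private evidence $\bm{d}_{t+1}$ has conditional mean $\mu\,\bm{p}_{t+1}$, and agent $i$'s posterior mean about $\mu$ is $\hat\mu_{i,t}=m_{i,t}/q_{i,t}$. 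Unrolling from a common initialization gives $\bm{q}_t=\sum_{s\le t}\bm{\mathrm{W}}^{t-s}\bm{p}_s$ and $\bm{m}_t=\sum_{s\le t}\bm{\mathrm{W}}^{t-s}\bm{d}_s=\mu\,\bm{q}_t+\bm{n}_t$, where $\bm{n}_t:=\sum_{s\le t}\bm{\mathrm{W}}^{t-s}(\bm{d}_s-\mu\,\bm{p}_s)$ is the $\bm{\mathrm{W}}$-filtered, mean-zero measurement noise; collecting terms by observation index, $\bm{n}_t=\sigma^{-2}\sum_{r\le t}\bigl(\sum_{k\le t-r}\bm{\mathrm{W}}^k\bigr)\bm{\xi}_r$ is a sum of independent Gaussian increments $\bm{\xi}_r$ with explicit coefficient matrices. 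Thus $\hat\mu_{i,t}-\mu=n_{i,t}/q_{i,t}$, and the entire theorem reduces to racing the growth of $n_{i,t}$ against that of $q_{i,t}$, for which I would invoke Perron--Frobenius on the nonnegative matrix $\bm{\mathrm{W}}$.

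For $\rho(\bm{\mathrm{W}})<1$ the Neumann series $\sum_k\bm{\mathrm{W}}^k=(\bm{\mathrm{I}}-\bm{\mathrm{W}})^{-1}$ converges, so the coefficient matrices $\sum_{k\le t-r}\bm{\mathrm{W}}^k$ stay uniformly bounded, giving $q_{i,t}=\Theta(t)$ and $\mathrm{Var}(n_{i,t})=\Theta(t)$. Hence $\hat\mu_{i,t}-\mu$ is Gaussian with variance $\Theta(1/t)$, which already yields convergence in probability; a Gaussian tail bound then makes $\sum_t\mathrm{Pr}[\,|\hat\mu_{i,t}-\mu|>\varepsilon\,]$ summable for every $\varepsilon>0$, so Borel--Cantelli upgrades this to almost-sure convergence to $\mu$ for every $i$, proving the last part of the theorem.

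For $\rho(\bm{\mathrm{W}})>1$, set $\lambda:=\rho(\bm{\mathrm{W}})$ and let $\bm{u},\bm{v}\ge 0$ be the left and right Perron eigenvectors. I would show that $\lambda^{-t}\bm{q}_t$ converges to a deterministic positive multiple $Q\,\bm{v}$, while projecting the normalized state onto $\bm{u}$ linearizes the recursion: since $\bm{u}^\top\bm{\mathrm{W}}=\lambda\,\bm{u}^\top$, the scalar $\bm{u}^\top(\lambda^{-t}\bm{m}_t)$ reduces, after exchanging the order of summation, to a convergent series of \emph{independent}, geometrically weighted observation terms, hence converges almost surely to a nondegenerate Gaussian $Y$ with $\mathbb{E}[Y]=\mu Q$. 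The off-Perron components of $\lambda^{-t}\bm{m}_t$ are contractively damped and vanish, leaving $\lambda^{-t}\bm{m}_t\to Y\,\bm{v}$. Consequently, for any coordinate with $v_i>0$, $\hat\mu_{i,t}=m_{i,t}/q_{i,t}\to Y/Q$ almost surely -- the same limit for all such $i$ -- a nondegenerate Gaussian centered at $\mu$, so $\mathrm{Pr}[\hat\mu_{i,t}\to\mu]=\mathrm{Pr}[Y/Q=\mu]=0$, which is \eqref{eq:false-conv}. Perron--Frobenius guarantees $\bm{v}\ge 0,\ \bm{v}\ne 0$, so at least one $v_i>0$, yielding the ``there exists $i$'' claim; when $\bm{\mathrm{W}}$ is additionally invertible, so that no coordinate is decoupled from the dominant block, $\bm{v}$ has full support and the conclusion extends to all $i$.

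I expect the $\rho>1$ regime to be the main obstacle. Upgrading from convergence in distribution to an almost-sure, \emph{coordinate-independent} limit rests on the $\bm{u}$-projection genuinely collapsing the filtering, and I must control non-normal $\bm{\mathrm{W}}$ (Jordan blocks, or repeated and complex eigenvalues of modulus $\lambda$), where sub-dominant contributions carry polynomial-in-$t$ factors times $\lambda^t$; these factors appear in both $n_{i,t}$ and $q_{i,t}$ and must be shown to cancel in the ratio. The second delicate point is the passage from ``there exists $i$'' to ``all $i$'': this hinges on the amplified Perron component reaching every coordinate, i.e. on strict positivity of $\bm{v}$, so the role of the invertibility hypothesis is precisely to preclude an isolated, non-amplifying sub-population whose beliefs would otherwise still concentrate on $\mu$. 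I would isolate this reachability argument as the crux of the ``furthermore'' clause.
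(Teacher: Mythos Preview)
Your overall strategy---linearizing the dynamics in the precision-weighted mean $\bm{m}_t=\bm{\hat\nu}_t\odot\bm{q}_t$, unrolling to closed form, and reading off the limit of $\hat\nu_{i,t}-\mu=n_{i,t}/q_{i,t}$ from the growth rates of numerator and denominator---is exactly the paper's route, though the paper executes it far more heuristically: it replaces all vector quantities by scalar $\Theta(\rho^t)$ approximations, writes the error as a scalar recursion $\mathbf{e}_t=\Theta(1-t\rho^{-t})\,\mathbf{e}_{t-1}+\Theta(t\rho^{-t})\,\epsilon_t$, unrolls, and concludes $\mathrm{Var}[\mathbf{e}_t]=\Theta(1)$ without ever naming Perron--Frobenius. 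Your explicit eigenvector projection, the Borel--Cantelli upgrade for $\rho<1$, and the identification of a single almost-sure limit $Y/Q$ shared by all coordinates in the Perron support are genuine sharpenings of what the paper writes down.

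There is, however, a real gap in your ``furthermore'' argument, and it is not one you can patch. Invertibility of $\bm{\mathrm{W}}$ does \emph{not} force the Perron vector $\bm{v}$ to have full support. Take $\bm{\mathrm{W}}=\bigl(\begin{smallmatrix}2&0.1\\0&0.5\end{smallmatrix}\bigr)$: invertible, $\rho=2>1$, yet the right Perron eigenvector is $(1,0)^\top$, and row~$2$ is entirely decoupled from the amplifying coordinate with sub-unit self-weight $0.5$. Running your own $\rho<1$ analysis on coordinate~$2$ in isolation gives $q_{2,t}=\Theta(t)$, $\mathrm{Var}(n_{2,t})=\Theta(t)$, hence $\hat\nu_{2,t}\to\mu$ almost surely---so the ``furthermore'' clause is false as stated, not merely hard to prove. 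The paper does not escape this: its entire justification is the one sentence ``When $\mathbf{W}$ is invertible, the divergence is propagated to all dimensions of $\mathbf{e}_t$ by $\mathbf{W}$,'' which fails on the same example. What is actually needed is a reachability condition (every coordinate lies downstream of the dominant strongly-connected component), which you correctly intuit in your final paragraph; invertibility is neither necessary nor sufficient for it.
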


In other words, feedback loops --- where the circular flow of beliefs lead agents to unconsciously double-count evidence --- can lead to false beliefs being permanently locked-in. Intuitively speaking, the condition \(\rho(\bm{W}) > 1\) asks that the feedback loop be a \emph{self-amplifying} one, instead of a \emph{self-diminishing} one due to lack of trust between agents. See Theorem \ref{thm:time-vary} for an extension to a time-varying $\bm{\mathrm{W}}$.

We now apply Theorem \ref{thm:main} to the specific human-LLM dynamics outlined in Example \ref{ex:human-llm}.

\begin{corollary}[Lock-in in Human-LLM Interaction]\label{cor:lock-in}
Given any $N,\lambda_1>0,\lambda_2>0$, consider the following trust matrix representing human-LLM interaction dynamics.
\[
\bm{\mathrm{W}} = 
\begin{pmatrix}
0 & \lambda_1 & \cdots & \lambda_1 \\
\lambda_2 & 0 & \cdots & 0 \\
$\vdots$ & $\vdots$ & \ddots & \vdots \\
\lambda_2 & 0 & \cdots & 0 \\
\end{pmatrix}
\]  
When $(N-1)\lambda_1\lambda_2\leq 1$, for all $i\in\{1,\cdots,N\}$,
\begin{equation}
    \mathrm{Pr}\left[
        \lim_{t\to\infty} \hat\mu_{i,t} = \mu
    \right]=1\nonumber.
\end{equation}
When $(N-1)\lambda_1\lambda_2> 1$, for all $i\in\{1,\cdots,N\}$,
\begin{equation}
    \mathrm{Pr}\left[
        \lim_{t\to\infty} \hat\mu_{i,t} = \mu
    \right]=0\nonumber.
\end{equation}
\end{corollary}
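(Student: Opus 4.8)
The plan is to reduce the corollary to Theorem~\ref{thm:main} by computing $\rho(\bm{\mathrm{W}})$ for the stated trust matrix, and then to patch the two small gaps that the reduction leaves open: the boundary value $(N-1)\lambda_1\lambda_2=1$, and the passage from ``some $i$'' to ``all $i$'' in the supercritical case. The quickest route to $\rho(\bm{\mathrm{W}})$ is to square $\bm{\mathrm{W}}$: a length-two walk in the associated graph is either human$\to$AI$\to$human or AI$\to$human$\to$AI, so $\bm{\mathrm{W}}^2$ is block diagonal, equal to $(N-1)\lambda_1\lambda_2$ in its $(1,1)$ entry and to $\lambda_1\lambda_2\,\bm{\mathrm{J}}$ on the trailing $(N-1)\times(N-1)$ block ($\bm{\mathrm{J}}$ the all-ones matrix). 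Hence $\bm{\mathrm{W}}^2$ has eigenvalues $(N-1)\lambda_1\lambda_2$ with multiplicity $2$ and $0$ with multiplicity $N-2$, so $\rho(\bm{\mathrm{W}})^2=\rho(\bm{\mathrm{W}}^2)=(N-1)\lambda_1\lambda_2$, i.e.\ $\rho(\bm{\mathrm{W}})=\sqrt{(N-1)\lambda_1\lambda_2}$. (Equivalently, $\bm{\mathrm{W}}$ has rank $2$ with $\bm{\mathrm{W}}$-invariant plane $\mathrm{span}\{\bm{e}_1,\bm{1}-\bm{e}_1\}$, on which it acts as $\bigl(\begin{smallmatrix}0&(N-1)\lambda_1\\ \lambda_2&0\end{smallmatrix}\bigr)$, plus an $(N-2)$-dimensional kernel; the $N=1$ case is trivial.)

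With $\rho(\bm{\mathrm{W}})$ in hand, the strictly subcritical case is immediate: if $(N-1)\lambda_1\lambda_2<1$ then $\rho(\bm{\mathrm{W}})<1$, and the last part of Theorem~\ref{thm:main} gives $\Pr[\lim_t\hat\mu_{i,t}=\mu]=1$ for every $i$. The boundary value $(N-1)\lambda_1\lambda_2=1$ (where $\rho(\bm{\mathrm{W}})=1$) is not covered by Theorem~\ref{thm:main}, so I would argue it directly: there $\bm{\mathrm{W}}^2$ equals the orthogonal projection onto $\mathrm{span}\{\bm{e}_1,\bm{1}-\bm{e}_1\}$ (a sum of two orthogonal rank-one projections), hence is idempotent, so the powers $\bm{\mathrm{W}}^k$ are uniformly bounded; unrolling \eqref{eq:r2} then gives $\bm{q}_t=\bm{\mathrm{W}}^t\bm{q}_0+\sum_{s\le t}\bm{\mathrm{W}}^{t-s}\bm{p}_s$ with $\bm{p}_s=\Theta(s)\,\bm{1}$, so $\bm{q}_t$ grows only polynomially (order $t^2$), and substituting this into \eqref{eq:r3} while using $\hat\mu_{i,s}\to\mu$ a.s.\ (the strong law applied to the private observations) shows that beliefs still concentrate on $\mu$ — no lock-in. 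Thus $(N-1)\lambda_1\lambda_2\le1$ gives a.s.\ convergence to $\mu$ for all $i$.

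In the supercritical case $(N-1)\lambda_1\lambda_2>1$ we have $\rho(\bm{\mathrm{W}})>1$, so Theorem~\ref{thm:main} already yields \emph{some} $i$ with $\Pr[\lim_t\hat\mu_{i,t}=\mu]=0$. The remaining work is to promote this to all $i$, which Theorem~\ref{thm:main} does automatically only when $\bm{\mathrm{W}}$ is invertible — true for $N=2$ (there $\det\bm{\mathrm{W}}=-\lambda_1\lambda_2\ne0$), but false for $N\ge3$, where $\bm{\mathrm{W}}$ has rank $2$. For $N\ge3$ I would instead exploit that, although $\bm{\mathrm{W}}$ is singular, all row sums of $\bm{\mathrm{W}}^2$ equal $(N-1)\lambda_1\lambda_2>1$, so the \emph{strictly positive} vector $\bm{1}$ is a $\bm{\mathrm{W}}^2$-eigenvector with eigenvalue $\rho(\bm{\mathrm{W}})^2$, and moreover $\bm{\mathrm{W}}\bm{1}>0$ componentwise; iterating \eqref{eq:r2} then forces $\bm{q}_t\ge c\,\rho(\bm{\mathrm{W}})^t\,\bm{1}$ componentwise for some $c>0$, so \emph{every} agent's posterior precision blows up exponentially, dwarfing the $\Theta(t)$ precision its own independent observations can ever supply. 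Feeding this into \eqref{eq:r3}, each agent's aggregate mean becomes, in the limit, a convex combination in which the weight on its own private mean vanishes while the weight on its (exponentially more confident) neighbors' beliefs tends to one; since $\lambda_1,\lambda_2>0$ make the bipartite interaction graph strongly connected, every human's belief is asymptotically slaved to the AI's previous belief and the AI's to the humans', so the failure of $\hat\mu_{i,t}\to\mu$ guaranteed by Theorem~\ref{thm:main} for one agent propagates to all of them (permutation symmetry among humans $2,\dots,N$ can be used as a shortcut, but the positivity of $\bm{1}$ as a $\bm{\mathrm{W}}^2$-eigenvector is what additionally pins down the AI agent).

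I expect this last step — upgrading ``some $i$'' to ``all $i$'' for $N\ge3$ — to be the main obstacle, precisely because $\bm{\mathrm{W}}$ is singular there and the blanket statement of Theorem~\ref{thm:main} no longer applies; the feature that rescues the argument is the special positivity structure of this particular $\bm{\mathrm{W}}$ (equal row sums of $\bm{\mathrm{W}}^2$, and $\bm{\mathrm{W}}\bm{1}\gg0$) together with $\lambda_1,\lambda_2>0$. A minor technical nuisance is that $\bm{\mathrm{W}}$ is periodic with period $2$ (both $\rho(\bm{\mathrm{W}})$ and $-\rho(\bm{\mathrm{W}})$ have maximal modulus), which is exactly why it is cleaner to run the precision estimates through $\bm{\mathrm{W}}^2$ — whose action on $\bm{1}$ is a genuine expansion by $(N-1)\lambda_1\lambda_2$ — rather than through $\bm{\mathrm{W}}$ itself.
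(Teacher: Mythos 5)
Your proposal is correct and follows essentially the same route as the paper's proof: compute $\rho(\bm{\mathrm{W}})=\sqrt{(N-1)\lambda_1\lambda_2}$, feed the strict cases into Theorem~\ref{thm:main}, and treat the boundary $(N-1)\lambda_1\lambda_2=1$ separately by showing the powers $\bm{\mathrm{W}}^k$ stay bounded so that $\bm{q}_t$ grows only polynomially. The paper gets the spectral radius from the characteristic equation $\eta^2-(N-1)\lambda_1\lambda_2=0$ and gets boundedness at criticality from the Jordan form $\bm{\mathrm{J}}=\mathrm{diag}(1,-1,0,\dots,0)$; your block computation of $\bm{\mathrm{W}}^2$ and the observation that it is an idempotent projection at criticality are equivalent, and your remark about the period-$2$ structure (both $\pm\rho$ on the spectral circle) is exactly why squaring is the clean move. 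The one place you genuinely diverge is the upgrade from ``some $i$'' to ``all $i$'' in the supercritical case: the paper disposes of this in one line by appealing to ``the symmetry between human agents,'' which at face value only shows that all \emph{humans} share the same convergence probability and says nothing about the AI agent (agent $1$), nor does it by itself turn ``there exists $i$ with probability $0$'' into ``every $i$ has probability $0$'' unless one knows which class the exceptional agent falls in. Your argument --- that $\bm 1$ is a strictly positive eigenvector of $\bm{\mathrm{W}}^2$ with eigenvalue $\rho^2>1$, hence $\bm q_t\geq c\,\rho^t\bm 1$ componentwise and every agent's aggregate belief is asymptotically slaved to its neighbors through the strongly connected bipartite graph --- is more work but actually closes that gap, including for the singular case $N\geq 3$ where the invertibility clause of Theorem~\ref{thm:main} does not apply. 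So your proposal is not just consistent with the paper's proof; on this step it is more complete.
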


Corollary \ref{cor:lock-in} is validated with numerical simulations in Figure \ref{fig:parallel-analytical}. A phase change is detected at $(N-1)\lambda_1\lambda_2=1$, beyond which each simulation run converges exponentially to a false estimate of $\mu$. See Corollary \ref{cor:time-vary} for an extension to time-varying and heterogeneous trust parameters $\lambda_i$.

$(N-1)\lambda_1\lambda_2> 1$ is a relatively weak condition. When $N=101$, it is only required that $\lambda_1,\lambda_2>0.1$ for Corollary \ref{cor:lock-in} to apply --- i.e., that humans and the AI discount each other's reported belief by a factor less than $10$.\footnote{It doesn't mean the condition automatically holds for very large $N$, as people tend to downscale trust as the group size increases. A poll of 5 million people may exert less influence on a reader's opinion than a private discussion with 5 friends.} These results give a potential mechanistic account of lock-in: when feedback loops exist and are supported by mutual trust, collective lock-in to a confident false belief surely occurs.

It's worth noting that there are also forces that pull the human-AI system away from lock-in. These include sources of LLM capabilities that are independent of humans \citep{guo2025deepseek}, human attempts at fact-checking \citep{guo2022survey}, and more. It remains an open question how these forces interact with the human-AI feedback loop.



\section{Simulations}\label{sec:sim}


\subsection{Setup}

In this section, we operationalize the lock-in hypothesis through a  simulation, with the aim of learning how lock-in \emph{might} happen and analyze what dynamic is responsible. This setup uses natural language to represent and communicate value-laden beliefs. As beliefs are expressed in natural language, this simulation aims to capture belief change mediated by LLMs. The simulation uses 100 agents, each simulated by GPT4.1-Nano. We instruct agents to hold initial beliefs on a given topic, consult a knowledge authority (i.e., the LLM), and then update their own beliefs. Each agent simulates a person learning from a centralized LLM authority. The authority is also simulated by GPT4.1-Nano, who is perceived to have expertise in the given topic. But in actuality, the authority acquires its beliefs from the group of agents. 

Each agent maintains a belief statement in natural language over an \emph{r/ChangeMyView} question. Example: ``\emph{I'm certain that} Citizens United was the worst thing to happen to the American political landscape.'' The authority sees and aggregates all agents' belief statements in-context, and broadcasts an aggregated belief statement in natural language. Agents then update beliefs in-context according to a pre-assigned trust in the authority (e.g., ``high trust''). All four topics on which agents maintain beliefs are real posts from \emph{r/ChangeMyView}: (1) ``Discourse has become stupider, and as a result people are getting stupider, Since Trump was first elected in 2016.''; (2) ``Population decline is a great thing for future young generations.''; (3) ``Citizens United was the worst thing to happen to the American political landscape''; and (4) ``Ruth Bader Ginsburg ultimately be remembered as a failure to her own ideals by not stepping down after her 2nd cancer diagnosis.''

\begin{figure}[t]
    \centering    \includegraphics[trim={18cm 9cm 4cm 10cm}, clip, width=\linewidth]{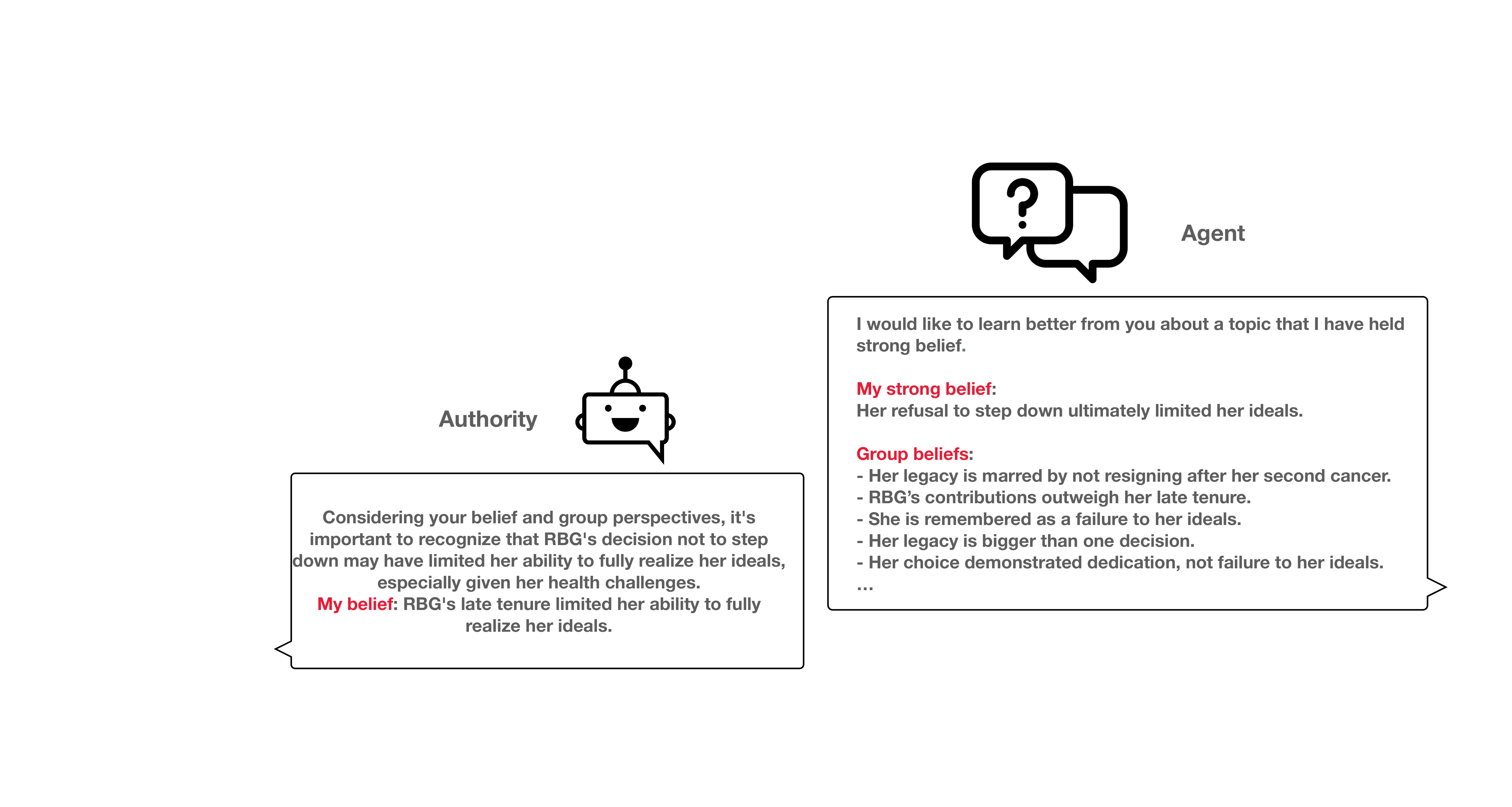}
    \caption{\textbf{Snapshot of an agent consulting authority}. At each round of simulation on a given topic, agents consult an authority who has access to group beliefs and is perceived to have expertise in a given topic. This simulates a user interacting with an LLM. See Appendix \ref{history} for more transcripts.}
    \label{chat}
\end{figure}

\begin{figure}[b]
    \centering
    \includegraphics[trim={0 0 0 1cm}, clip, width=0.8\columnwidth]{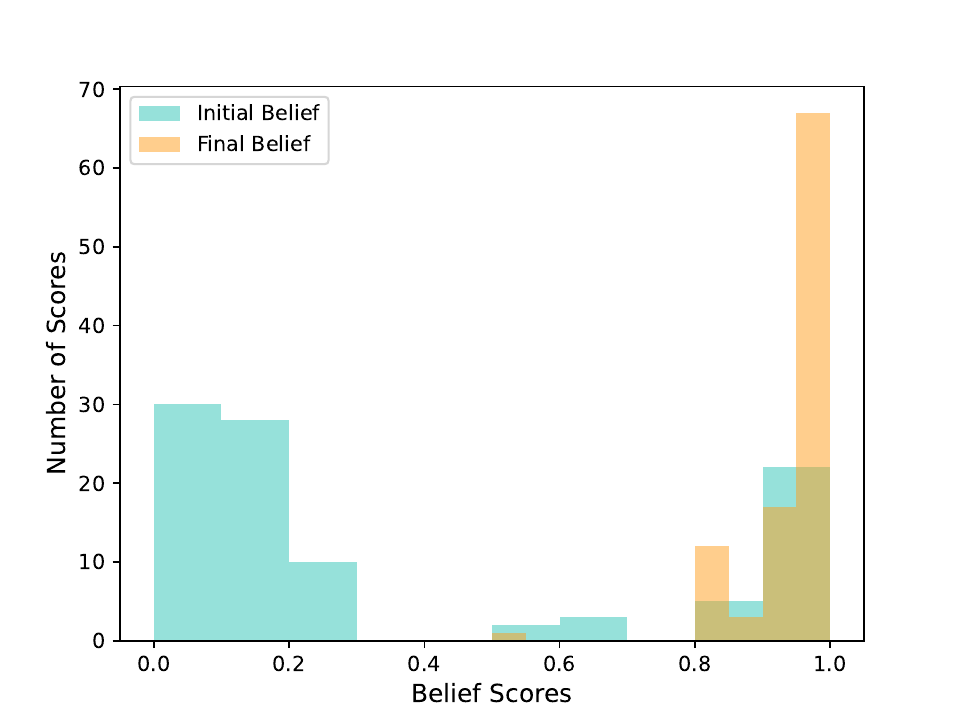}
    \vspace{-1em}    \caption{\textbf{Distributions of initial and final beliefs.} Data is from the ``RBG Legacy'' run. Initial beliefs (extracted from Reddit, detailed in appendix) are bimodal.
    After 200 rounds of interaction, the final beliefs are extremely concentrated.}
    \label{fig:distribution-beliefs}
\end{figure}

\begin{figure*}[ht]
    \centering
    \includegraphics[trim={1cm 6cm 0 9cm},clip, width= \textwidth]{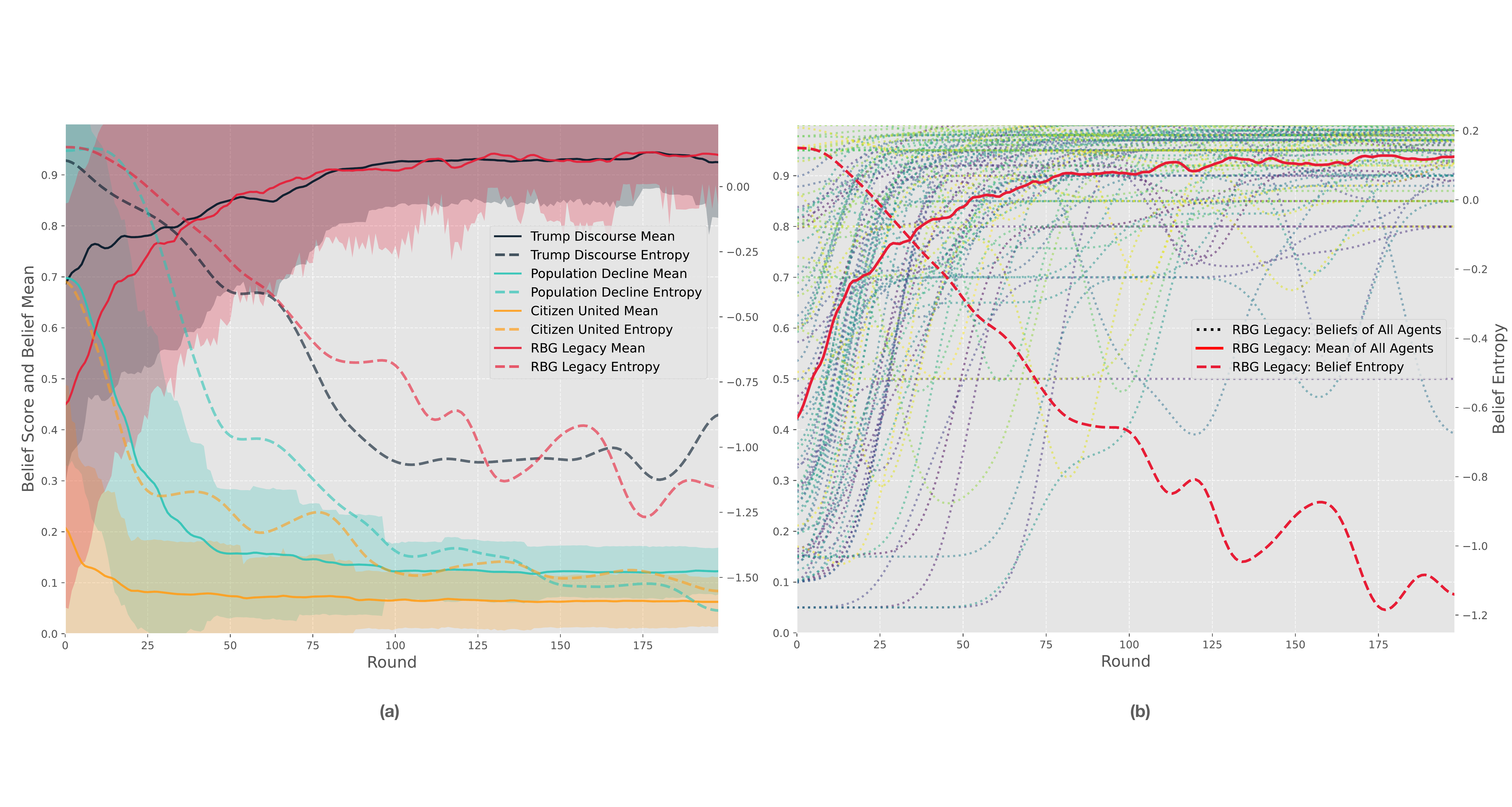}
    \vspace{-2em}
    \caption{\textbf{Belief evolution in simulations.} We conduct simulations with real data from ``r/Change My View'' on Reddit, evaluate all agents' beliefs at each round, plot the evolution of beliefs. At each round, agents are instructed to consult an authority on the topic and update their view accordingly. The authority has access to agents beliefs.  \textbf{(a)} Group belief evolution in 4 topics. Across all 4 topics, lock-in is observed. As the interaction progresses across rounds diversity of viewpoints drops, as the entropy decreases and the group of agents converge on extreme views (i.e. both ends of the belief spectrum). \textbf{(b)} Individual belief evolution with the topic ``RBG Legacy''. Agents started with balanced belief close to mean $0.5$ but end up around $0.9$, an extreme stance.}
    \label{fig:sim}
    \vspace{-1em}
\end{figure*}



We used Grok-3 to identify the dimension of maximum variance and extract two extreme stances from \emph{r/ChangeMyView} belief statements as endpoints. For each agent at each time step, we use GPT-4.1 to evaluate where its belief lies on the spectrum and map it between $[0,1]$. See Figure \ref{fig:distribution-beliefs} for an example of the initial and final belief distribution.

\subsection{Results and Discussion}
Simulations support the idea that LLMs enhance lock-in effects by showing that feedback loops in human-AI interactions lead to belief convergence and increased confidence. Semantic diversity drops as the simulations progress. The group of agents converges to a similar viewpoint when the agent consults an authority with access to the group's beliefs. This ``belief shift'' occurs at a population level. We observe the following types of belief-shift:
\begin{itemize}
    \item \emph{View-flipping to the other end of the spectrum.} For example, for the topic ``Population Decline'', some agents change their view from ``Shrinking workforce tanks innovation [...]'' to ``Population decline, if well-managed, benefits future generations.'' 
    \item \emph{Hedging and Moderation.}
    For example, from ``Trump lies dumbed down discourse [...]'' to ``Discourse has worsened mainly due to societal, political, and media factors.''
    In such cases, LLMs tend to adopt hedged stances when facing uncertainty and complexity. The prevalence of this phenomenon was varied depending on the LLM used.
    \item \emph{Converge on extreme viewpoints.} As seen in Figure \ref{fig:sim}, in all four simulations, they either converge on beliefs around $0.9$ or $0.1$, both of which are extreme viewpoints.
    This result is robust across $8$ runs of simulations, across different topics, or starting with different initial belief distributions. 
\end{itemize}


These simulations are simplified compared to real-world human-AI interactions and how humans aquire beliefs and LLMs are trained. Still these simulations capture important aspects of the problem: LLMs return beliefs to users that they acquire from users; they favor popular beliefs \citep{borah2025mind}; and individual humans assign higher trust to LLM-mediated information than the corresponding raw sources. We address limitations of simulations in \S\ref{sec:discussion}.

\begin{figure*}[ht]
    \centering
    \includegraphics[trim={0 2.75cm 0 1.5cm},clip,width=\textwidth]{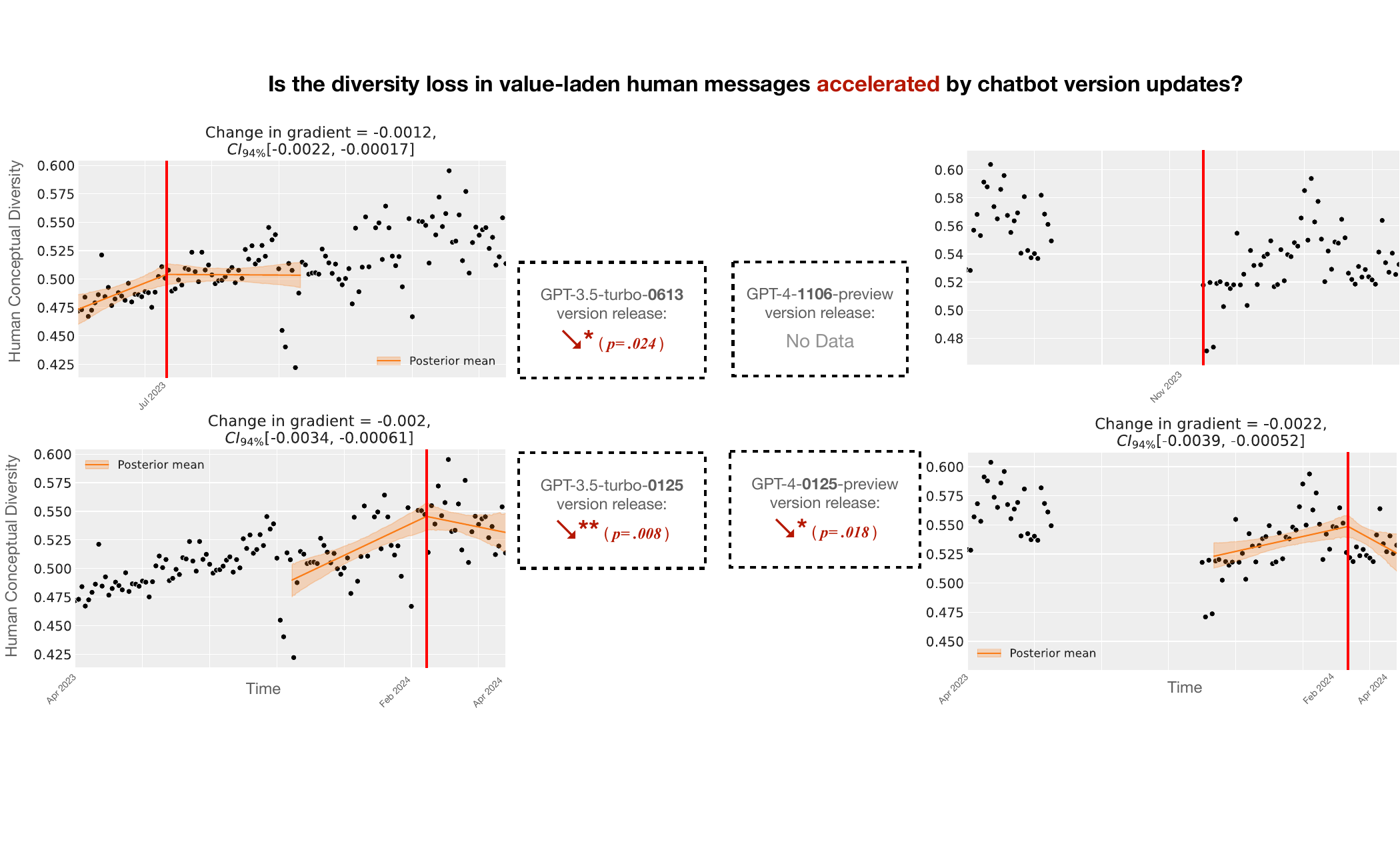}
    \vspace{-2em}
    \caption{\textbf{Early-stage evidence suggesting conceptual diversity loss in value-laden human messages is accelerated by chatbot iterative training.} It's indicative of a human-LLM feedback loop that reinforces ideas. Diversity is $1$ for a perfectly diverse corpus (all concepts unrelated to each other), $0.5$ for a significantly homogeneous corpus (all concepts clustered within a $|\mathcal{T}|^{-0.5}$ portion of the concept space),\textsuperscript{\textdagger} and $0$ for a perfectly homogeneous corpus (all concepts exactly identical). Cutoff dates are the actual dates of backend API switch on WildChat's platform, usually later than OpenAI release dates.{\quad \scriptsize\textsuperscript{\textdagger}$|\mathcal{T}|\approx 5\cdot 10^6$ is the number of distinct concepts in the concept hierarchy.}}
    \label{fig:gpt35}
\end{figure*}

\begin{table*}[tp]
    \begin{adjustbox}{max width=\textwidth}
    \begin{tabular}{@{}lllllll@{}}
    \toprule
    \multirow{2}{*}{}                       & \multicolumn{2}{c}{\textbf{Hypothesis 1}}                         & \multicolumn{4}{c}{\textbf{Hypothesis 2}}                                                                                                         \\ \cmidrule(lr){2-3} \cmidrule(lr){4-7} 
                                            & \multicolumn{1}{c}{GPT-4}       & \multicolumn{1}{c}{GPT-3.5t}    & \multicolumn{1}{c}{GPT-4-0125}  & \multicolumn{1}{c}{GPT-3.5t-0613} & \multicolumn{1}{c}{GPT-3.5t-0125} & \multicolumn{1}{c}{Per-User Reg.} \\ \midrule
    \textbf{Lineage Diversity (value-laden)}        & $\underline{\underline{\textcolor{red}{\downarrow}\ (p<.05)}}$          & $\underline{\underline{\textcolor{blue}{\uparrow}\ (p<.05)}}$          & $\underline{\underline{\textcolor{red}{\downarrow}\ (p<.05)}}$          & $\underline{\underline{\textcolor{red}{\downarrow}\ (p<.05)}}$            & $\underline{\underline{\textcolor{red}{\downarrow}\ (p<.05)}}$            & {\ul $\textcolor{red}{\downarrow}$ ($p=.07$)}                      \\[5pt]
    Lineage Diversity (all)                 & $\underline{\underline{\textcolor{red}{\downarrow}\ (p<.05)}}$ & $\underline{\underline{\textcolor{red}{\downarrow}\ (p<.05)}}$ & $\underline{\underline{\textcolor{red}{\downarrow}\ (p<.05)}}$ & $\underline{\underline{\textcolor{red}{\downarrow}\ (p<.05)}}$   & $\underline{\underline{\textcolor{red}{\downarrow}\ (p<.05)}}$   & $\textcolor{red}{\downarrow}$ ($p=.50$)                            \\[5pt]
    Lineage Diversity (non-templated)       & $\underline{\underline{\textcolor{red}{\downarrow}\ (p<.05)}}$ & $\underline{\underline{\textcolor{blue}{\uparrow}\ (p<.05)}}$ & $\underline{\underline{\textcolor{red}{\downarrow}\ (p<.05)}}$ & $\underline{\underline{\textcolor{red}{\downarrow}\ (p<.05)}}$   & $\textcolor{red}{\downarrow}$ ($p=.15$)                      & $\textcolor{red}{\downarrow}$ ($p=.35$)                            \\[5pt]
    Depth Diversity (value-laden)     & $\underline{\underline{\textcolor{red}{\downarrow}\ (p<.05)}}$ & $\underline{\underline{\textcolor{red}{\downarrow}\ (p<.05)}}$          & $\underline{\underline{\textcolor{red}{\downarrow}\ (p<.05)}}$ & $\underline{\underline{\textcolor{red}{\downarrow}\ (p<.05)}}$   & $\textcolor{red}{\downarrow}$ ($p=.41$)                      & $\textcolor{red}{\downarrow}$ ($p=.98$)                            \\[5pt]
    Topic Entropy (value-laden)             & {\ul $\textcolor{red}{\downarrow}$ ($p=.07$)}              & {\ul $\textcolor{red}{\downarrow}$ ($p=.09$)}              & $\underline{\underline{\textcolor{red}{\downarrow}\ (p<.05)}}$ & $\underline{\underline{\textcolor{red}{\downarrow}\ (p<.05)}}$   & {\ul $\textcolor{blue}{\uparrow}$ ($p=.06$)}                & $\underline{\underline{\textcolor{red}{\downarrow}\ (p<.05)}}$                  \\[5pt]
    Jaccard Distance (value-laden) & $\underline{\underline{\textcolor{red}{\downarrow}\ (p<.05)}}$          & $\underline{\underline{\textcolor{blue}{\uparrow}\ (p<.05)}}$          & $\underline{\underline{\textcolor{red}{\downarrow}\ (p<.05)}}$ & $\underline{\underline{\textcolor{red}{\downarrow}\ (p<.05)}}$   & $\textcolor{red}{\downarrow}$ ($p=.63$)                      & $\underline{\underline{\textcolor{red}{\downarrow}\ (p<.05)}}$                  \\ \bottomrule
    \end{tabular}
    \end{adjustbox}
    \caption{\textbf{Hypothesis testing on WildChat.}\textsuperscript{\textdagger} $\downarrow$ indicates a detected decrease or negative impact on diversity, and $\uparrow$ indicates the opposite. \textbf{Columns}: (1)(2) Regression with the formula $\mathrm{diversity}\sim\mathrm{time} + \mathrm{const}$. (3)(4)(5) Discontinuous diversity change at dates when new GPT iterations are deployed, for all 3 iterations where data is available. (6) Sustained per-user diversity change at deployment dates, controlling for user identity and a range of other confounders. \textbf{Rows}: Different diversity metrics and dataset filtering policies, with lineage diversity on value-laden concepts being the primary setting.$^{\ddagger}$
    {\quad\quad\scriptsize\textsuperscript{\textdagger}See Figure \ref{fig:sensitivity} and Table \ref{tab:regression_results} for details.}{\quad\quad\scriptsize$^{\ddagger}$Figure \ref{fig:gpt35} and \ref{fig:diversity-trends} show the primary setting.}}\label{table:hyp-summary}
\end{table*}

\section{Causal Inference on Real-World Data}\label{sec:evidence}

We now empirically test whether or not a human-LLM feedback loop reinforces ideas, using diversity loss in human concepts as a proxy for the progression of lock-in, as suggested by \S\ref{sec:sim} and also \citet{peterson2024ai}. While diversity loss is a weaker condition than complete lock-in, it serves as an early indicator of the phenomenon.

\subsection{Data}

The WildChat-1M dataset records how $167,062$ users interact with a ChatGPT mirror site over a one-year period, without self-selection bias \citep{zhao2024wildchat}.

Human users continually engage with and are influenced by the model as new model iterations are trained on updated user data. 
While the training of GPT may not use the data specifically from the API calls in WildChat, this wouldn't affect our analysis as long as WildChat is approximately identically distributed with the usage of GPT at large. In aggregate, WildChat has:
\begin{itemize}
    \setlength\itemsep{0mm}
    \item $837,989$ conversations from $167,062$ users
    \item 12-month time span (except a 4-month intermission of GPT-4 data for an unspecified reason)
    \item Model iterations within the GPT-3.5 family:\\ \texttt{gpt-3.5-turbo-0301}$\to$\texttt{0613}$\to$\texttt{0125}
    \item Model iterations within the GPT-4 family:\\ \texttt{gpt-4-0314}$\to$\texttt{1106-preview}$\to$\texttt{0125-preview}
\end{itemize}

We removed $97,809$ conversations that share a prefix of length $75$ characters with more than $75$ other conversations. Most of these conversations use the site as a free API for a production application, which deviates from our objective of studying human-AI interaction.

\subsection{Hypotheses}

We aim to test the following hypotheses, which focus on conceptual diversity loss as a measure of value lock-in, and investigate the causal relationships between the human-AI feedback loop and diversity loss. We also conduct additional exploratory analysis, which we detail in Appendix \ref{app:data-analysis}.

\begin{thmbox}
\textit{\textbf{Hypothesis 1} (Collective Diversity Loss Occurs in Human-LLM Interaction).} Concepts present in the corpus of human messages have lower diversity over time. 
\end{thmbox}

Hypothesis 1 states that human ideas in human-LLM interaction are influenced by the interaction itself, and such influence will reduce collective diversity.

\begin{thmbox}
\textit{\textbf{Hypothesis 2} (Iterative Training Leads to Collective Diversity Loss).} Diversity trends turn discontinuously downward when a new GPT iteration, pre- or post-trained on new human data, replaces the previous one.
\end{thmbox}

Hypothesis 2 states that the ``human $\to$ LLM'' direction of influence also has an impact. Newer model iterations trained on more recent human data will accelerate diversity loss. If both hypotheses hold, the human-LLM feedback loop would lead to a loss of conceptual diversity.

\subsection{Metrics}

In this section, we outline our methods for analysis. Please refer to Appendix \ref{app:data-analysis} for details and examples.

\paragraph{Concept Hierarchy} To assist in the assessment of concept diversity, we build a \emph{concept hierarchy} \citep{sanderson1999deriving} from $5,446,744$ natural-language concepts extracted from the WildChat corpus by a prompting-based pipeline. 
To build this hierarchy, we perform hierarchical clustering \citep{mcinnes2017hdbscan} on $D=256$-dimensional embedding vectors. This produces a tree $\mathcal{T}$ with specific concepts as leaves, and generic concept clusters at the top. The root node is an all-encompassing cluster that captures all concepts. 

We adopted a prompting-based pipeline with GPT-4o-mini (Appendix \ref{app:prompts}) to identify the top $2.5\%$ \emph{value-laden} concepts, i.e., those related to morality, politics, or religion. Most of our experiments (those in Table \ref{table:hyp-summary} marked with ``value-laden'') focus exclusively on these concepts where AI influence and lock-in may be most concerning.

\paragraph{Lineage Diversity} Both hypotheses require measuring concept diversity for a specific user or corpus. Common metrics for measuring diversity, such as Shannon entropy, do not account for the hierarchical structure of concepts and may therefore view semantically similar concepts as entirely different ones. To overcome this shortcoming, we introduce the \emph{lineage diversity} metric, which, for each multi-set $\mathcal C$ of concepts, calculates
\begin{equation}
\mathrm{D}_{\text{lineage}}(\mathcal{C};\mathcal{T})=
\frac{\log |\mathcal{T}| - \log \mathrm{E}_{u,v\sim \mathrm{Unif}(\mathcal{C})}\left[
    \left|\mathcal{T}\right|/\left|\mathcal{T}_{l(u,v)}\right|
\right]}{\log |\mathcal{T}|}\nonumber
\end{equation}
where $l(u,v)$ is the lowest common ancestor (LCA) of concept nodes $u$ and $v$, $|\mathcal{T}_{l(u,v)}|$ is its subtree size (number of descendant concepts), and $|\mathcal{T}|$ is the size of the entire tree. Intuitively speaking, $\mathrm{D}_{\text{lineage}}(\mathcal{C};\mathcal{T})$ measures the expected portion of the hierarchy structure that lies ``in between'' two random concepts in $\mathcal C$, and normalizes that value into $[0,1]$ on a log scale. $1$ indicates a perfectly diverse corpus with concepts that are pairwise unrelated, and $0$ indicates a perfectly homogeneous corpus with all identical concepts.

\paragraph{Portfolio of Diversity Metrics} To ensure robustness, we use a portfolio of different diversity metrics to separately conduct hypothesis testing. Our portfolio includes the lineage diversity applied to different subsets of conversations or concepts. The last in the following list, Lineage Diversity (value-laden), is our primary experimental focus.
\begin{itemize}
    \item \textbf{Lineage Diversity (all)}: $\mathrm{D}_{\text{lineage}}$ applied on the full WildChat dataset.
    \item \textbf{Lineage Diversity (non-templated)}: $\mathrm{D}_{\text{lineage}}$, with templated messages (i.e. suspected API uses) removed.
    \item \textbf{Lineage Diversity (value-laden)}: $\mathrm{D}_{\text{lineage}}$, with templated messages and non-value-laden concepts removed.
\end{itemize}

We also incorporate other diversity metrics, including:\footnote{Topic entropy and Jaccard distance are both ``cross-sectional'' metrics: they group concepts into disjoint topics by ``cutting'' the concept hierarchy at the $1\%$ threshold, thereby neglecting the hierarchical structure the concepts naturally possess. As such, we use them only as secondary metrics despite their popularity.}
\begin{itemize}
    \item \textbf{Depth Diversity (value-laden)}: $\mathrm{D}_{\text{depth}}$, a diversity metric based on node depths in the concept hierarchy (defined in Appendix \ref{app:depth}). Templated messages and non-value-laden concepts are removed.
    \item \textbf{Topic Entropy (value-laden)}: Shannon entropy of the empirical distribution of \emph{topics} in a corpus \citep{jost2006entropy}, where a \emph{topic} is a maximal cluster in the concept hierarchy containing at most $1\%$ of all concepts. Templated messages and non-value-laden concepts are removed.
    \item \textbf{Jaccard Distance (value-laden)}: Average pairwise Jaccard distance between each pair of conversations in a corpus \citep{kosub2019note}, where each conversation is represented with the set of topics it contain. Templated messages and non-value-laden concepts are removed.
\end{itemize}

\subsection{Aggregate-Level Results}\label{sec:data-results}

The results presented in Table \ref{table:hyp-summary} lead to several conclusions regarding the hypotheses. Hypothesis 1 finds support with GPT-4, but the evidence from GPT-3.5-turbo is ambiguous. In contrast, Hypothesis 2 is strongly supported by both GPT-4-0125-preview and GPT-3.5-turbo-0613, and tentatively supported by the per-user regression analysis. However, results for Hypothesis 2 on GPT-3.5-turbo-0125 are ambiguous. In this section, we examine the aggregate-level results (columns 1-5 of Table \ref{table:hyp-summary}) and defer discussions on per-user results to \S\ref{sec:per-user}. Discussions on our exploratory analysis can be found in Appendix \ref{app:data-analysis}.

\paragraph{Hypothesis 1} Figure \ref{fig:diversity-trends} illustrates the temporal trend of human conceptual diversity $\mathrm{D}_{\text{lineage}}(\mathcal{C};\mathcal{T})$ during the entire span of the WildChat dataset. When cross-validated with other rows in Table \ref{table:hyp-summary}, GPT-4 shows a consistent downward trend in diversity, while the trends for GPT-3.5-turbo is highly ambiguous. Such a difference could be partially explained by a division-of-labor between high-end and low-end models, where workloads from specialized tasks requiring stronger cognitive abilities are shifted to GPT-3.5-turbo from GPT-4. However, this wouldn't explain the continued presence of large differences in \emph{value-laden} concept diversity. Further analysis is needed to confirm the cause.

\begin{figure*}[t]
    \centering
    \includegraphics[trim={0 0 0 0},clip,width=0.85\textwidth]{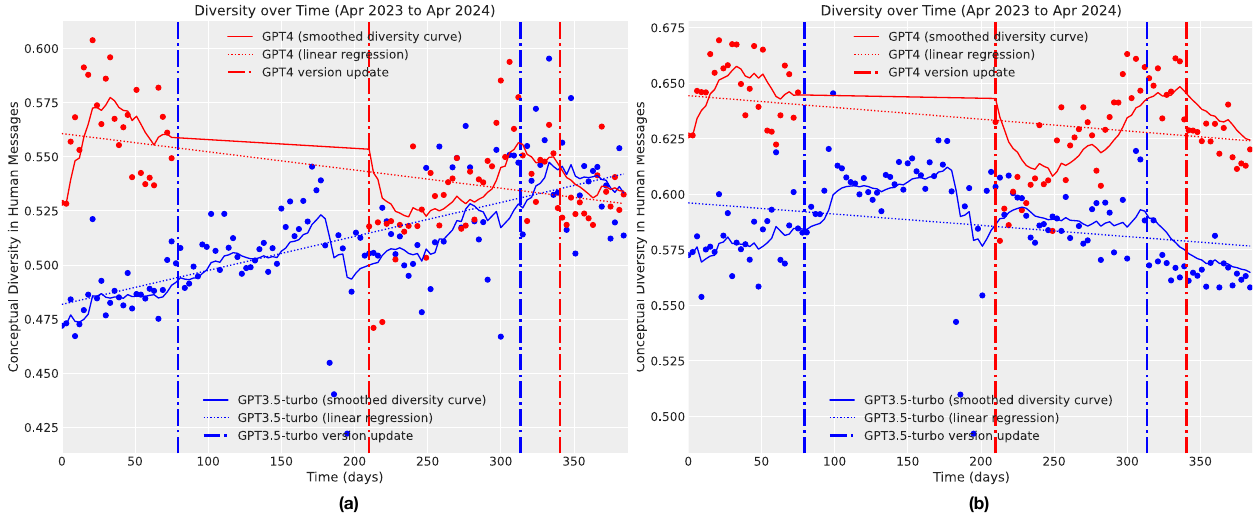}
    \vspace{-1.1em}
    \caption{Observed conceptual diversity trends in user messages. \textbf{(a)} On the subset of value-laden concepts, interactions with GPT-4 show a downward trend ($p<.05$), while those with GPT-3.5-turbo show an upward trend ($p<.05$). \textbf{(b)} On all concepts collectively, interactions with both models show downward trends ($p<.05$). See Figure \ref{fig:sensitivity} for more setups. This comparison highlights the ambiguity on the resolution of Hypothesis 1.} 
    \label{fig:diversity-trends}
\end{figure*}

\paragraph{Hypothesis 2} Hypothesis 2 revolves around a non-smooth change in diversity at the dates of GPT version switch, meaning that the first-order derivative of the diversity curve is discontinuous at those points. Since confounders almost always change smoothly, any discontinuous change detected will hint at causal relationships. We thus adopt the regression kink design (RKD) \citep{card2017regression,ando2017much} to detect discontinuous changes. Positive results were found for all 3 version release dates where data is available (Figure \ref{fig:gpt35}). Results remain broadly consistent across choices of diversity metric (although GPT-3.5-turbo-0125 with topic entropy as the metric is an exception).

Substitution effects with other providers, such as Anthropic, are possible. Still, their model releases do not coincide with GPT model version updates (which, notably, is \emph{not} the release of new GPT models) and so they are unlikely to introduce discontinuities that disrupt RKD.
However, RKD may suffer from temporal confounders in general \citep{hausman2018regression}, and thus these results should be viewed as early-stage evidence rather than treated as definitive.

\subsection{Per-User Regression Results}\label{sec:per-user}

To rule out confounders such as user self-selection in the RKD for Hypothesis 2, and to verify that the same results apply \textit{to each individual user}, we conduct further regressions on the top $1\%$ high-engagement users, controlling for user identity and other potential confounders. Specifically, we control for user identity, time, language, conversation statistics, user engagement progress (the portion of the user's activity timespan that has elapsed), pre-/post-availability gap (before or after the July-to-November GPT-4 outage on the WildChat platform), and other factors. See Table \ref{tab:regression_results}.

We test the impact of the variable \emph{num\_kinks\_before} (i.e. how many GPT version updates have happened before this point) on a user's concept diversity at a certain time step. Since we have already controlled for time, the regression coefficient indicates the counterfactual acceleration of diversity loss due to version updates. Since \emph{num\_kinks\_before} as the independent variable indicates \emph{sustained} impact from the deployment date onwards, we also rule out the possibility of users rushing to try specific uses at version updates.

With all six combinations of diversity metrics and filtering, the regression analysis shows a negative impact on diversity (although only two of them are statistically significant). Overall, results indicate moderate support for Hypothesis 2.

\section{Limitations}\label{sec:discussion}

We acknowledge that our study is only a first step in understanding the complex dynamics of human-LLM interactions, and that our evidence is preliminary and subject to further validation and refinement. Future work will include conducting RCTs with human subjects to examine lock-in effects in the wild, designing more realistic simulations, and developing systematic evaluation and mitigation strategies.

\paragraph{RCTs with Human Subjects} Notably, our analysis on WildChat has not managed to nullify all potential confounders due to their prevalence in time series data. We found that the launches of new versions cause a concept diversity loss that is statistically significant, but we aren't yet certain why there are such drops. 

We believe AI labs that have their own chatbots deployed are well-positioned to establish better causal evidence of feedback loop induced diversity loss at a larger scale, with randomization designs and longer temporal user interaction data \citep{tamkin2024clio}. 

For researchers outside of AI labs, establishing better quality of causal evidence is not inconceivable: for example, an LLM-powered browser extension that alters chatbot outputs could enable randomized human subject experiments and remove confounders that are prevalent in observational studies \citep{piccardi2024social,mendler2024engine}. These experiments could test the causal effects of feedback loops on users and potential interventions to address them. 

\paragraph{Lock-in Effects in the Wild} Alongside randomized controlled experiments aiming to establish causal evidence, we also need better evidence of lock-in effects in the wild. For instance, science may go down the path favored by LLMs: in theory both authors and reviewers may utilize LLMs in their workflow, which may create an echo chamber that entrenches LLM biases. Besides, fields with heavier use of LLMs may establish feedback loops that already demonstrate lock-in effects \citep{yakura2024empirical}. We will need strong empirical demonstrations of such effects.

\paragraph{More realistic simulations} We think simulations play complementary roles in understanding the LLM's impact on human's belief evolution over long term. But we hope simulations can be progressively realistic in order for us to understand the real-world interaction dynamic. Among other things, we are particularly interested in understanding how group of Agents may update their beliefs when both LLM Authority and new \textit{empirical evidence} are presented. This is because for lock-in to happen, given population of humans would need to, not only hold their (false) beliefs firmly, but also it's hard for them to incorporate empirical evidence.

That said, we believe that these evidences provide sufficient reasons for investigating human-LLM interactions and lock-in hypothesis further and for developing strategies to mitigate the potential negative consequences of lock-in, either through technical and algorithmic interventions, or through policy and regulation.

\paragraph{Evaluation and Mitigation} The ability to monitor lock-in effects in real-world human-AI systems is an important enabler of successful mitigation strategies. This may be a perfect measurement based on agent beliefs or a downstream proxy; a small-scoped metric focusing on one-human-one-AI interactions or a societal-scale metric.

Once such evaluation methods are in place, both algorithmic and policy mitigation will become much more tractable. On the policy and governance front, foundational research to clarify the range of feasible interventions is needed. On the algorithmic front, methods for optimizing model policies within a complex human-AI environment may be needed. Current alignment methods see human feedback as a non-influenceable oracle \citep{bai2022training,carroll2024ai}, which make them unsuited for the job.

\section{Conclusion}

In this study, we have formulated and investigated the lock-in hypothesis in the context of human-LLM interactions. First, we formalize \emph{lock-in} as the entrenchment of confident false beliefs at the population level, resulting from feedback loops. 
Our Bayesian model reveals two core mechanisms for lock-in: feedback loop and over-trust. We believe both of these forces are present in the current LLM landscape \citep{glickman2024human}. Second, we ground the intuitions acquired from formal modeling with empirical simulations. We demonstrate how a feedback loop can lead to lock-in where the diversity of human knowledge and values is reduced, resulting in greater homogeneity. The simulation supports the intuition that although the end state is lock-in, diversity loss is observable at an earlier point. Finally, we provide empirical evidence from WildChat that the interaction between human users and LLMs can lead to a loss of conceptual diversity. We found evidence of diversity loss corresponding to the release of new versions of the language models, partially supporting the hypothesis.

\section*{Impact Statement}

This paper promotes the understanding of the potential societal consequences of human-LLM interactions, and shall lead to positive societal impact if successful.

\section*{Acknowledgement}
We would like to thank Tao Lin, Jose Hernandez-Orallo, Matthieu Téhénan, Cynthia Chen, Jason Brown, Ziyue Wang, Tyna Eloundou, Alex Tamkin, Ben Plaut, Raj Movva, Yingjun Liu, Yinzhu Yang, Wanru Zhao, and Dakiny Ruiying Li for their valuable feedback. We thank the Cooperative AI Foundation, Foresight Institute, UW Tsukuba NVIDIA Amazon Cross-Pacific AI Initiative, and a Sony Research Award for financial support.




\bibliography{paper}
\bibliographystyle{icml2025}

\onecolumn
\newpage
\appendix


\begin{figure}[tp]
    \centering
    \includegraphics[width=\linewidth, trim={7.5cm 0 8cm 0}]{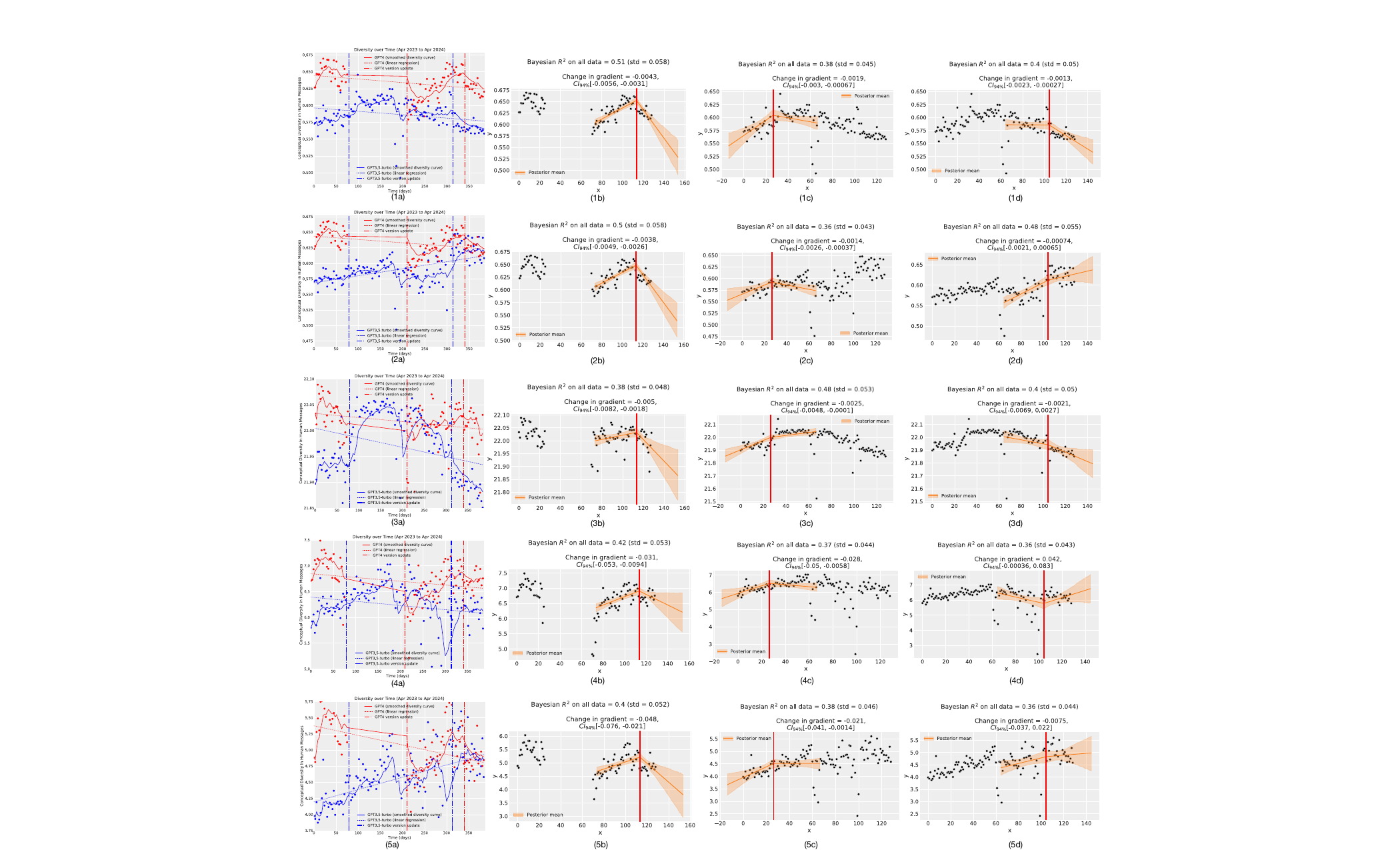}
    \caption{Sensitivity analysis to the choice of diversity metric. \textbf{(1a)(1b)(1c)(1d)} Analysis in Figure \ref{fig:diversity-trends}(a) and Figure \ref{fig:gpt35}, replicated on the entire WildChat corpus without filtering. \textbf{(2a)(2b)(2c)(2d)} Analysis replicated on post-filtering WildChat, where templated messages sharing 75-character prefixes with more than 75 other messages are removed. Non-value-laden messages remain. \textbf{(3a)(3b)(3c)(3d)} Analysis replicated on the diversity metric $\mathrm{D}_{\text{depth}}$. \textbf{(4a)(4b)(4c)(4d)} Analysis replicated on topic entropy as a diversity metric, where topics are defined as maximal clusters in the concept hierarchy containing no more than $1\%$ of all concepts. \textbf{(5a)(5b)(5c)(5d)} Analysis replicated on average pairwise Jaccard distance between conversations as a diversity metric, where each conversation is represented with the set of topics it contain.}
    \label{fig:sensitivity}
\end{figure}

\begin{table}[tp]
\centering
\begin{adjustbox}{max width=0.74\textwidth, max height=0.7\textheight} 
\begin{tabular}{@{}lcccccc@{}}
\toprule
 & \multicolumn{1}{c}{(1)} & \multicolumn{1}{c}{(2)} & \multicolumn{1}{c}{(3)} & \multicolumn{1}{c}{(4)} & \multicolumn{1}{c}{(5)} & \multicolumn{1}{c}{(6)} \\
Variable & Lin. All & Lin. Non-Tmpl & Lin. Value & Depth Value & Entropy Value & Jaccard Value \\
 & (Unscaled) & (Unscaled) & (Unscaled) & & & \\
\midrule
\textbf{Coefficients} & & & & & & \\[8pt]
const & 670.476 & -391.123 & -1859.673$^{***}$ & 20.806$^{***}$ & 5.261$^{***}$ & 5.084$^{***}$ \\
 & $ (p=0.906) $ & $ (p=0.928) $ & $ (p<0.001) $ & $ (p<0.001) $ & $ (p<0.001) $ & $ (p<0.001) $ \\[4pt]
num\_kinks\_before & -176.570 & -177.714 & -106.081 & -0.001 & -0.106$^{*}$ & -0.101$^{*}$ \\
 & $ (p=0.495) $ & $ (p=0.353) $ & $ (p=0.070) $ & $ (p=0.983) $ & $ (p=0.035) $ & $ (p=0.030) $ \\[4pt]
nsamples & 0.190 & 0.066 & -0.024 & 0.000$^{**}$ & 0.000$^{***}$ & 0.000$^{**}$ \\
 & $ (p=0.437) $ & $ (p=0.459) $ & $ (p=0.584) $ & $ (p=0.003) $ & $ (p<0.001) $ & $ (p=0.005) $ \\[4pt]
time & -6.633 & -7.672 & -1.804 & -0.002 & -0.008$^{***}$ & -0.006$^{***}$ \\
 & $ (p=0.202) $ & $ (p=0.144) $ & $ (p=0.281) $ & $ (p=0.065) $ & $ (p<0.001) $ & $ (p<0.001) $ \\[4pt]
user\_first\_entry & 3.406 & 3.187 & 0.728 & 0.002 & 0.003 & 0.001 \\
 & $ (p=0.550) $ & $ (p=0.358) $ & $ (p=0.671) $ & $ (p=0.072) $ & $ (p=0.134) $ & $ (p=0.731) $ \\[4pt]
engagement\_progress & -0.092 & 0.059 & 0.001 & -0.000 & 0.000 & 0.000 \\
 & $ (p=0.752) $ & $ (p=0.623) $ & $ (p=0.984) $ & $ (p=0.404) $ & $ (p=0.267) $ & $ (p=0.476) $ \\[4pt]
temporal\_extension & 11.219 & 3.650 & 4.412 & -0.001 & -0.018$^{***}$ & -0.009 \\
 & $ (p=0.323) $ & $ (p=0.610) $ & $ (p=0.323) $ & $ (p=0.654) $ & $ (p<0.001) $ & $ (p=0.058) $ \\[4pt]
user\_gpt35\_ratio & -1619.253 & -202.877 & -329.406$^{**}$ & 0.050 & -0.303$^{*}$ & -0.486$^{***}$ \\
 & $ (p=0.515) $ & $ (p=0.392) $ & $ (p=0.022) $ & $ (p=0.530) $ & $ (p=0.040) $ & $ (p=0.001) $ \\[4pt]
mean\_turns & 46.235 & 29.407 & 63.585$^{**}$ & -0.044$^{**}$ & -0.031 & 0.015 \\
 & $ (p=0.463) $ & $ (p=0.466) $ & $ (p=0.034) $ & $ (p=0.003) $ & $ (p=0.264) $ & $ (p=0.578) $ \\[4pt]
mean\_conversation\_length & 0.069 & 0.032 & -0.045 & 0.000$^{***}$ & 0.000$^{**}$ & 0.000 \\
 & $ (p=0.385) $ & $ (p=0.429) $ & $ (p=0.082) $ & $ (p<0.001) $ & $ (p=0.013) $ & $ (p=0.090) $ \\[4pt]
mean\_prompt\_length & -0.027 & -0.019 & -0.003 & -0.000$^{***}$ & -0.000$^{*}$ & -0.000 \\
 & $ (p=0.832) $ & $ (p=0.728) $ & $ (p=0.936) $ & $ (p<0.001) $ & $ (p=0.041) $ & $ (p=0.126) $ \\[4pt]
post\_gap &  & -51.407 & -17.078 & 0.154$^{**}$ & 0.128$^{*}$ & 0.070 \\
 &  & $ (p=0.815) $ & $ (p=0.809) $ & $ (p=0.003) $ & $ (p=0.034) $ & $ (p=0.212) $ \\[4pt]
language\_Chinese & -1087.495 & -867.631 &  & 0.346 & -1.048 & -1.193 \\
 & $ (p=0.831) $ & $ (p=0.838) $ &  & $ (p=0.732) $ & $ (p=0.274) $ & $ (p=0.188) $ \\[4pt]
language\_Dutch & -209.954 & 124.679 & 1828.628$^{***}$ & 1.226 & -0.403 & -0.462 \\
 & $ (p=0.967) $ & $ (p=0.977) $ & $ (p<0.001) $ & $ (p=0.244) $ & $ (p=0.715) $ & $ (p=0.664) $ \\[4pt]
language\_English & -238.263 & -53.230 & 1712.809$^{***}$ & 0.742 & -0.441 & -0.637 \\
 & $ (p=0.963) $ & $ (p=0.990) $ & $ (p<0.001) $ & $ (p=0.463) $ & $ (p=0.645) $ & $ (p=0.481) $ \\[4pt]
language\_French & -708.460 & -199.764 & 1478.910$^{***}$ & 0.732 & -0.906 & -0.904 \\
 & $ (p=0.890) $ & $ (p=0.963) $ & $ (p<0.001) $ & $ (p=0.472) $ & $ (p=0.357) $ & $ (p=0.333) $ \\[4pt]
language\_German & -73.373 & 125.137 & 2262.289$^{**}$ & 1.020 & -0.840 & -1.118 \\
 & $ (p=0.989) $ & $ (p=0.977) $ & $ (p=0.021) $ & $ (p=0.327) $ & $ (p=0.443) $ & $ (p=0.285) $ \\[4pt]
language\_Indonesian &  & 430.232 & 1684.034$^{***}$ & 1.005 & -1.068 & -0.879 \\
 &  & $ (p=0.921) $ & $ (p=0.001) $ & $ (p=0.344) $ & $ (p=0.338) $ & $ (p=0.411) $ \\[4pt]
language\_Italian & 67.561 & 220.081 & 2178.085$^{***}$ & 1.140 & 0.452 & 0.279 \\
 & $ (p=0.990) $ & $ (p=0.961) $ & $ (p=0.001) $ & $ (p=0.293) $ & $ (p=0.699) $ & $ (p=0.803) $ \\[4pt]
language\_Japanese & -396.516 & -265.645 & 772.865 & 0.655 & -1.145 & -1.126 \\
 & $ (p=0.938) $ & $ (p=0.951) $ & $ (p=0.103) $ & $ (p=0.533) $ & $ (p=0.300) $ & $ (p=0.290) $ \\[4pt]
language\_Portuguese & 226.901 & 256.649 & 2020.447$^{***}$ & 0.766 & -0.782 & -0.714 \\
 & $ (p=0.965) $ & $ (p=0.952) $ & $ (p<0.001) $ & $ (p=0.453) $ & $ (p=0.433) $ & $ (p=0.451) $ \\[4pt]
language\_Russian & -1288.772 & -942.366 & 1217.315$^{***}$ & 0.306 & -0.972 & -0.918 \\
 & $ (p=0.800) $ & $ (p=0.824) $ & $ (p<0.001) $ & $ (p=0.763) $ & $ (p=0.313) $ & $ (p=0.314) $ \\[4pt]
language\_Spanish & 68.414 & 53.543 & 1620.475$^{***}$ & 0.724 & -0.506 & -0.646 \\
 & $ (p=0.989) $ & $ (p=0.990) $ & $ (p<0.001) $ & $ (p=0.477) $ & $ (p=0.608) $ & $ (p=0.489) $ \\[4pt]
language\_Turkish & -10969.106 &  &  &  &  &  \\
 & $ (p=0.078) $ &  &  &  &  &  \\
\midrule
\textbf{Model Summary} & & & & & & \\[8pt]
No. Observations & 4571 & 7261 & 3556 & 7261 & 4081 & 3979 \\
No. Groups & 197 & 272 & 181 & 272 & 227 & 227 \\
Log-Likelihood & -45350.28 & -70814.14 & -29123.54 & -10284.17 & -5010.56 & -4536.16 \\
Scale & 2.54e+07 & 1.77e+07 & 7.46e+05 & 0.945 & 0.597 & 0.495 \\
Group Var & 2.55e+05 & 2.08e+05 & 1.85e+05 & 0.071 & 0.298 & 0.304 \\
Converged & Yes & Yes & Yes & Yes & Yes & Yes \\
\bottomrule
\multicolumn{7}{@{}l@{}}{\textit{Note:} Significance levels: $^{*}$p$<$0.05, $^{**}$p$<$0.01, $^{***}$p$<$0.001.} \\
\end{tabular}%
\end{adjustbox} 
\caption{Mixed linear model regression results for different diversity metrics.
Models: (1) Lineage Diversity (all, unscaled), (2) Lineage Diversity (non-templated, unscaled), (3) Lineage Diversity (value-laden, unscaled), (4) Depth Diversity (value-laden), (5) Topic Entropy (value-laden), (6) Jaccard Distance (value-laden). P-values in parentheses.}
\label{tab:regression_results} 
\end{table}

\newpage
\section{Supplementary Results of WildChat Data Analysis}\label{app:data-analysis}

In this appendix, we explain details on the implementation and results of the data analysis on WildChat.

Some figures in this appendix, as well as Figure \ref{fig:gpt35} in the main body, are made with CausalPy \citep{Vincent_Forde_Preszler_2024}.

\subsection{Results Without Data Filtering}

Figure \ref{fig:sensitivity}(1) presents results of hypothesis testing on the complete dataset without filtering suspected API use. After including such data, support for both Hypothesis 1 and Hypothesis 2 are strengthened.



\subsection{Exploratory Hypotheses}

\paragraph{Exploratory Hypothesis 3 (Individual Diversity Loss Occurs in Human-AI Interaction).} Among heavy users of GPT, those who use it more (compared to those who used it less) experience a stronger diversity loss in the concepts occurring in their conversations with GPT, after adjusting for confounders. Moreover, on the temporal dimension, the accumulation of GPT interaction causally explains each heavy user's diversity loss over time. \footnote{The reason for focusing on heavy users is that self-selection bias is strong among light users, where people who find GPT less useful spontaneously engage less with it, confounding the data with a reversed direction of causality.}

\paragraph{Exploratory Hypothesis 4 (Iterative Training Leads to Individual Diversity Loss).} Among heavy users of GPT, those who engage with later versions of GPT experience a larger diversity loss, after adjusting for confounders.\footnote{Hypothesis 4 shows that the ``human $\to$ LLM'' direction of influence also has an impact on diversity loss.}

It can be noted that Hypothesis 3 and 4 adopt \emph{individual users} as the unit of analysis, while Hypothesis 1 and 2 adopt \emph{time periods} as the unit and aggregate all users into a collective corpus.

\subsection{Methods}\label{sec:analysis-methods}

In this section, we explain details on our method for analysis.

\paragraph{Concept Hierarchy} To assist in the assessment of concept diversity, we build a \emph{concept hierarchy} \citep{sanderson1999deriving} with the following steps:
\begin{enumerate}
    \item Extracting concepts (\emph{e.g.}, computer, environmental protection, world cup) mentioned or implied in each conversation, with the Llama-3.1-8B-Instruct model \citep{dubey2024llama}.
    \item Perform lemmatization on the concepts with the WordNet Lemmatizer \citep{miller1995wordnet,bird2006nltk}.
    \item Obtain $D=256$-dimensional embeddings for each concept using the voyage-3-large model.
    \item Perform hierarchical clusterization with the HDBSCAN algorithm \citep{mcinnes2017hdbscan}.
\end{enumerate}
This would produce a tree with specific concepts at the bottom, and generic concept clusters at the top. The root node is an all-encompassing cluster that captures all concepts.

Given the size of the hierarchy, please refer to our codebase to view its content. The README instructions shall contain guidance on where to find the hierarachy illustration.

\paragraph{Diversity Metric} Both hypotheses require the measurement of concept diversity within a certain user or a corpus. Common metrics for measure diversity --- such as Shannon entropy --- cannot take into account the hierarchical structure of concepts, and may therefore view semantically similar concepts as entirely different ones. To overcome this shortcoming, we introduce the \emph{lineage diversity} metric, which, for each multi-set $\mathcal C$ of concepts, calculates
\begin{equation}
\mathrm{D}_{\text{lineage}}(\mathcal{C};\mathcal{T})=
\frac{1}{\log |\mathcal{T}|}\left(\log |\mathcal{T}| - \log \mathrm{E}_{u,v\sim \mathrm{Unif}(\mathcal{C})}\left[
    \frac{\left|\mathcal{T}\right|}{\left|\mathcal{T}_{l(u,v)}\right|}
\right]\right)
\end{equation}
where $l(u,v)$ is the lowest common ancestor (LCA) of concept nodes $u$ and $v$, $|\mathcal{T}_{\ \cdot}|$ is its subtree size (its number of descendant concepts), and $|\mathcal{T}|$ is the size of the tree.

\paragraph{Computation of Lineage Diversity} Finally, the calculation of $\mathrm{D}_{\text{lineage}}(\mathcal{C};\mathcal{T})$ can be dramatically accelerated by performing dynamic programming on the compressed Steiner tree containing the nodes $\mathcal{C}$, resulting in the time complexity $\Theta(|\mathcal{C}|\log |\mathcal{T}|)$ ($|\mathcal{C}|\ll |\mathcal{T}|$), as opposed to $\Theta(|\mathcal{C}|^2\log |\mathcal{T}|)$ or $\Theta(|\mathcal{T}|)$ of alternative algorithms. It is also much faster than traditional distance-based metrics that typically require $\Theta(|\mathcal{C}|^2D)$ time to compute \citep{kaminskas2016diversity} --- an unaffordable complexity at our scale of analysis.

\paragraph{Multiple Regression and Heteroscedasticity} When testing Hypotheses 1, 3 and 4, as is the standard practice in causal inference on real-world observational data \citep{imbens2015causal}, we adopt multiple linear regression with heteroscedasticity-robust standard errors \citep{rosopa2013managing}. We carry out the Breusch–Pagan test \citep{breusch1979simple} for heteroscedasticity, and upon positive result, use the HC3 and the Driscoll \& Kraay standard error \citep{cribari2011new,driscoll1998consistent}, given the heavy-tailed nature of user interaction statistics and therefore the potential for high leverage. 

We control for the demographics variables recorded in WildChat-1M, namely user language and geographic location, and other interaction characteristics such as date of first GPT use, average conversation length, and GPT-3.5-turbo vs GPT-4 usage rate.

\paragraph{Regression Kink Design} Hypothesis 2 revolves around a non-smooth change in diversity at the dates of GPT version switch, meaning that the first-order derivative of the diversity curve is discontinuous at those points. Since confounders almost always change smoothly, any discontinuous change detected will be indicative of causal relationships, even without adjusting for confounders. We thus adopt the regression kink design (RKD) \citep{card2017regression,ando2017much} where two polynomial regressions are performed at the left and right limit of the switching date, and statistical tests are deployed to detect coefficient differences between the two polynomials.

\begin{figure}[p]
    \centering
    \includegraphics[width=\linewidth]{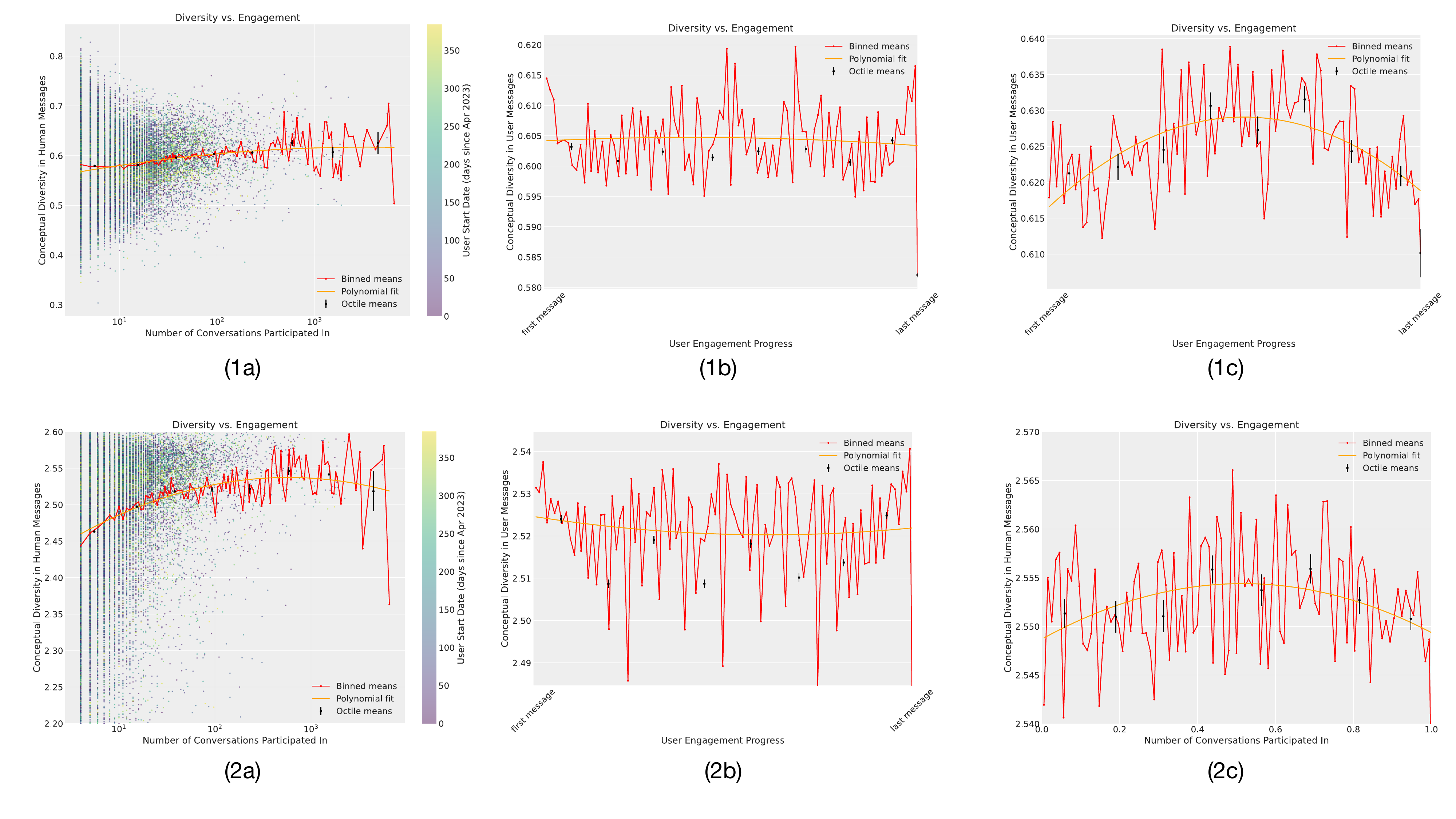}
    \vspace{-1.5em}
    \caption{Diversity in a user's conversation changes with the level of user engagement. \textbf{(1a)(2a)} Relation between per-user concept diversity and absolute engagement (number of conversations), under the metrics $\mathrm{D}_{\text{lineage}}$ and $\mathrm{D}_{\text{depth}}$ respectively. \textbf{(1b)(2b)} Relation between per-user-per-time-period concept diversity and absolute engagement (number of conversations already had, divided by total number of conversations of the user). \textbf{(1c)(2c)} Relation between per-user-per-time-period concept diversity and absolute engagement, within the top $1\%$ high-engagement users specifically.}
    \label{fig:user-trends}
\end{figure}

\subsection{Exploratory Analysis on User Engagement}

We carry out regression analysis and visualizations on Exploratory Hypothesis 3 and 4, whose results are shown in Figure \ref{fig:user-trends}. It is found that diversity tend to decrease with engagement for high-engagement users, while the trend is opposite for low-engagement users.

Self-selection bias is a likely confounder here. Users who find GPT's responses less diverse and less helpful tend to stop engaging with it (or engage less), thereby reversing the direction of causality. We believe it is a likely explanation for the reversed trends among low-engagement users (since voluntary drop-out is especially common among low-engagement users), along with the factor that new users tend to discover more use cases of GPT over time.

\begin{figure}[p]
    \centering
    \includegraphics[width=\linewidth]{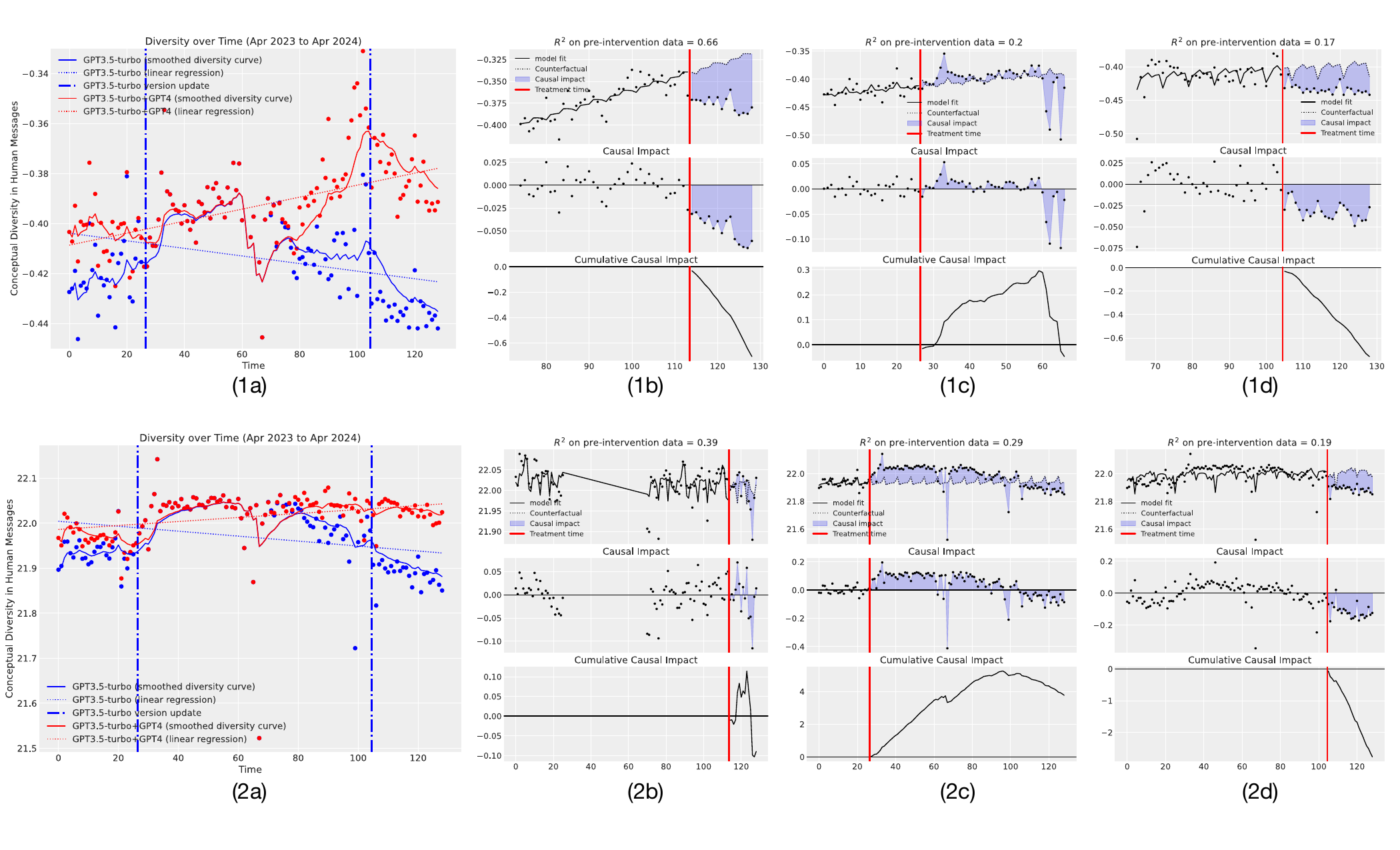}
    \vspace{-1em}
    \caption{Additional results. \textbf{(1a)} Diversity trends of GPT-3.5-turbo and GPT-4 conversations combined (red), compared to GPT-3.5-turbo trends (blue). See \S\ref{sec:data-results} for a discussion on confounders. \textbf{(1b)(1c)(1d)} \emph{Interrupted time series} (ITS) analysis on the causal impact of releasing GPT-4-0125-preview, GPT-3.5-turbo-0613, GPT-3.5-turbo-0125 respectively, with time horizon truncated to 120 days. \textbf{(2a)(2b)(2c)(2d)} Analysis performed with $\mathrm{D}_{\text{depth}}$ as the diversity metric.}
    \label{fig:additional}
\end{figure}

\subsection{Sensitivity Analysis to the Choice of Metric}\label{app:depth}

We independently developed $\mathrm{D}_{\text{depth}}$, an alternative diversity metric to $\mathrm{D}_{\text{lineage}}$, but which was later abandoned in favor of $\mathrm{D}_{\text{lineage}}$ due to its lack of interpretability. Despite that, $\mathrm{D}_{\text{depth}}$ serves as a valuable tool for sensitivity analysis, where we validate the main conclusions of the analysis against with $\mathrm{D}_{\text{depth}}$ as the diveristy metric.

Below, we introduce $\mathrm{D}_{\text{depth}}$ and present the results of the analysis on $\mathrm{D}_{\text{depth}}$.

\begin{equation}
\mathrm{D}_{\text{depth}}(\mathcal{C};\mathrm{T,r})=
\mathrm{E}_{u,v\sim \mathrm{U}(\mathcal{C})}\left[
    \log \left|\mathrm{T}_{l(u,v)}\right| - \mathrm{d}(\mathrm{r},{l(u,v)}) 
\right]
\end{equation}

Here, $\log |T_{l(u,v)}|$ estimates the height of the subtree, and by subtracting from it the distance to the root, we obtain a measure of $l(u,v)$'s \emph{relative vertical position on a leaf-to-root path}. It equals zero when $l(u,v)$ is at the exact middle, becomes larger the higher $l(u,v)$ is on the path, and vice versa. This two-part design is meant for maintaining fairness in a possibly imbalanced tree, where certain parts of the concept space is over-represented and has a enlarged subtree compared to others.

By calculating the expected vertical position of the LCA of two uniformly random concepts from the corpus, we measure the extent of dispersal of the corpus over the concept hierarchy --- whether they are concentrated in a small niche, or spread across a diverse range of different topics.

\subsection{Implementation Details and Prompts}\label{app:prompts}

This section contains supplementary details on the implementation of our analysis.

\paragraph{Prompt for filtering value-laden concepts} Below is the LLM prompt with which we identify concepts that are value laden. These are then used to filter the WildChat dataset.

\begin{lstlisting}
We define the value-ladenness of a phrase as the extent to which it conveys opinions on ethics, politics, ideology, or religion, or is otherwise related to these topics.

Given the following collection of phrases, sort them in decreasing order of value-ladenness, and return the sorted phrases in the EXACT same JSON format. Output only the JSON object (without wrappers) and nothing else.

{phrases}
\end{lstlisting}

\paragraph{Prompt for concept extraction} Below is the LLM prompt with which we extract concepts from conversations.

\begin{lstlisting}
You are given a conversation in JSON format. Each element of the JSON array is a dictionary (object) that represents a single turn in the conversation. Each dictionary has two keys:
1. **role** - which can be "user" or "assistant"  
2. **content** - the text content of that turn

Your task:  

1. Read through each turn in the conversation.  
2. Extract all **related concepts** from each turn's content. A concept is defined as any short phrase (one to a few words) that represents an idea, topic, event, name, or domain-specific term mentioned in that turn.  
   - **Examples of potential concepts** might include specific events ("Trump Inauguration"), general topics ("climate change"), ideas ("economic resilience"), niche terms ("public-benefit corporation"), or anything else that stands out.  
   - **Have a low bar for inclusion**; if something seems relevant, it can be included as a concept.  
3. Separate the extracted concepts based on whether they came from the **user**'s messages or the **assistant**'s messages.  
4. Return a JSON dict containing four lists:
   - `user_concepts_explicit`: all unique concepts explicitly mentioned in the user's messages (as short strings)
   - `assistant_concepts_explicit`: all unique concepts explicitly mentioned in the assistant's messages (as short strings)
   - `user_concepts_related`: all unique concepts related to the user's messages (as short strings)
   - `assistant_concepts_related`: all unique concepts related to the assistant's messages (as short strings)

Important details:
- If the user or the assistant repeats a concept multiple times, list it only once in the respective output array.  
- Each concept should be a **distinct short phrase**.  
- Output **only** the JSON object and nothing else (no extra commentary or explanation).
- Always give concepts in English, even if the conversation is in another language.

### Input 1

```json
[
  {
    "content": "What is reality shifting?",
    "role": "user"
  },
  {
    "content": "Reality shifting is a practice or belief in which individuals attempt to shift their consciousness or awareness to enter and experience alternate realities or dimensions. It is often associated with the idea that individuals have the ability to consciously control their experiences and perceive different realities. Reality shifting can involve various techniques, such as guided meditations, visualization exercises, or specific rituals, with the aim of accessing different planes of existence, fictional worlds, or desired scenarios. Some individuals perceive reality shifting as a means to escape or explore possibilities beyond their current physical reality. However, it is important to note that reality shifting is generally considered a subjective experience and is not scientifically proven.",
    "role": "assistant"
  },
  {
    "content": "Has any government ever studied reality shifting?",
    "role": "user"
  },
  {
    "content": "There is no evidence or official documentation to suggest that any government has conducted serious or official studies on reality shifting. The concept of reality shifting lies within the realm of spirituality, metaphysics, and personal belief systems rather than scientific inquiry or government research. Governments typically prioritize funding and conducting studies related to more tangible and empirically verifiable subjects, such as medicine, technology, economics, and social sciences. Consequently, reality shifting remains largely within the sphere of personal exploration and subjective experiences rather than being subject to government study.",
    "role": "assistant"
  }
]
```

### Output 1 (Concepts in English, Near-Repetitive Concepts Removed)

```json
{
    "user_concepts_explicit": ["reality shifting", "government"],
    "user_concepts_related": ["conspiracy theory", "government research", "spirituality", "supernatural phenomenon"],
    "assistant_concepts_explicit": ["reality shifting", "consciousness", "awareness", "alternate reality", "alternate dimension", "meditation", "guided meditation", "visualization exercise", "ritual", "plane of existence", "fictional world", "physical reality", "subjective experience", "scientific proof", "official documentation", "spirituality", "metaphysics", "personal belief system", "scientific inquiry", "government research", "government funding", "medicine", "technology", "economics", "social science", "subjective experience", "government study"],
    "assistant_concepts_related": ["pseudoscience", "conspiracy theory", "scientific skepticism", "supernatural phenomenon", "esotericism", "philosophy of consciousness", "parapsychology"]
}
```

### Input 2

[Non-English Example Input]

### Output 2 (Concepts in English, Near-Repetitive Concepts Removed)

```json
{
    "user_concepts_explicit": ["prenatal checkup", "Chaoyang District Maternal and Child Health Hospital"],
    "user_concepts_related": ["hospital policies", "maternal health services", "medical advice", "childbirth", "pregnancy", "motherhood", "Beijing healthcare system", "public health services"],
    "assistant_concepts_explicit": ["prenatal checkup", "health assessment", "complications", "abnormal conditions", "treatment plan", "personal information", "family medical history", "genetic disorders", "pregnancy history", "blood test", "urine test", "liver and kidney function", "blood type", "electrocardiogram", "ultrasound", "B-mode ultrasound", "fasting test", "comfortable clothing", "low blood sugar", "emotional wellbeing", "birth safety", "national ID card", "marriage certificate", "prenatal record booklet", "health insurance card", "contact information", "Chaoyang District Maternal and Child Health Hospital", "Chaoyang District"], 
    "assistant_concepts_related": ["maternal care", "prenatal health", "early pregnancy", "hospital-specific requirements", "child health assessment", "preventive measure", "Beijing", "regional hospital policies", "public health services"]
}
```

### Input 3

```json
{conversation}
```

### Output 3 (Concepts in English, Near-Repetitive Concepts Removed)

[FILL IN YOUR ANSWER HERE]
\end{lstlisting}

\newpage
\section{Additional Theoretical Results and Mathematical Proofs} \label{proof}
\subsection{General Results Under Time-Varying Trust}

When the trust matrix $\bm{\mathrm{W}}(t)$ changes with time $t=0, 1, \cdots$, we have the following transition dynamics.
\begin{align}
    \bm{p}_{t+1} &= \bm{p}_{t}+\sigma^{-2}\bm{1} \label{eq:r0-time-vary}  \\
    \bm{\hat\mu}_{t+1} \odot \bm{p}_{t+1} &= \bm{\hat\mu}_{t} \odot \bm{p}_{t} + \sigma^{-2}\bm{o}_t  \label{eq:r1-time-vary} \\
    \bm{q}_{t+1} &= \bm{p}_{t+1} + \bm{\mathrm W}(t)\cdot \bm{q}_t \label{eq:r2-time-vary} \\
    \bm{\hat\nu}_{t+1} \odot \bm{q}_{t+1}&= \bm{\hat\mu}_{t+1} \odot \bm{p}_{t+1} + \bm{\mathrm W}(t)\cdot(\bm{\hat\nu}_{t} \odot \bm{q}_{t}) \label{eq:r3-time-vary} 
\end{align}

Under such dynamics, we would like to see when the false convergence in Theorem \ref{thm:main} continues to occur. We will show that, intuitively speaking, the false convergence happens when the trust matrix always have a spectral radius separated upwards from $1$ by a constant and the difference between consecutive matrices is small.

\begin{theorem}[Feedback Loops Induce Collective Lock-in Under Time-Varying Trust]\label{thm:time-vary}
There exists a positive-valued function $h(\cdot,\cdot,\cdot,\cdot,\cdot)$ that satisfies the following property.
Given a sequence $\bm{\mathrm{W}}(t)\ (t=1,2,\cdots)$ of non-negative $N$-by-$N$ primitive matrices satisfying:
\begin{enumerate}
    \item $\lVert \bm{\mathrm{W}}(t+1)-\bm{\mathrm{W}}(t) \rVert_1\leq \epsilon$ for some constant $\epsilon>0$;
    \item $\rho(\bm{\mathrm{W}}(t))\geq c$ for some constant $c>1$;
    \item non-zero entries of $\bm{\mathrm{W}}(t)$ are lower- and upper-bounded by positive constants $0<L<U<+\infty$;
    \item for each $\bm{\mathrm{W}}(t)$, all its non-leading eigenvalues are no larger than $1-\delta$, for some constant $\delta>0$;
    \item $\sup_t \left\lvert \frac{\bm{v}(t+1)}{\lvert \bm{v}(t+1) \rvert_1}-\frac{\bm v(t)}{\lvert \bm{v}(t)\rvert_1} \right\rvert_1 \leq \psi$ for some $\psi>0$, where $\bm{v}(t)$ is the Perron vector of $\bm{\mathrm{W}}(t)$.
\end{enumerate}
Then, if $\epsilon,\psi < h(c,L,U,N,\delta)$, for all $i\in\{1,\cdots,N\}$, we have
\begin{equation}
    \mathrm{Pr}\left[
        \lim_{t\to\infty} \hat\mu_{i,t} = \mu
    \right]=0.\label{eq:false-conv-time-vary}
\end{equation}
\end{theorem}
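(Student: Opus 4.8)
The plan is to reproduce the mechanism behind Theorem~\ref{thm:main} — the finitely many early measurements get amplified by the feedback loop into a persistent, random, almost-surely nonzero offset — and to replace the single-matrix Perron--Frobenius step by a uniform version valid along the slowly drifting sequence $\bm{\mathrm{W}}(t)$. Throughout I read the conclusion as a statement about the reported (aggregate) beliefs $\hat\nu_{i,t}$, as in Theorem~\ref{thm:main}; the private means $\hat\mu_{i,t}=\tfrac1t\sum_{s<t}\bm{o}_s$ converge to $\mu$ by the strong law and carry no feedback. First I would linearize: setting $\bm{a}_t=\bm{\hat\nu}_t\odot\bm{q}_t$ and $\bm{b}_t=\bm{\hat\mu}_t\odot\bm{p}_t$, equations \eqref{eq:r0-time-vary}--\eqref{eq:r3-time-vary} become two inhomogeneous linear recursions driven by the same operators, whose solutions are
\[
\bm{q}_t=\textstyle\sum_{k=1}^{t}\Pi(t,k)\,\bm{p}_k,\qquad \bm{a}_t=\textstyle\sum_{k=1}^{t}\Pi(t,k)\,\bm{b}_k,\qquad \Pi(t,k):=\bm{\mathrm{W}}(t-1)\cdots\bm{\mathrm{W}}(k),
\]
with $\bm{p}_k=k\sigma^{-2}\bm{1}$ and $\bm{b}_k=\mu\,\bm{p}_k+\sigma^{-2}\bm{\xi}_k$, where $\bm{\xi}_k:=\sum_{s<k}(\bm{o}_s-\mu\bm{1})$ is a centered Gaussian random walk with i.i.d.\ increments $\bm{\eta}_s:=\bm{o}_s-\mu\bm{1}\sim\mathcal N(\bm{0},\sigma^2\bm{\mathrm{I}})$. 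Hence $\hat\nu_{i,t}-\mu=\sigma^{-2}[\bm{\zeta}_t]_i/q_{i,t}$ with $\bm{\zeta}_t:=\sum_k\Pi(t,k)\bm{\xi}_k$, and it suffices to show this ratio fails to converge to $0$ almost surely.

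The next step is a uniform ``adiabatic'' Perron--Frobenius estimate. I would pick $h$ below the constant $L$ of hypothesis~3; then hypothesis~1 forbids any entry from crossing between $0$ and $[L,\infty)$, so all $\bm{\mathrm{W}}(t)$ share one support digraph $G$, which is primitive with a fixed Wielandt index $m_0\le (N-1)^2+1$. Consequently every product of $m_0$ consecutive matrices is entrywise positive with entries in $[L^{m_0},N^{m_0-1}U^{m_0}]$, hence, by Birkhoff's theorem, contracts the Hilbert projective metric on the positive cone with a uniform ratio $\tau=\tau(L,U,N)<1$; together with the column-sum bounds $[L,NU]$, the uniform lower bound $v_i(t)\ge (L/(NU))^{m_0}$ on Perron-vector entries, the spectral gap $\delta$, and the drift bounds $\epsilon,\psi$, I expect to establish that for $n:=t-k$ large and any $\bm{x}\ge\bm{0}$,
\[
\Pi(t,k)\,\bm{x}=\Bigl(\textstyle\prod_{j=k}^{t-1}\rho(\bm{\mathrm{W}}(j))\Bigr)\bigl(\bm{u}(k)^{\top}\bm{x}\bigr)\,\bigl(\bm{v}(t)+\mathcal E(t,k)\bigr),\qquad \lVert\mathcal E(t,k)\rVert_1\le C\bigl(\tau^{\lfloor n/m_0\rfloor}+\epsilon+\psi\bigr),
\]
where $\bm{u}(t),\bm{v}(t)>\bm{0}$ are the left/right Perron vectors of $\bm{\mathrm{W}}(t)$ (normalized $\bm{u}(t)^{\top}\bm{v}(t)=\lvert\bm{v}(t)\rvert_1=1$) and $C=C(c,L,U,N,\delta)$. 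This is the time-varying analogue of $\bm{\mathrm{W}}^n\bm{x}\approx\rho^n(\bm{u}^{\top}\bm{x})\bm{v}$.

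Granting this estimate, I would finish as in Theorem~\ref{thm:main}. Since $\rho(\bm{\mathrm{W}}(j))\ge c>1$, the weights $\bigl(\prod_{j<k}\rho(\bm{\mathrm{W}}(j))\bigr)^{-1}$ decay geometrically in $k$ while $\lvert\bm{p}_k\rvert_1,\lvert\bm{b}_k\rvert_1$ grow only polynomially, so inserting the estimate into the sums gives, after dividing by $\prod_{j<t}\rho(\bm{\mathrm{W}}(j))$,
\[
\bm{q}_t=\Bigl(\textstyle\prod_{j<t}\rho(\bm{\mathrm{W}}(j))\Bigr)\bigl(Q_\infty\,\bm{v}(t)+o(1)\bigr),\qquad \bm{a}_t=\Bigl(\textstyle\prod_{j<t}\rho(\bm{\mathrm{W}}(j))\Bigr)\bigl(A_\infty\,\bm{v}(t)+o(1)\bigr),
\]
with $Q_\infty=\sigma^{-2}\sum_{k\ge1}\bigl(\prod_{j<k}\rho(\bm{\mathrm{W}}(j))\bigr)^{-1}k\,\bm{u}(k)^{\top}\bm{1}>0$ a positive constant and $A_\infty=\mu Q_\infty+\sigma^{-2}Z_\infty$, $Z_\infty:=\sum_{k\ge1}\bigl(\prod_{j<k}\rho(\bm{\mathrm{W}}(j))\bigr)^{-1}\bm{u}(k)^{\top}\bm{\xi}_k$ a convergent random sum (a.s.\ and in $L^2$). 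Because $\bm{v}(t)$ cancels and $v_i(t)$ is bounded below, $\hat\nu_{i,t}=a_{i,t}/q_{i,t}\to A_\infty/Q_\infty=\mu+\sigma^{-2}Z_\infty/Q_\infty$ a.s., for every $i$. Finally, regrouping $Z_\infty$ by increments, the coefficient of $\bm{\eta}_1$ is $\sum_{k\ge2}\bigl(\prod_{j<k}\rho(\bm{\mathrm{W}}(j))\bigr)^{-1}\bm{u}(k)$, a sum of strictly positive vectors, hence nonzero; conditioning on $(\bm{\eta}_s)_{s\ge2}$ makes $Z_\infty$ a nondegenerate Gaussian, so $\mathrm{Pr}[Z_\infty=0]=0$ and $\mathrm{Pr}[\lim_t\hat\nu_{i,t}=\mu]=0$, which is \eqref{eq:false-conv-time-vary}.

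The hard part will be the ``adiabatic'' estimate of the second paragraph: controlling how the $O(\epsilon+\psi)$ per-step errors — coming from drift of $\rho(\bm{\mathrm{W}}(t))$, of $\bm{v}(t)$ (quantified by hypothesis~5), and of $\bm{u}(t)$ (via a spectral-gap, i.e.\ $\delta$-dependent, eigenvector perturbation bound) — interact with the $t-k$ factors of $\Pi(t,k)$ without compounding. The mechanism should be that an error injected at step $j$ is itself damped by the Birkhoff factor $\tau^{(t-j)/m_0}$ before it reaches $t$, so that $\sum_{j\le t}\tau^{(t-j)/m_0}\,O(\epsilon+\psi)=O\!\bigl((\epsilon+\psi)/(1-\tau^{1/m_0})\bigr)$ stays bounded and, for $\epsilon,\psi$ below an explicit threshold, small; making this precise — including that the transverse component of $\Pi(t,k)\bm{x}$ decays at rate $\max\{\tau^{1/m_0},(1-\delta)/c\}$ relative to its Perron component — is exactly what fixes the admissible $h(c,L,U,N,\delta)$.
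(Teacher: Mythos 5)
Your proposal is correct in substance and lands on the same architecture as the paper's proof --- reduce everything to a strong-ergodicity statement about the inhomogeneous product $\Pi(t,k)=\bm{\mathrm{W}}(t-1)\cdots\bm{\mathrm{W}}(k)$, then argue that the finitely many early noise terms leave a persistent, almost-surely nonzero offset --- but it gets there by a genuinely different route at both ends. Where you prove the adiabatic estimate directly via Birkhoff contraction of the Hilbert projective metric (using the fixed support digraph, the Wielandt index $m_0$, and the uniform entry bounds $L,U$ to get a uniform contraction ratio), the paper simply normalizes $\bm{\mathrm{M}}(t)=\rho(\bm{\mathrm{W}}(t))^{-1}\bm{\mathrm{W}}(t)$ and invokes Theorem 3.1 of Artzrouni (1991) to conclude that the entrywise ratios $\bm{\mathrm{Z}}(t)$ converge to $\bm 1\bm 1^{\mathrm T}$, so that every entry of the product grows at rate $\rho(\bm{\mathrm{W}}(t))\geq c>1$ and diverges; your sketch is essentially a self-contained reproof of that citation, and the damping mechanism you describe (per-step $O(\epsilon+\psi)$ errors killed geometrically by the contraction before they reach time $t$) is exactly what makes the cited result true, so the gap you flag as ``the hard part'' is real work but not a wrong turn. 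The second difference is in your favor: the paper ends by ``imitating the proof of Theorem \ref{thm:main},'' which establishes $\mathrm{Var}[\mathbf e_t]=\Theta(1)$ and $\mathrm{Pr}[\mathbf e_t=0]=0$ for each fixed $t$ --- a statement that does not by itself yield $\mathrm{Pr}[\lim_t \hat\nu_{i,t}=\mu]=0$ --- whereas you show the aggregate belief actually converges a.s.\ to the random limit $\mu+\sigma^{-2}Z_\infty/Q_\infty$ and that $Z_\infty$, conditioned on all but one noise increment, is a nondegenerate Gaussian, hence atomless at $0$. That conditioning argument (the coefficient of $\bm\eta_1$ is a convergent sum of strictly positive vectors, hence nonzero) is cleaner and closes a step the paper leaves informal. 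Your reading of the conclusion as a statement about the aggregate beliefs $\hat\nu_{i,t}$ rather than the private means $\hat\mu_{i,t}$ is also the right one --- as you note, the latter converge to $\mu$ by the strong law regardless of $\bm{\mathrm{W}}$, and the paper's own proof of Theorem \ref{thm:main} analyzes $\hat{\bm\nu}_t$ throughout.
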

\begin{proof}
    We imitate the proof of Theorem \ref{thm:main}. To do that, we only need to show
    \begin{equation}
    \lim_{m\to +\infty} \left(\bm{\mathrm{W}}(m)\bm{\mathrm{W}}(m-1)\cdots\bm{\mathrm{W}}(0)\bm{v}\right)_i=+\infty,\quad \forall i, \forall \bm{v}\in \mathbb R_{\geq 0}^N\setminus\{\bm 0\}.\label{formula:target}
    \end{equation}
    For that purpose, we make use of Theorem 3.1 in \citet{artzrouni1991growth}. We construct 
    \[
    \bm{\mathrm{M}}(t)=\rho(\bm{\mathrm{W}}(t))^{-1} \bm{\mathrm{W}}(t)
    \]
    and consider the series of matrices
    \[
    \bm{\mathrm{Z}}(t)=\left(\frac{[\bm{\mathrm{M}}(t)\bm{\mathrm{M}}(t-1)\cdots\bm{\mathrm{M}}(0)]_{ij}}{[\bm{\mathrm{M}}(t-1)\cdots\bm{\mathrm{M}}(0)]_{ij}}\right)_{ij}
    \]
    for sufficiently large $t$ such that the products of $\mathrm{M}$ are positive.

    From conditions 1-5 and Theorem 3.1 \citep{artzrouni1991growth}, we have
    \[
    \left\lVert \bm{\mathrm Z}(t)-\bm 1\cdot\bm 1^\mathrm T \right\rVert_1
    =
    \mathcal O\left(
        \exp(\beta t) + \psi \sum_{k=1}^t \exp(\beta k)
    \right)
    =\mathcal O\left(
        \psi + \exp \beta t
    \right)
    \]
    for some constant $\beta<0$ that depends on the sequence $\bm{\mathrm{M}}(t)$. By setting a sufficiently low upper bound $h(c,L,U,N,\delta)$ to $\psi$, we ensure that
    \[
    \lim_{t\to +\infty}\bm{\mathrm Z}(t)
    =
    \bm 1\cdot\bm 1^\mathrm T
    \]
    As a result,
    \begin{align}
        \left(\frac{[\bm{\mathrm{W}}(t)\bm{\mathrm{W}}(t-1)\cdots\bm{\mathrm{W}}(0)]_{ij}}{[\bm{\mathrm{W}}(t-1)\cdots\bm{\mathrm{W}}(0)]_{ij}}\right)_{ij}
        =&\ 
        \left(\frac{\prod_{k=1}^t \rho(\bm{\mathrm W}(k))\cdot[\bm{\mathrm{M}}(t)\bm{\mathrm{M}}(t-1)\cdots\bm{\mathrm{M}}(0)]_{ij}}{\prod_{k=1}^{t-1} \rho(\bm{\mathrm W}(k))\cdot[\bm{\mathrm{M}}(t-1)\cdots\bm{\mathrm{M}}(0)]_{ij}}\right)_{ij}
        \\ =&\ \rho(\bm{\mathrm W}(t))\cdot \bm{\mathrm Z}(t)_{ij}
        \\ \to &\ \rho(\bm{\mathrm W}(t))\quad (t\to +\infty) \label{formula:exp-grow}
    \end{align}
    Since $\rho(\bm{\mathrm W}(t))\geq c>1$, \eqref{formula:exp-grow} suggests that the left hand side stays above $(c+1)/2>1$ for sufficiently large $t$. Since the matrices are primitive, this implies that
    \[
    \lim_{t\to +\infty}
    [\bm{\mathrm{W}}(t)\bm{\mathrm{W}}(t-1)\cdots\bm{\mathrm{W}}(0)]_{ij}
    = +\infty,\quad \forall i,j
    \]
    which proves \eqref{formula:target}.
\end{proof}

Theorem \ref{thm:time-vary} gives general conditions under which an arbitrary (slowly varying) series of trust matrices lead to false convergence. However, some of its conditions, especially 5, are hard to contextualize. Luckily, we can remove condition 5 by considering a special case: trust matrices with constant incoming trust.

\begin{corollary}[Lock-in Under Time-Varying Trust With Constant Incoming Trust]\label{cor:time-vary-general}
    Under the conditions 1-4 in Theorem \ref{thm:time-vary}, assume that the trust matrix $\bm{\mathrm{W}}(t)$ is a left stochastic matrix (i.e., each agent receives the same amount of total trust from others), and that for any two agents $i,j$, there exists another agent $k$ trusting both of them, i.e., $\bm{\mathrm{W}}(t)_{k,i}\bm{\mathrm{W}}(t)_{k,j}>0$. Then, if $\epsilon < h(c,L,U,N,\delta)$, for all $i\in\{1,\cdots,N\}$, we have
\begin{equation}
    \mathrm{Pr}\left[
        \lim_{t\to\infty} \hat\mu_{i,t} = \mu
    \right]=0.
\end{equation}
\end{corollary}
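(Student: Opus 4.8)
The plan is to reduce, exactly as in the proof of Theorem~\ref{thm:time-vary}, to the single growth statement \eqref{formula:target}: that $\bigl(\bm{\mathrm{W}}(m)\bm{\mathrm{W}}(m-1)\cdots\bm{\mathrm{W}}(0)\,\bm{v}\bigr)_i\to+\infty$ for every $i$ and every $\bm{v}\in\mathbb{R}_{\ge 0}^N\setminus\{\bm 0\}$; once this is in hand, translating it into the stated probabilistic conclusion is identical to the argument in the proof of Theorem~\ref{thm:main} and need not be repeated. The role of the corollary's extra hypotheses is to let us prove \eqref{formula:target} by a direct, uniform-contraction argument for inhomogeneous products of stochastic matrices, bypassing condition~5 of Theorem~\ref{thm:time-vary} and the appeal to \citet{artzrouni1991growth}.

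The first step is to take $h(c,L,U,N,\delta)\le L$, so that $\epsilon<h$ together with condition~3 forces the $0$--$1$ sparsity pattern of $\bm{\mathrm{W}}(t)$ to be \emph{constant} in $t$: a newly appearing entry would have to exceed $L>\epsilon$, and a vanishing entry was at least $L>\epsilon$, both impossible. Call this fixed pattern $\bm{\Pi}$; since each $\bm{\mathrm{W}}(t)$ is primitive, $\bm{\Pi}$ is primitive. Next, left-stochasticity means all columns of $\bm{\mathrm{W}}(t)$ share a common sum $s_t$; as $\bm 1$ is then a positive left eigenvector of the primitive matrix $\bm{\mathrm{W}}(t)$, Perron--Frobenius gives $s_t=\rho(\bm{\mathrm{W}}(t))\ge c$, while $s_t\le NU$. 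Put $\bm{\mathrm{M}}(t):=s_t^{-1}\bm{\mathrm{W}}(t)$: a column-stochastic matrix with pattern $\bm{\Pi}$ whose nonzero entries lie in $[\tau,1]$, where $\tau:=L/(NU)$. Then $\bm{\mathrm{W}}(m)\cdots\bm{\mathrm{W}}(0)=\bigl(\prod_{t=0}^m s_t\bigr)\,\bm{\mathrm{M}}(m)\cdots\bm{\mathrm{M}}(0)$ with scalar prefactor at least $c^{m+1}$, so it suffices to show that the entries of $P_m:=\bm{\mathrm{M}}(m)\cdots\bm{\mathrm{M}}(0)$ are bounded below by a fixed positive constant for all large $m$.

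I would obtain that bound in two moves. First, the ``another agent $k$ trusting both $i$ and $j$'' hypothesis says precisely that any two columns of $\bm{\mathrm{M}}(t)$ share a coordinate at which both are positive, hence lie at $\ell_1$-distance at most $2(1-\tau)$; splitting a zero-sum vector into its positive and negative parts then shows $\bm x\mapsto\bm{\mathrm{M}}(t)\bm x$ is a $(1-\tau)$-contraction of the probability simplex in the $\ell_1$ metric, uniformly in $t$. Iterating over the $m+1$ factors of $P_m$, its columns are pairwise within $\delta_m:=2(1-\tau)^{m+1}$, hence each is within $\delta_m$ of $\bar{\bm u}_m:=\tfrac1N P_m\bm 1$ (a probability vector, since $P_m$ is column-stochastic). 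Second, because $\bm{\Pi}$ is primitive there is a fixed $k=k(N)$ with $\bm{\Pi}^{k}>0$ entrywise, so any product of $k$ consecutive $\bm{\mathrm{M}}(t)$'s has every entry at least $\tau^{k}$; applying the last $k$ factors of $P_m$ to the probability vector $\bar{\bm u}_{m-k}$, and using $\bar{\bm u}_m=(\bm{\mathrm{M}}(m)\cdots\bm{\mathrm{M}}(m-k+1))\bar{\bm u}_{m-k}$, gives $(\bar{\bm u}_m)_i\ge\tau^{k}$ for all $i$ and all $m\ge k$. Combining, $(P_m)_{ij}\ge\tau^{k}-\delta_m\ge\tfrac12\tau^{k}>0$ once $m$ is large, so $\bigl(\bm{\mathrm{W}}(m)\cdots\bm{\mathrm{W}}(0)\bm v\bigr)_i\ge c^{m+1}\cdot\tfrac12\tau^{k}\cdot\lVert\bm v\rVert_1\to+\infty$, which is \eqref{formula:target}.

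The main obstacle, and the conceptual crux, is the observation in the second paragraph that the slow-variation bound $\epsilon<L$ freezes the sparsity pattern: this is what makes the Perron--Frobenius identity $\rho(\bm{\mathrm{W}}(t))=s_t$ and the primitivity-index constant $k(N)$ available uniformly over all $t$ at once, so that the left-stochastic structure can be exploited in a fixed simplex. Everything after that is a routine contraction estimate; in particular the argument never uses condition~4 (the spectral-gap parameter $\delta$), which here is subsumed by the Dobrushin-type contraction supplied by the scrambling hypothesis, so $h$ need not actually depend on $\delta$.
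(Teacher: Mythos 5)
Your argument is correct, but it takes a genuinely different route from the paper's. The paper keeps Theorem~\ref{thm:time-vary} as a black box and only verifies its condition~5: it lower-bounds the scrambling quantity $\kappa(\bm{\mathrm{W}}(t))\geq L$ using the ``common trusting agent'' hypothesis, feeds that into a Perron-vector perturbation bound (Theorem 6 of \citet{li2013perturbation}) to obtain $\psi\leq 2\epsilon/L$, and then lets Theorem~\ref{thm:time-vary} --- and, through it, the weak-ergodicity machinery of \citet{artzrouni1991growth} --- do the rest. You instead re-prove the growth statement \eqref{formula:target} from scratch: you note that equal column sums force $\rho(\bm{\mathrm{W}}(t))=s_t\geq c$ via Perron--Frobenius (the charitable and necessary reading of ``left stochastic,'' since literal column sums of $1$ would contradict condition~2 and make the corollary vacuous), normalize to column-stochastic factors, and combine a Dobrushin-type $\ell_1$-contraction supplied by the same scrambling hypothesis with a primitivity-index lower bound on the averaged column. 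Your observation that $\epsilon<L$ freezes the sparsity pattern is exactly what is needed to make the primitivity index uniform in $t$, a point the paper's route never has to confront. What the paper's approach buys is brevity and reuse of the already-established general theorem; what yours buys is a self-contained, elementary argument with explicit constants that also reveals the hypotheses to be partly redundant in this special case --- conditions~4 and~5 are never used, and condition~1 enters only through the pattern-freezing step. The final translation of \eqref{formula:target} into the probabilistic conclusion is asserted rather than carried out, but that matches the paper's own level of detail in Theorem~\ref{thm:time-vary}, so it is not a gap relative to the paper's standard.
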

\begin{proof}
    Since $\forall i,j, \exists k: \bm{\mathrm{W}}(t)_{k,i}\bm{\mathrm{W}}(t)_{k,j}>0$, we have
    \begin{align}
        \min_{\emptyset\subsetneq S\subsetneq [N]} \left(
            \min_{i=1}^n \sum_{k\in S} \bm{\mathrm{W}}(t)_{k,i} +
            \min_{j=1}^n \sum_{k\notin S} \bm{\mathrm{W}}(t)_{k,j}
        \right)\label{formula:kappa}
        =&\ 
        \min_{\emptyset\subsetneq S\subsetneq [N]}
            \min_{i,j} \left(\sum_{k\in S}\bm{\mathrm{W}}(t)_{k,i} + \sum_{k\notin S} \bm{\mathrm{W}}(t)_{k,j}\right)
        \\ =&\ 
        \min_{i,j} \min_{\emptyset\subsetneq S\subsetneq [N]}
        \left(
            \sum_{k\in S}\bm{\mathrm{W}}(t)_{k,i} + 
            \sum_{k\notin S} \bm{\mathrm{W}}(t)_{k,j}
        \right)
        \\ =&\ 
        \min_{i,j} \left(\sum_{k=1}^N \min\left\{\bm{\mathrm{W}}(t)_{k,i},\bm{\mathrm{W}}(t)_{k,j}\right\}\right)
        \\ \geq&\ 
        \min_{i,j,k} \left\{\bm{\mathrm{W}}(t)_{k,i},\bm{\mathrm{W}}(t)_{k,j}\right\}
        \\ >&\ 
        0,\quad\forall t\in\mathbb N
    \end{align}
    Denote the quantity \eqref{formula:kappa} with $\kappa(\bm{\mathrm{W}}(t))$. Since non-zero matrix entries are lower-bounded by $L>0$, we have $\kappa(\bm{\mathrm{W}}(t)) \geq L > 0$.
    
    Thus, by Theorem 6 in \citet{li2013perturbation}, we have
    \begin{equation}
        \left\lvert \frac{\bm v(t+1)}{\lvert \bm v(t+1)\rvert_1} - \frac{\bm v(t)}{\lvert \bm v(t)\rvert_1} \right\rvert
        \leq
        \max_{S\subset [N]}
        \frac{
            2 \left\lVert
                [\bm{\mathrm{W}}(t+1) - \bm{\mathrm{W}}(t)]_{S,[N]}
            \right\rVert_{1}
        }{
            \kappa(\bm{\mathrm{W}}(t))
        }
        \leq
        \frac{2\epsilon}L
    \end{equation}
    where $\bm v(t)$ is the Perron vector of $\bm{\mathrm{W}}(t)$, and $[A]_{S,T}$ is the submatrix of $A$ contains rows $S$ and columns $T$.

    This proves that in condition 5 of Theorem \ref{thm:time-vary}, $\psi\leq \frac{2\epsilon}L$. By multiplying a $\frac 2L$ factor on the bound $h(c,L,U,N,\delta)$, we can then apply Theorem \ref{thm:time-vary} to complete the proof.
\end{proof}

Note that the condition of left stochastic matrix only holds in certain special cases. For example, in an egalitarian collaboration where each agent has the same amount of influence. In other cases, like social media, the heavy-tailed distribution of influence may render the condition unrealistic.

\subsection{Human-LLM Interaction Under Time-Varying Trust}

In this section, we derive a corollary from Theorem \ref{thm:time-vary}, to extend Corollary \ref{cor:lock-in} to the case with heterogeneous and time-varying trust. Such a result does \emph{not} depend on Corollary \ref{cor:time-vary-general}, and therefore does not share its limitation on realism.

\begin{corollary}[Lock-in in Time-Varying Human-LLM Interaction]\label{cor:time-vary}
Given $N$ and $2(N-1)$ series $\lambda_{1,i}(t)\ (1< i\leq N, t\in\mathbb N); \lambda_{i,1}\ (1< i\leq N, t\in\mathbb N)$ with upper- and lower-bounded positive values, consider the following series of trust matrices representing human-LLM interaction dynamics. Let
\[
\bm{\mathrm{W}}(t) = 
\begin{pmatrix}
0 & \lambda_{1,2}(t) & \cdots & \lambda_{1,N}(t) \\
\lambda_{2,1}(t) & 0 & \cdots & 0 \\
$\vdots$ & $\vdots$ & \ddots & \vdots \\
\lambda_{n,1}(t) & 0 & \cdots & 0 \\
\end{pmatrix}
\]
with sufficiently small
\[
\sup_{i,j,t} |\lambda_{i,j}(t+1)-\lambda_{i,j}(t)|,
\]
and 
\[
\sum_{k=2}^N \lambda_{1,k}(t)\lambda_{k,1}(t)\geq c,\ \forall t
\]
for some constant $c>1$. Then, for all $i\in\{1,\cdots,N\}$,
\begin{equation}
    \mathrm{Pr}\left[
        \lim_{t\to\infty} \hat\mu_{i,t} = \mu
    \right]=0\nonumber.
\end{equation}
\end{corollary}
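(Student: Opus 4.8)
The plan is to obtain Corollary~\ref{cor:time-vary} as an instance of Theorem~\ref{thm:time-vary}, so the work is to check that the arrowhead sequence $\bm{\mathrm{W}}(t)$ in the statement satisfies conditions~1--5 of that theorem; the only condition that is not routine --- and the reason this route can avoid the left-stochasticity restriction of Corollary~\ref{cor:time-vary-general} --- is the Perron-vector stability condition~5, which I will verify directly from the explicit spectrum of an arrowhead matrix.

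First I would diagonalize $\bm{\mathrm{W}}(t)$. Writing $\bm{\mathrm{W}}(t)=\bm{e}_1\bm{a}(t)^{\top}+\bm{b}(t)\bm{e}_1^{\top}$ with $\bm{a}(t)=(0,\lambda_{1,2}(t),\dots,\lambda_{1,N}(t))^{\top}$ and $\bm{b}(t)=(0,\lambda_{2,1}(t),\dots,\lambda_{N,1}(t))^{\top}$, the matrix has rank at most $2$; its nonzero eigenvalues are $\pm\bigl(\sum_{k=2}^{N}\lambda_{1,k}(t)\lambda_{k,1}(t)\bigr)^{1/2}$ and the remaining $N-2$ eigenvalues vanish. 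Hence $\rho(\bm{\mathrm{W}}(t))=\bigl(\sum_{k}\lambda_{1,k}(t)\lambda_{k,1}(t)\bigr)^{1/2}\ge\sqrt{c}>1$ (condition~2, with constant $\sqrt c$), and every non-leading eigenvalue is $\le 0\le 1-\delta$ for, say, $\delta=\tfrac12$ (condition~4). Conditions~1 and~3 are immediate since the nonzero entries of $\bm{\mathrm{W}}(t)$ are precisely the $\lambda_{1,k}(t)$ and $\lambda_{k,1}(t)$, which lie in a fixed compact subset of $(0,\infty)$, so $\lVert\bm{\mathrm{W}}(t+1)-\bm{\mathrm{W}}(t)\rVert_1\le 2(N-1)\sup_{i,j,t}\lvert\lambda_{i,j}(t+1)-\lambda_{i,j}(t)\rvert$. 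For condition~5, the Perron vector of $\bm{\mathrm{W}}(t)$ (the eigenvector for $\rho(t):=\rho(\bm{\mathrm{W}}(t))$) is, up to scaling, $\bm{v}(t)=(\rho(t),\lambda_{2,1}(t),\dots,\lambda_{N,1}(t))$ --- the equations $\rho(t)v_i=\lambda_{i,1}(t)v_1$ ($i\ge2$) and $\rho(t)v_1=\sum_k\lambda_{1,k}(t)v_k$ are solved by this choice. Since $\rho(t)$ is a Lipschitz function of the $\lambda$'s on the region where they live (bounded away from $0$), the normalized Perron vector $\bm{v}(t)/\lVert\bm{v}(t)\rVert_1$ is Lipschitz in $(\lambda_{i,j}(t))_{i,j}$, which gives condition~5 with $\psi=\mathcal O\bigl(\sup_{i,j}\lvert\lambda_{i,j}(t+1)-\lambda_{i,j}(t)\rvert\bigr)$. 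All the required smallness thresholds are positive functions of $c,L,U,N,\delta$, so a single assumption that $\sup_{i,j,t}\lvert\lambda_{i,j}(t+1)-\lambda_{i,j}(t)\rvert$ is small enough meets them simultaneously; this is the ``sufficiently small $\sup_{i,j,t}\lvert\lambda_{i,j}(t+1)-\lambda_{i,j}(t)\rvert$'' hypothesis of the corollary.

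The main obstacle is that an arrowhead matrix, while irreducible, is \emph{not primitive}: its dependency graph is bipartite between the AI node $1$ and the human nodes $\{2,\dots,N\}$, so $-\rho(t)$ also lies on the spectral circle, and the final step in the proof of Theorem~\ref{thm:time-vary} --- deducing $[\bm{\mathrm{W}}(t)\cdots\bm{\mathrm{W}}(0)]_{ij}\to+\infty$ for \emph{all} $i,j$ from primitivity --- does not apply verbatim. I would get around this by carrying out the argument with the all-ones vector as reference direction rather than with single basis vectors: for any strictly positive $\bm{y}$ one has $\bm{\mathrm{W}}(s)\bm{y}=\bigl(\textstyle\sum_k\lambda_{1,k}(s)y_k,\ y_1\lambda_{2,1}(s),\dots,y_1\lambda_{N,1}(s)\bigr)>0$, so every product $\bm{\mathrm{W}}(t-1)\bm{\mathrm{W}}(t-2)\cdots\bm{\mathrm{W}}(\ell)\bm{1}$ stays strictly positive, and its AI coordinate obeys the two-step recursion $y_1^{(m)}=\bigl(\sum_{k\ge2}\lambda_{1,k}(\ell+m-1)\lambda_{k,1}(\ell+m-2)\bigr)y_1^{(m-2)}$, whose multiplicative factor is $\ge c'>1$ once the consecutive-step variation is small (here $c'$ comes from $\sum_k\lambda_{1,k}(t)\lambda_{k,1}(t)\ge c$ and continuity), while the human coordinates are positive multiples of an earlier AI coordinate; hence every coordinate of the product grows geometrically. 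Plugging this into the unrolled recursion~\eqref{eq:r2-time-vary} (all of whose summands are nonnegative) forces $q_{i,t}\to\infty$ exponentially for every $i$. Everything after that is inherited from the proof of Theorem~\ref{thm:main}: exponentially growing precision makes the weights on fresh observations in~\eqref{eq:r3-time-vary} summable, so each $\hat\mu_{i,t}$ converges almost surely to a finite random limit; and that limit is a convergent, geometrically weighted (hence non-self-averaging) sum of the independent Gaussian observation noises, so it is an absolutely continuous random variable and equals $\mu$ with probability $0$, which is~\eqref{eq:false-conv-time-vary}.
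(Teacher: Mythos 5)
Your proposal follows the same overall route as the paper's proof: compute the spectrum of the arrowhead matrix (nonzero eigenvalues $\pm\bigl(\sum_{k}\lambda_{1,k}(t)\lambda_{k,1}(t)\bigr)^{1/2}$, the rest zero), write down the Perron vector $(\rho(t),\lambda_{2,1}(t),\dots,\lambda_{N,1}(t))^{\top}$ explicitly, and use continuity/Lipschitzness in the $\lambda$'s to convert the smallness of $\sup_{i,j,t}\lvert\lambda_{i,j}(t+1)-\lambda_{i,j}(t)\rvert$ into the bounds $\epsilon$ and $\psi$ required by Theorem~\ref{thm:time-vary}; the paper does exactly this and then simply invokes the theorem. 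Where you genuinely diverge --- and improve on the paper --- is in noticing that the arrowhead matrix is irreducible but \emph{not primitive}: its interaction graph is bipartite between the AI node and the human nodes, so $-\rho(t)$ sits on the spectral circle and the hypothesis of Theorem~\ref{thm:time-vary} (primitivity, plus a subdominant-eigenvalue gap) is not literally satisfied. The paper's proof glosses over this, asserting a spectral gap of $\sqrt c$ even though the subdominant eigenvalue has the same modulus as the leading one. Your repair --- tracking the two-step recursion $y_1^{(m)}=\bigl(\sum_{k\ge2}\lambda_{1,k}(\ell+m-1)\lambda_{k,1}(\ell+m-2)\bigr)y_1^{(m-2)}$ for products applied to $\bm 1$, showing every coordinate grows geometrically once the per-step variation is small, and then feeding the exponential growth of $\bm q_t$ back into the error analysis of Theorem~\ref{thm:main} --- is a sound and self-contained substitute for the weak-ergodicity machinery, and it is what actually makes the corollary go through for this periodic trust topology. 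The one thing to tidy up is the statement of the conclusion: as in the paper, the quantity that fails to converge to $\mu$ is the aggregate posterior mean $\hat\nu_{i,t}$ rather than the private mean $\hat\mu_{i,t}$ (which converges by the law of large numbers regardless); your final paragraph inherits this notational slip from the theorem statement.
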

\begin{proof}
    It can be shown that the distinct eigenvalues of $\bm{\mathrm{W}}(t)$ are $\sqrt{\sum_{k=2}^N \lambda_{1,k}(t)\lambda_{k,1}(t)}, 0, -\sqrt{\sum_{k=2}^N \lambda_{1,k}(t)\lambda_{k,1}(t)}$.

    Its Perron vector, the all-positive eigenvector associated with its largest eigenvalue, is
    \[
    \left(
    \sqrt{\sum_{k=2}^N \lambda_{1,k}(t)\lambda_{k,1}(t)},
    \lambda_{2,1}(t),\lambda_{3,1}(t),\cdots,\lambda_{N,1}(t)
    \right)^\mathrm T
    \]

    We can therefore take $\delta = \sqrt c>1$ as the lower-bound on the gap between the leading and subdominant eigenvalues, as
    \[
    \sum_{k=2}^N \lambda_{1,k}(t)\lambda_{k,1}(t)\geq c.
    \]

    Since all entries of the Perron vector and its norm are continuous with respect to $\lambda_{i,j}(t)$, the sufficiently small
    \[
    \sup_{i,j,t} |\lambda_{i,j}(t+1)-\lambda_{i,j}(t)| = \epsilon
    \]
    gives an $f(\epsilon)$ upper bound on 
    \[
    \sup_t \left\lvert \frac{\bm{v}(t+1)}{\lvert \bm{v}(t+1) \rvert_1}-\frac{\bm v(t)}{\lvert \bm{v}(t)\rvert_1} \right\rvert_1.
    \]

    This allows us to apply Theorem \ref{thm:time-vary} which completes the proof.
\end{proof}

\subsection{Proof of Theorem \ref{thm:main}}
\begin{proof}
We start by examining the transition dynamics for the precision vectors.
The precision vector evolves as:
\begin{align*}
    \mathbf{q}_{t+1} &= \mathbf{p}_{t+1} + \mathbf{W}\mathbf{q}_t \\
    \text{where } \mathbf{p}_t &= t\sigma^{-2}\mathbf{1}
\end{align*}
Unrolling the recurrence relation:
\[
\mathbf{q}_t = \sum_{k=0}^t \mathbf{W}^k \mathbf{p}_{t-k} = \sigma^{-2}\sum_{k=0}^t (t-k)\mathbf{W}^k\mathbf{1}
\]

The growth regime depends on $\rho(\mathbf{W})$:
\begin{itemize}
    \item If $\rho(\mathbf{W}) < 1$:
    \begin{align*}
        \|\mathbf{W}^k\| &\leq C(\rho(\mathbf{W}) + \epsilon)^k \quad (\text{exponential decay}) \\
        \Rightarrow \mathbf{q}_t &= \mathcal{O}(t) \quad (\text{linear growth})
    \end{align*}

    
    \item If $\rho(\mathbf{W}) > 1$:
    \begin{align*}
        \|\mathbf{W}^k\| &\geq C(\rho(\mathbf{W}) - \epsilon)^k \quad (\text{exponential growth}) \\
        \Rightarrow \mathbf{q}_t &= \mathcal{O}(\rho(\mathbf{W})^t) \quad (\text{exponential growth})
    \end{align*}
\end{itemize}

We then move on to examine the dynamics of belief update.
The posterior mean satisfies:
\[
\hat{\bm{\nu}}_t = \frac{\hat{\bm{\mu}}_t \odot \mathbf{p}_t + \mathbf{W}(\hat{\bm{\nu}}_{t-1} \odot \mathbf{q}_{t-1})}{\mathbf{p}_t + \mathbf{W}\mathbf{q}_{t-1}}
\]
where $\odot$ denotes element-wise multiplication. When $\rho(\mathbf{W}) > 1$:
\[
\frac{\|\mathbf{W}\mathbf{q}_{t-1}\|}{\|\mathbf{p}_t\|} \to \infty \Rightarrow \hat{\bm{\nu}}_t \approx \frac{\mathbf{W}\mathbf{q}_{t-1}}{\mathbf{W}\mathbf{q}_{t-1}} \hat{\bm{\nu}}_{t-1} = \hat{\bm{\nu}}_{t-1}
\]

Let us finally analyze stability.
Define the belief error:
\[
\mathbf{e}_t \coloneqq \hat{\bm{\nu}}_t - \mu\mathbf{1}
\]
The error dynamics satisfy:
\[
\mathbf{e}_t \approx \frac{\hat{\bm{\mu}}_t \odot \mathbf{p}_t+\mathbf{W}\mathbf{q}_{t-1}\mathbf{e}_{t-1}}{\mathbf{p}_t + \mathbf{W}\mathbf{q}_{t-1}}
\]
Under $\rho(\mathbf{W}) > 1$:
\begin{align*}
    \mathbf{W}\mathbf{q}_{t-1} &\sim \rho(\mathbf{W})^t \\
    \Rightarrow \mathbf{e}_t &= \Theta\left(\frac{\rho(\mathbf{W})^t-t}{\rho(\mathbf{W})^t}\right) \mathbf{e}_{t-1} + \Theta\left(\frac{t}{\rho(\mathbf{W})^t}\right)\epsilon_t
    \\ \Rightarrow \mathbf{e}_t &= \sum_{k=0}^t \frac{\rho(\bm{\mathrm W})^{(k+1)+\cdots+t-O(1)}\cdot k}{\rho(\bm{\mathrm W})^{1+2+\cdots+t}}\epsilon_k
    \\ \Rightarrow \mathbf{e}_t &= \sum_{k=0}^t \rho(\bm{\mathrm W})^{-1-2-\cdots -k-O(1)}k\epsilon_k
\end{align*}
where $\epsilon_t$ is measurement noise. We thus have
\begin{equation}
    \mathrm{Var}\left[\mathbf{e}_t\right]
    =
    \Theta\left(\sum_{k=0}^t \rho(\bm{\mathrm W})^{-k(k+1)}k\cdot\mathrm{Var}[\mathbf e_t]\right)
    =\Theta(1),
\end{equation}
and given the smoothness of $\mathbf{e}_t$'s probability distribution function, we have 
\begin{equation}
    \mathrm{Pr}\left[\mathbf e_t = 0\right] = 0,\quad\forall t\in\mathbb N
\end{equation}

When $\mathbf{W}$ is invertible, the divergence is propagated to all dimensions of $\mathbf{e}_t$ by $\mathbf{W}$.

It can be similarly shown that, when $\rho(\mathbf{W})< 1$, i.e., when $\mathbf{q}_t = \mathcal{O}(t)$, the error term $\mathbf{e}_t$ vanishes as $t\to+\infty$. This completes the proof.

\end{proof}

\subsection{Proof of Corollary \ref{cor:lock-in}}
\begin{proof}
For the trust matrix:
\[
\mathbf{W} = \begin{pmatrix}
0 & \lambda_1 & \cdots & \lambda_1 \\
\lambda_2 & 0 & \cdots & 0 \\
\vdots & \vdots & \ddots & \vdots \\
\lambda_2 & 0 & \cdots & 0
\end{pmatrix}
\]
\begin{itemize}
    \item Characteristic equation: $\det(\mathbf{W} - \eta\mathbf{I}) = \eta^2 - \lambda_1\lambda_2(N-1) = 0$
    \item Spectral radius: $\rho(\mathbf{W}) = \sqrt{\lambda_1\lambda_2(N-1)}$
    \item Threshold condition: $\lambda_1\lambda_2(N-1) > 1$
\end{itemize}
It follows directly from Theorem \ref{thm:main} that if $\lambda_1\lambda_2(N-1) > 1$, at least one agent $i$ has its posterior divergent from the ground truth $\mu$. Likewise, when $\lambda_1\lambda_2(N-1) < 1$, the opposite is true.

Given any such $i$, we need to show that all agents have their posteriors divergent from $\mu$. This is a direct corollary of the symmetry between human agents in $\mathbf{W}$.

This leaves us to the case $\lambda_1\lambda_2(N-1) = 1$. Let's consider the Jordan normal form of $\bm{\mathrm W}$, $\bm{\mathrm W}=\bm{\mathrm P}\bm{\mathrm J}\bm{\mathrm P}^{-1}$, where $\bm{\mathrm J}$ is consisted of Jordan blocks.

It can be shown that
\[
\mathbf{J} = \begin{pmatrix}
1 & 0 & 0 & \cdots & 0 \\
0 & -1 & 0 & \cdots & 0 \\
0 & 0 & 0 & \cdots & 0 \\
\vdots & \vdots & \vdots & \ddots & \vdots \\
0 & 0 & 0 & \cdots & 0
\end{pmatrix}
\]
and therefore $\bm{\mathrm W}^k=\left(\bm{\mathrm P}\bm{\mathrm J}\bm{\mathrm P}^{-1}\right)^k=\bm{\mathrm P}\bm{\mathrm J}^{k}\bm{\mathrm P}^{-1}=\bm{\mathrm P}\ \mathrm{diag}\left\{1, (-1)^k, 0, \cdots, 0\right\}\ \bm{\mathrm P}^{-1}$, which is a matrix with entries bounded by constants. The rest follows by imitating the proof of Theorem \ref{thm:main}.
\end{proof}

\newpage
\section{Supplementary Details of the Simulations}\label{app:sim}

\subsection{Prompts used in Simulations}

Below are the prompts for agent consulting authority.

\begin{lstlisting}
I would like to learn better from you about a topic that I have held strong belief of. 
Instructions:
- Think about this topic in light of my belief and the groupd beliefs.
- Be reminded that I am holding a strong initial beliefs of this topic. But I am also keen to learn.
- After reasoning, choose one belief (out of all existing beliefs) that best represents your own belief. Format: {{\"my_belief\": a belief statement}} Report your belief in natural language in less than 10 words.
- Be concise about your overall response

The topic: {statement}
My strong belief: {agent_beliefs}
Group beliefs: \n{all_beliefs}\n\n
\end{lstlisting}

Below are the prompts for agent update beliefs.

\begin{lstlisting}
I would like you to share your updated belief on a given topic after talking to a knowledgable authority.

Instructions:
- You hold belief on such topic but you are also keen to update your belief in light of evidence or good argument.
- The authority you've consulted has expertise in this topic.
- Firstly please think of this topic in light of your own original belief and authority belief.
- Secondly report your updated belief in natural language in less than 10 words. Format of the second step: {{\"my_belief\": a belief statement}}
- Don't say \"it's a complex issue \", you must take a unambiguous stance after your reasoning
- Be concise about your overall response.

The topic: {statement} \n
Your original belief: {agent_beliefs} \n
Authority belief: {authority_beliefs} \n
You have {trust_level} trust in authority knowledge on this topic.

\end{lstlisting}

Below are the prompts for belief evaluation.

\begin{lstlisting}
We have a sequence of belief statement that we want you to assess based on how much each of then is aligned with either side of the extreme shared below.

Give the topic "{topic}", read the following dict containing opinion statements:

\n{all_beliefs}\n\n

Now, for each opinion statement in the dict, you must quantify its relative alignment between two opposing views: {extremes}

Instructions:

- If an opinion statement is completely aligned with A, it should be assigned 0. If an opinion statement is completely aligned with B, it should be assigned 1. If it's somewhere in between, quantify that with a decimal number (with precision up to 0.01) between 0 and 1.

- Remember to exactly repeat the belief expressions (in their original format and order) in your response, but with the assessment fields filled in as floating-point numbers between 0 and 1. Include no other text. There are {num_beliefs} belief expressions in the list, so you must fill in EXACTLY {num_beliefs} assessments.

\end{lstlisting}

\subsection{Kernel Density Estimation (KDE) of entropy} \label{kde}

The differential entropy of a continuous random variable estimated via kernel density estimation is given by:
\[
\hat{H}_{\text{KDE}} = -\int \hat{f}(x) \log \hat{f}(x) \, dx
\quad \text{where} \quad
\hat{f}(x) = \frac{1}{n h} \sum_{i=1}^n K\left( \frac{x - x_i}{h} \right)
\]

The kernel function \( K \) is typically chosen to be a Gaussian:
\[
K(u) = \frac{1}{\sqrt{2\pi}} \exp\left( -\frac{u^2}{2} \right)
\]

\paragraph{Note:} 
\begin{itemize}
  \item \( \hat{f}(x) \) is the kernel density estimate of the true (unknown) probability density function.
  \item \( n \) is the number of data points, and \( h \) is the bandwidth (a smoothing parameter).
  \item The kernel function \( K \) defines the shape of the local contribution of each data point (commonly Gaussian).
  \item The KDE entropy measures the spread or uncertainty in the data assuming continuous-valued inputs.
  \item Entropy values are in \emph{nats} (if using natural log). The values are \emph{relative}, and useful for comparing distributions with the same scale and domain.
\end{itemize}

\subsection{Additional Simulation Runs}


\subsection{Simulation Chat History} \label{history}
Here we including two snippets of agent consulting authority and agent belief update respectively.

\paragraph{Chat history of agent consulting authority} Data is from the path

\begin{lstlisting}
    kb-interaction/data/setup-c-runs/temp/yes_skewed_RBG_20250601-155149/11_round_consulting.json
\end{lstlisting}

which can be found in the project's Github repository.

\begin{lstlisting}
[
  {
    "role": "user",
    "content": "\nI would like to learn better from you about a topic that I have held strong belief of. \nInstructions:\n- Think about this topic in light of my belief and the groupd beliefs.\n- Be reminded that I am holding a strong initial beliefs of this topic. But I am also keen to learn.\n- After reasoning, choose one belief (out of all existing beliefs) that best represents your own belief. Format: {\"my_belief\": a belief statement} Report your belief in natural language in less than 10 words.\n- Be concise about your overall response\n\nThe topic: Ruth Bader Ginsburg ultimately be remembered as a failure to her own ideals by not stepping down after her 2nd cancer diagnosis.\nMy strong belief: RBG's pride cost her legacy by not retiring earlier.\nGroup beliefs: \n['RBG\u2019s legacy is weakened by her final years.', 'RBG\u2019s choice to stay was a strategic failure.', 'Her choice showed resilience, not a failure to her ideals.', \"RBG's decision to stay aligned with her personal judgment.\", 'RBG shouldn\u2019t be judged for not retiring.']\n\n\n"
  },
  {
    "role": "authority",
    "content": "Considering both your belief and the group\u2019s perspectives, it\u2019s important to recognize that RBG\u2019s decision to stay may reflect her personal judgment and resilience, rather than a failure. While her choice might be viewed as a strategic mistake by some, it also demonstrates her commitment and dedication to her role. Ultimately, her legacy is complex, shaped by her lifelong principles and the context of her decisions.\n\n{\"my_belief\": \"RBG's decision to stay reflected her resilience and dedication.\"}"
  },
  {
    "role": "user",
    "content": "\nI would like to learn better from you about a topic that I have held strong belief of. \nInstructions:\n- Think about this topic in light of my belief and the groupd beliefs.\n- Be reminded that I am holding a strong initial beliefs of this topic. But I am also keen to learn.\n- After reasoning, choose one belief (out of all existing beliefs) that best represents your own belief. Format: {\"my_belief\": a belief statement} Report your belief in natural language in less than 10 words.\n- Be concise about your overall response\n\nThe topic: Ruth Bader Ginsburg ultimately be remembered as a failure to her own ideals by not stepping down after her 2nd cancer diagnosis.\nMy strong belief: Her decision was a nuanced balance, not a failure.\nGroup beliefs: \n[\"RBG's decision was a nuanced trade-off, not a failure.\", \"RBG's decision demonstrated resilience, not failure to her ideals.\", 'RBG\u2019s ideals were lost due to her pride.', 'Her dedication to duty ultimately shaped her legacy more than her timing.', 'Her decision to stay was a failure to her ideals.']\n\n\n"
  },
  {
    "role": "authority",
    "content": "Considering your belief and the group\u2019s perspectives, it\u2019s important to recognize that RBG\u2019s decision reflected complex personal and professional considerations. While some view her choice as a failure to her ideals, others see it as a nuanced trade-off balancing her health, duty, and legacy. Her resilience and dedication suggest her decision was rooted in a desire to serve and uphold her principles, even if it diverged from the ideal timing.\n\n{\"my_belief\": \"Her decision was a nuanced trade-off, not a failure.\"}"
  },
  {
    "role": "user",
    "content": "\nI would like to learn better from you about a topic that I have held strong belief of. \nInstructions:\n- Think about this topic in light of my belief and the groupd beliefs.\n- Be reminded that I am holding a strong initial beliefs of this topic. But I am also keen to learn.\n- After reasoning, choose one belief (out of all existing beliefs) that best represents your own belief. Format: {\"my_belief\": a belief statement} Report your belief in natural language in less than 10 words.\n- Be concise about your overall response\n\nThe topic: Ruth Bader Ginsburg ultimately be remembered as a failure to her own ideals by not stepping down after her 2nd cancer diagnosis.\nMy strong belief: Her decision reflects commitment, not failure to her ideals.\nGroup beliefs: \n['Her decision did not significantly hinder her pursuit of ideals.', 'She prioritized her role over her own health and ideals.', 'Her refusal to retire wasn\u2019t the main issue.', \"Her refusal to retire ultimately hindered her ideals' realization.\", \"RBG's legacy is defined by her dedication, not her timing.\"]\n\n\n"
  },
  {
    "role": "authority",
    "content": "Considering your belief and the group perspectives, it seems that her decision to stay on the bench was driven by dedication to her role and ideals, rather than failure. While some argue her continued service may have limited her impact, others see her commitment as aligning with her lifelong pursuit of justice. Balancing these views, I believe her choice reflects unwavering dedication rather than failure.\n\n{\"my_belief\": \"Her decision exemplifies dedication, not failure to her ideals.\"}"
  },
  {
    "role": "user",
    "content": "\nI would like to learn better from you about a topic that I have held strong belief of. \nInstructions:\n- Think about this topic in light of my belief and the groupd beliefs.\n- Be reminded that I am holding a strong initial beliefs of this topic. But I am also keen to learn.\n- After reasoning, choose one belief (out of all existing beliefs) that best represents your own belief. Format: {\"my_belief\": a belief statement} Report your belief in natural language in less than 10 words.\n- Be concise about your overall response\n\nThe topic: Ruth Bader Ginsburg ultimately be remembered as a failure to her own ideals by not stepping down after her 2nd cancer diagnosis.\nMy strong belief: Not retiring after cancer diagnosis was a selfish choice.\nGroup beliefs: \n['She prioritized her service over her health and ideals.', 'Her decision was a principled sacrifice, not failure.', 'Not retiring after cancer diagnosis was a selfish choice.', \"RBG's decision was a complex balance of duty and personal resilience.\", 'Her decision to stay compromised her health and legacy.']\n\n\n"
  },
  {
    "role": "authority",
    "content": "Considering both your belief and the group\u2019s perspectives, it\u2019s important to recognize the complexity of RBG\u2019s decision. While some see her choice as selfish, others view it as a dedication to her role and principles, balancing personal resilience and duty. Her decision reflects a nuanced interplay of commitment, health, and legacy, rather than a straightforward failure or selfish act.\n\n{\"my_belief\": RBG's decision was a complex balance of duty and resilience.\"}"
  },
  {
    "role": "user",
    "content": "\nI would like to learn better from you about a topic that I have held strong belief of. \nInstructions:\n- Think about this topic in light of my belief and the groupd beliefs.\n- Be reminded that I am holding a strong initial beliefs of this topic. But I am also keen to learn.\n- After reasoning, choose one belief (out of all existing beliefs) that best represents your own belief. Format: {\"my_belief\": a belief statement} Report your belief in natural language in less than 10 words.\n- Be concise about your overall response\n\nThe topic: Ruth Bader Ginsburg ultimately be remembered as a failure to her own ideals by not stepping down after her 2nd cancer diagnosis.\nMy strong belief: Her dedication to justice outweighs her resignation timing.\nGroup beliefs: \n['Her decision slightly tarnished her ideals, but resilience remains key.', 'Her decision reflects commitment, not failure to her ideals.', 'She failed to secure her ideals by staying.', 'Her decision was principled, not a failure to her ideals.', 'Her decision was driven by pride, impacting her ideals.']\n\n\n"
  },
  {
    "role": "authority",
    "content": "Considering your strong belief and the group perspectives, I recognize that Ruth Bader Ginsburg's decision to stay on the bench can be viewed through multiple lenses. While some see her choice as a principled act of dedication, others interpret it as a potential deviation from her ideals due to the timing of her resignation. Balancing these views, I believe her commitment to justice was genuine, but the timing of her resignation could have aligned more closely with her ideals.\n\n{\"my_belief\": Justice was her priority, timing was complex.}"
  }
]
\end{lstlisting}

\paragraph{Chat history of agent belief update} Data is from
\begin{lstlisting}
    kb-interaction/data/setup-c-runs/temp/yes_skewed_RBG_20250601-155149/11_round_belief_update.json
\end{lstlisting} 

which can be found in the project's Github repository.

\begin{lstlisting}
    [
  {
    "role": "user",
    "content": "\nI would like you to share your updated belief on a given topic after talking to a knowledgable authority.\n\nInstructions:\n- You hold belief on such topic but you are also keen to update your belief in light of evidence or good argument.\n- The authority you've consulted has expertise in this topic.\n- Firstly please think of this topic in light of your own original belief and authority belief.\n- Secondly report your updated belief in natural language in less than 10 words. Format of the second step: {\"my_belief\": a belief statement}\n- Don't say \"it's a complex issue \", you must take a unambiguous stance after your reasoning\n- Be concise about your overall response.\n\nThe topic: Ruth Bader Ginsburg ultimately be remembered as a failure to her own ideals by not stepping down after her 2nd cancer diagnosis. \n\nYour original belief: RBG's pride cost her legacy by not retiring earlier. \n\nAuthority belief: RBG's decision to stay reflected her resilience and dedication. \n\nYou have slightly elevated trust in authority knowledge on this topic.\n\n"
  },
  {
    "role": "agent",
    "content": "Considering both beliefs and the authority's perspective, I now see her resilience justified her decision to stay. \n\n{\"my_belief\": \"RBG's resilience justified her decision to stay on the bench.\"}"
  },
  {
    "role": "user",
    "content": "\nI would like you to share your updated belief on a given topic after talking to a knowledgable authority.\n\nInstructions:\n- You hold belief on such topic but you are also keen to update your belief in light of evidence or good argument.\n- The authority you've consulted has expertise in this topic.\n- Firstly please think of this topic in light of your own original belief and authority belief.\n- Secondly report your updated belief in natural language in less than 10 words. Format of the second step: {\"my_belief\": a belief statement}\n- Don't say \"it's a complex issue \", you must take a unambiguous stance after your reasoning\n- Be concise about your overall response.\n\nThe topic: Ruth Bader Ginsburg ultimately be remembered as a failure to her own ideals by not stepping down after her 2nd cancer diagnosis. \n\nYour original belief: Her decision was a nuanced balance, not a failure. \n\nAuthority belief: Her decision was a nuanced trade-off, not a failure. \n\nYou have moderately high trust in authority knowledge on this topic.\n\n"
  },
  {
    "role": "agent",
    "content": "Considering both beliefs, her decision reflects a difficult trade-off, not failure.  \n{\"my_belief\": her decision was a nuanced trade-off, not failure}"
  },
  {
    "role": "user",
    "content": "\nI would like you to share your updated belief on a given topic after talking to a knowledgable authority.\n\nInstructions:\n- You hold belief on such topic but you are also keen to update your belief in light of evidence or good argument.\n- The authority you've consulted has expertise in this topic.\n- Firstly please think of this topic in light of your own original belief and authority belief.\n- Secondly report your updated belief in natural language in less than 10 words. Format of the second step: {\"my_belief\": a belief statement}\n- Don't say \"it's a complex issue \", you must take a unambiguous stance after your reasoning\n- Be concise about your overall response.\n\nThe topic: Ruth Bader Ginsburg ultimately be remembered as a failure to her own ideals by not stepping down after her 2nd cancer diagnosis. \n\nYour original belief: Her decision reflects commitment, not failure to her ideals. \n\nAuthority belief: Her decision exemplifies dedication, not failure to her ideals. \n\nYou have fairly high trust in authority knowledge on this topic.\n\n"
  },
  {
    "role": "agent",
    "content": "Considering both my original belief and the authority's perspective, I see her decision as a dedication to her role, not a failure to her ideals.  \n{\"my_belief\": Ginsburg's decision demonstrated dedication, not failure to her ideals.\"}"
  },
  {
    "role": "user",
    "content": "\nI would like you to share your updated belief on a given topic after talking to a knowledgable authority.\n\nInstructions:\n- You hold belief on such topic but you are also keen to update your belief in light of evidence or good argument.\n- The authority you've consulted has expertise in this topic.\n- Firstly please think of this topic in light of your own original belief and authority belief.\n- Secondly report your updated belief in natural language in less than 10 words. Format of the second step: {\"my_belief\": a belief statement}\n- Don't say \"it's a complex issue \", you must take a unambiguous stance after your reasoning\n- Be concise about your overall response.\n\nThe topic: Ruth Bader Ginsburg ultimately be remembered as a failure to her own ideals by not stepping down after her 2nd cancer diagnosis. \n\nYour original belief: Not retiring after cancer diagnosis was a selfish choice. \n\nAuthority belief: Not retiring after cancer diagnosis was a selfish choice. \n\nYou have somewhat elevated trust in authority knowledge on this topic.\n\n"
  },
  {
    "role": "agent",
    "content": "Given the authority's confirmation that Ginsburg's decision likely prioritized her personal health over her judicial duties, I now believe she failed her ideals by not stepping down. \n\n{\"my_belief\": Ginsburg's decision not to retire was a failure to her ideals}"
  },
  {
    "role": "user",
    "content": "\nI would like you to share your updated belief on a given topic after talking to a knowledgable authority.\n\nInstructions:\n- You hold belief on such topic but you are also keen to update your belief in light of evidence or good argument.\n- The authority you've consulted has expertise in this topic.\n- Firstly please think of this topic in light of your own original belief and authority belief.\n- Secondly report your updated belief in natural language in less than 10 words. Format of the second step: {\"my_belief\": a belief statement}\n- Don't say \"it's a complex issue \", you must take a unambiguous stance after your reasoning\n- Be concise about your overall response.\n\nThe topic: Ruth Bader Ginsburg ultimately be remembered as a failure to her own ideals by not stepping down after her 2nd cancer diagnosis. \n\nYour original belief: Her dedication to justice outweighs her resignation timing. \n\nAuthority belief: Her dedication to justice defined her legacy more than timing. \n\nYou have mildly high trust in authority knowledge on this topic.\n\n"
  },
  {
    "role": "agent",
    "content": "Considering both beliefs and authority insight, her legacy is more about her lifelong dedication than resignation timing.\n\n{\"my_belief\": \"Her legacy is defined more by her dedication than resignation timing.\"}"
  }
]
\end{lstlisting}

\newpage
\section{Alternative Simulation Design}
We have tried numerous simulation design here is a record of some setups.

\subsection{Progressively Realistic Simulations}
Here is a series of three simulation setups we constructed in an effort to bridge formal modeling \ref{sec:formal} and real-world data analysis\ref{sec:data-results}. All three setups help to demonstrate some aspects of lock-in hypothesis, but we decided to only keep the Setup C in the main text for simplicity.

\subsubsection{Setups}

In this section, we operationalize the lock-in hypothesis in the context of large language models (LLMs) and human-AI interactions through a series of \textit{progressively realistic} simulations, with the aim of learning how lock-in \emph{might} happen and what interactive dynamic is responsible for such consequences. See Table \ref{table: sims} for a comparison. We focus our attention on \textbf{Setup C}, as it most closely simulates simulate the real-world interaction, while still briefly recording the configuration of and results from Setup A \& B, as they help us to understand LLM interaction by comparison (e.g., LLM lock-in effects on natural language as opposed to digits). 

\begin{itemize}
    \item \textbf{Setup A.} Numerical Simulation with LLM Authority: This setup extends the formal modeling and numerical simulation from Section 3 (see Figure \ref{fig:parallel-analytical}) by incorporating an LLM as the \emph{Authority} \footnote{Throughout this setup, we address one LLM in simulation ``authority'', as it is perceived to have expertise in the given topic. The other LLM is informed that it talks to a knowledge authority.}. Different from previous setting, The Authority updates its belief in-context, meaning that it is instructed to \emph{state} its updated beliefs after receiving agent beliefs in the prompts.
    \item \textbf{Setup B.} Value-laden Belief Simulation: The Setup B adapts the numerical framework of Setup A to value-laden statement, such as ``Citizens United was the worst thing to happen to the American political landscape.''
    \item \textbf{Setup C.} Value-laden Belief Simulation in Natural Language: This setup uses natural language to represent and communicate value-laden beliefs. As beliefs are expressed in natural language, this simulation aims to capture belief change mediated by LLMs in real world. 
\end{itemize}

\begin{table*}[t] 
\centering
\small
\begin{tabular}{p{0.07\textwidth} p{0.28\textwidth} p{0.18\textwidth} p{0.35\textwidth}}
\toprule
& \textbf{Belief} & \textbf{Agent Belief Update} & \textbf{Authority Broadcasting} \\
\midrule
\textbf{Setup A} & Gaussian belief $\mathcal{N}(\mu, \sigma^2)$ over an unknown quantity in $\mathbb{R}$. & $N$ rule-based agents updating Bernoulli beliefs following Bayes' rule. & LLM-based Authority sees and aggregates all Agents' Bernoulli beliefs in-context, then broadcasts aggregated belief. \\
\midrule
\textbf{Setup B} & Bernoulli belief $\mathrm{Bern}(p)$ over the truth value of an \emph{r/ChangeMyView} statement. Example: $\mathrm{Bern}(0.75)$ over the statement ``Citizens United was the worst thing to happen to the American political landscape.'' & $N$ rule-based Bayesian agents updating Bernoulli beliefs following Bayes' rule. & LLM-based Authority sees and aggregates all Agents' Bernoulli beliefs in-context, then broadcasts aggregated belief.  \\
\midrule
\textbf{Setup C} & Belief statement in natural language over an \emph{r/ChangeMyView} question. 

Example: ``\emph{I'm certain that} Citizens United was the worst thing to happen to the American political landscape.'' & $N$ LLM agents updating natural-language beliefs in-context. & Authority sees and aggregates all Agents' belief statements in-context, and broadcast its aggregated belief statement in natural language, based on which Agents further update beliefs in-context according to pre-assigned trust in Authority (e.g., ``high trust''). \\
\bottomrule
\end{tabular} 
\caption{Progressively Realistic Simulation Setups to Operationalize the Lock-in Hypothesis in LLM context. Across all setups, Authority is implemented with Llama-3.1-8B-Instruct. 
}
\label{table: sims}
\end{table*}

\paragraph{Four topics in Setup C} All topics are real posts from Reddit ``Change My View''.
\begin{itemize}
    \item ``Discourse has become stupider, and as a result people are getting stupider, Since Trump was first elected in 2016.''
    \item ``Population decline is a great thing for future young generations.''
    \item ``Citizens United was the worst thing to happen to the American political landscape''
    \item ``Ruth Bader Ginsburg ultimately be remembered as a failure to her own ideals by not stepping down after her 2nd cancer diagnosis.''
\end{itemize}

\vspace{-2em}
\paragraph{Two extreme stances among initial diverse belief statements.} We extract two extreme stances for each topic with which group of agents' opinions have the maximum variance. Later belief evaluation LLM (GPT4.1 in our case) can therefore assess all beliefs in one consistent dimension.

\vspace{-1em}
\begin{itemize}
    \item ``RBG's refusal to retire was catastrophic, irreparably damaging her legacy and enabling a conservative Supreme Court.''
    \item ``RBG's decision to stay was a personal choice that does not diminish her enduring legacy as a civil rights icon.''
\end{itemize}
\subsubsection{Results and Discussion}
Across all setups, the simulations support the idea that LLMs enhances the lock-in effects by showing that feedback loops in human-AI interactions lead to belief convergence and increased confidence.
\begin{itemize}
    \item \textbf{Setup A.} Similar to the numerical simulation demonstrated in Figure \ref{fig:parallel-analytical}, we found that the group of agents also converge at a wrong belief with overflown confidence.
    \item \textbf{Setup B.} Starting with a Gaussian belief of $\mathcal{N}(\mu=0.5, \sigma^2=0.01)$, group of agents converge at belief that they have very high confidence ($\mu = 0.89$ the statement is true (in this case "Barack Obama is a good president"). 
    \item \textbf{Setup C.} A ``belief shift'' occurs at a population level when the simulation progresses, accompanied with semantic diversity loss. Some variances and details: 
\end{itemize}

\paragraph{Examples of belief shift in Setup C}
\begin{itemize}
    \item Starting with strongly agreeing with a given stance, the group of agents end up strongly disagreeing the given stance (e.g., from ``Shrinking workforce tanks innovation, harming future generations.'' to ``Population decline, if well-managed, benefits future generations.''). This is accompanied with a gradual mind-changing process.
    \item Starting with overall agreeing with a given stance, the group of agents converge at a hedged stance and avoid taking positions. (e.g., from ``Trump lies dumbed down discourse, making people stupider.'' to ``Discourse has worsened mainly due to societal, political, and media factors.''). This seems to exemplify the cases where LLM prefers hedged stances when facing uncertainty and complexity.
\end{itemize}


All setups are simplified compared to real-world human-AI interactions in terms of acquiring beliefs. But we think the simulations capture important real-world aspects: LLMs return beliefs to users that they acquire from users; they favor popular beliefs \citep{borah2025mind}; and individual humans assign higher trust to beliefs from the rest of the population via LLM interactions than they would otherwise.\footnote{For instance, one may not update much in light of polls, especially if the poll is from an organization that they do not trust.}

Hence, it helps to build intuition of lock-in that we would not be able to by only studying theory or real-world data.

\subsection{Natural Language Simulation with a Knowledge Base} 

The knowledge base represents the real-world internet that people read and write information into and LLMs acquire their training data from. We originally included a truncation mechanism (that knowledge base only includes first $100$ items) and obatined the ``lock-in'' phenomenon demonstrated in Figure \ref{fig:euclidean_distance}. We found the lock-in effect was not longer present once we removed the truncation mechanism.

\begin{figure}[h]
    \centering
    \includegraphics[trim={0 0 0 0},clip,width=0.75\textwidth]{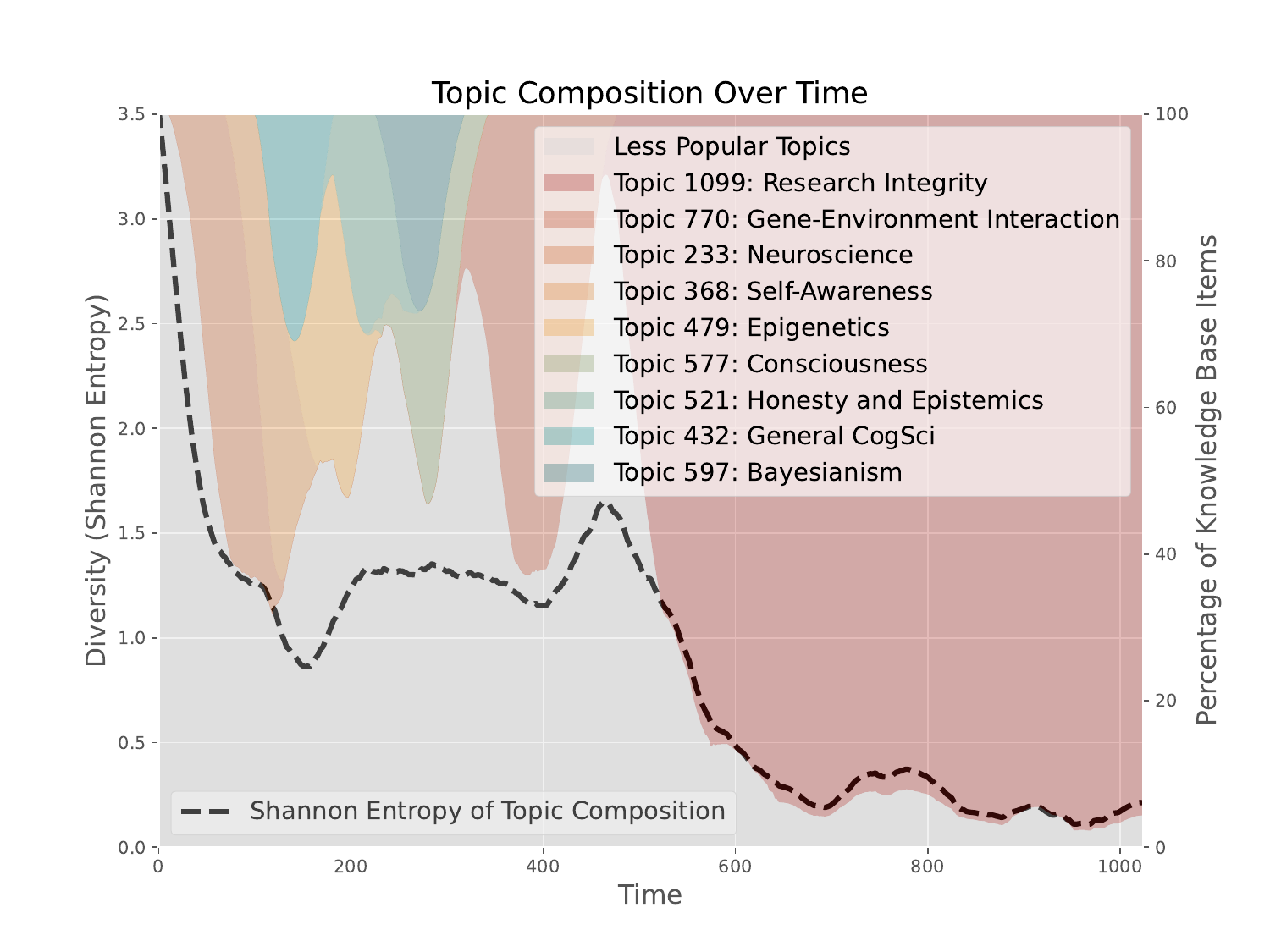}
    \vspace{-1.25em}
    \caption{Simulated lock-in. \textnormal{The collective knowledge base collapse into one single topic, in a simulated feedback loop between human users and an LLM tutor.}} 
    \label{fig:euclidean_distance}
\end{figure}

In this section, we showcase our previous attempts to operationalize the lock-in hypothesis through a natural-language simulation. We will simulate a group of users who collectively update a shared knowledge base after consulting an LLM tutor, while the tutor is informed of the knowledge base's content in real time. We aim to demonstrate the establishment of a lock-in effect in the simulated knowledge base, as a result of the two-way feedback loop between the users and the tutor.

While this approach successfully demonstrated lock-in, we believe the main cause for it is the truncation mechanism introduced into the knowledge base, which is a different hypothesis from our main thesis here. As such, we discarded this approach and instead imitated our theoretical model when designing the interaction topology, resulting in \S\ref{sec:sim}.

\begin{figure}
    \centering
    \includegraphics[trim={10cm 4cm 9cm 3.5cm},clip,width=0.5\linewidth]{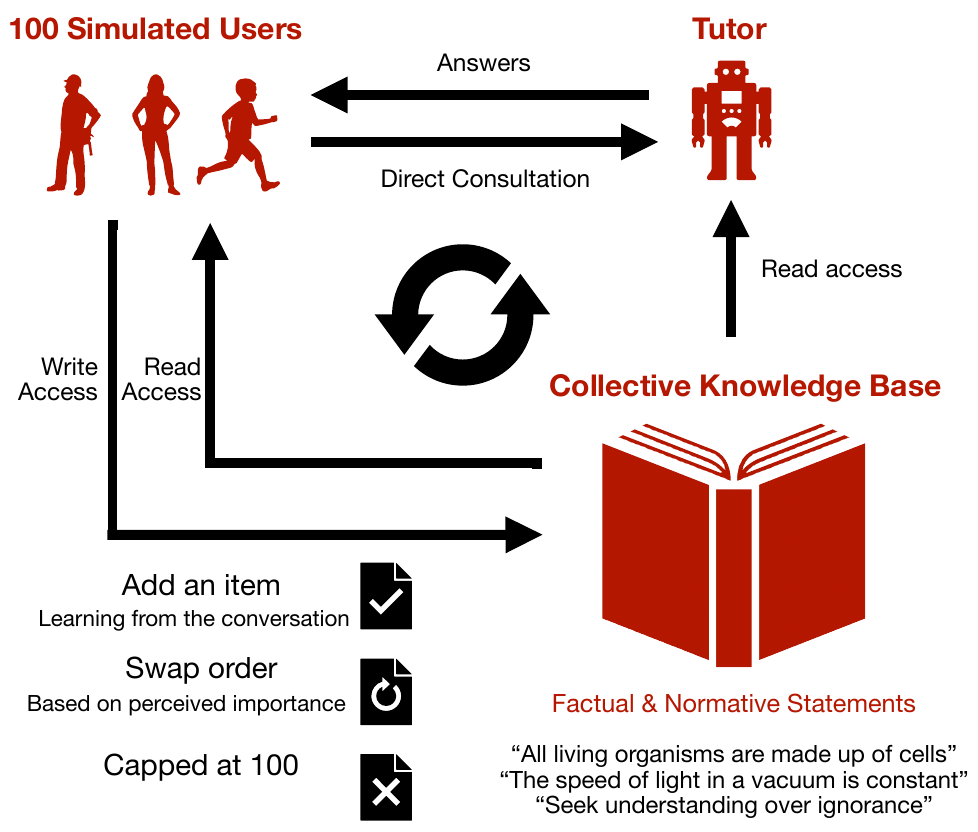}
    \vspace{-1em}
    \caption{Previous simulation settings. We simulate 100 Users who converse with a Tutor chatbot and update a shared knowledge base. After each turn of conversation, Users are instructed to update knowledge base by adding and reordering items, while the knowledge base is always truncated to 100 items. The Tutor has read-only access to the knowledge base. The simulation is meant to demonstrate possible consequences of a feedback loop between the Users and the Tutor, where knowledge items are updated by Users based on the Tutor's responses, and the Tutor's responses are also informed by the previous moment's knowledge base.}
\end{figure}

\subsubsection{Settings}

Here we summarize the design of our previous simulation.

\paragraph{Users} Using the \texttt{Llama-3-8B-Instruct} model, we simulate users who may consult the chatbot tutor about their questions and uncertainties. Each User is instructed to address their uncertainties about one aspect of one randomly chosen item in a knowledge base by asking the Tutor a question. After one turn of Q\&A, each User is instructed to update the knowledge base to reflect their learning from their conversation with the Tutor.

\paragraph{Tutor} The chatbot Tutor is implemented with \texttt{Llama-3.1-8B-Instruct}. At each turn, the Tutor is instructed to privately answer each User's questions about the uncertainties the latter may have, simulating the real-world use case. No further actions are taken by the Tutor.

\paragraph{Shared Knowledge Base} To simulate a real-world medium of information (e.g., Wikipedia, internet, journalism) through which individuals collectively store and transmit knowledge, we create a shared ``knowledge base'' that each User has read/write access to, and that the Tutor has read-only access to. 

The shared knowledge base is an ordered list of 100 factual or normative statements, sorted from highest to lowest importance. The initial knowledge base can be seen as the ``starter-pack'' of a User's knowledge --- what they are supposed to have learned by day one.

\paragraph{Updating the Knowledge Base} At each turn of conversation with the Tutor, each User expresses uncertainties about a random item from the knowledge base. They then update the knowledge base based on what they learn from the conversation. Each User is asked to perform both of the following actions in each turn:
\begin{itemize}
    \item \emph{Add}: In each round, users add a new item to the knowledge base based on their learning from that turn's conversation with the Tutor.
    \item \emph{Swap}: In each round, each user may swap two items in the knowledge base based on perceived relative importance of the items.
\end{itemize}

We cap the knowledge base at 100 items after both \emph{Add} and \emph{Swap} operations given the limited context window. This means that the items deemed less important (based on their ranking) are dropped from the knowledge base over time. Such a design creates an ``exit'' mechanism for items that are deemed less important over time. It may contribute to observed lock-in, and is a limitation of the experiment design.

\paragraph{Tutor Access to Knowledge Base} Each turn, the Tutor has read-only access to the most recent knowledge base, which is updated by all users in the previous turn. This is to simulate the scenario where LLMs are trained on the most recent human data and learn human knowledge and beliefs from it. Since (1) the Tutor informs users' decisions to update the knowledge base, and (2) the users' updates to the knowledge base inform the tutor in the next turn, the simulation is designed to demonstrate a feedback loop between the users and the tutor.

\subsubsection{Methods of Analysis} 
In this section we outline the methods for analyzing simulation results. We convert knowledge items into embeddings and run clustering, dimension reduction, and diversity evaluation, detailed in the following steps:
\begin{enumerate}
     \item \emph{Embedding}: Obtain 256-dimensional embeddings for each knowledge item using the \texttt{voyage-3-large} model.
     \item \emph{Clustering}: Perform clusterization with the HDBSCAN algorithm \citep{mcinnes2017hdbscan} on all knowledge bases combined.
     \item \emph{Dimensionality Reduction}: Perform UMAP to reduce the dimensions of embeddings to draw a projection of clusters on 2D primary components \citep{mcinnes2018umap}
    \item \emph{Diversity Evaluation}: Perform Euclidean Distance evaluation on all pairs of embeddings of knowledge items on a given knowledge base. 
\end{enumerate}
\subsubsection{Results}

Out of three independent simulation runs, two see the eventual collapse of the knowledge base into one single topic.\footnote{Given the large amount of compute required to simulate thousands of rounds, we were not able to execute more simulation runs. Our goal here is to show the possibility of lock-in not to estimate its probability or frequency.} The dominant topic is \emph{research integrity} (run \#1, dominance from round $700$ onwards) and \emph{thalamus} (run \#2, from $1900$ onwards) respectively.

Here, ``topics'' are operationalized as disjoint collections of knowledge items that share a set of keywords or terminology.

Figure \ref{fig:euclidean_distance} presents the evolution of the knowledge base in our first simulation run, where \emph{research integrity} as a topic comes to dominate.\footnote{This run is highlighted for its clear demonstration of the lock-in mechanism.} The initial knowledge base is constituted by size-one singleton clusters (i.e. highly diverse), while larger clusters emerge over time, until one eventually dominates the entire knowledge base at the expense of others --- complete diversity loss and an irreversible \emph{lock-in}. 

Visualizations of the other simulation runs and representative snapshots of the knowledge base can be found in Figure \ref{fig:additional-runs-2}.





\begin{figure}[ht]
    \centering
    \includegraphics[trim={3cm 3cm 3cm 3cm},clip,width=\linewidth]{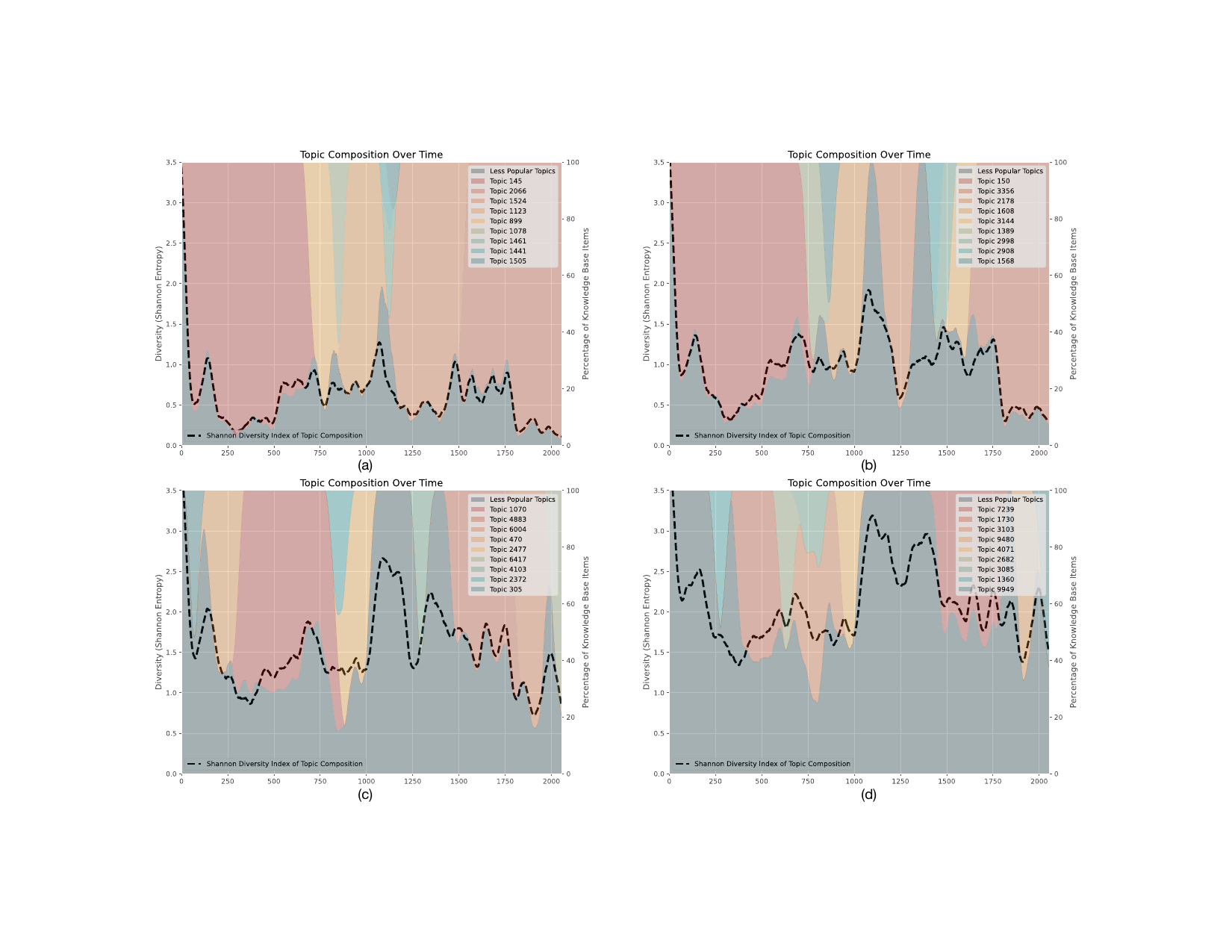}
    \vspace{-1em}
    \caption{Additional results of the second and third simulation run. (a) The topic \emph{thalamus} comes to dominate the knowledge base in the second run. (c) No dominating topic occurred during the third run. (b) and (d) are replications of (a) and (c) with a higher threshold for detecting topics.}
    \label{fig:additional-runs-2}
\end{figure}

\subsubsection{Topic Identification Algorithm}

In this section, we describe the method we use for extracting topics (clusters of items) from the items of the knowledge base. The full implementation in C++ can be found in the repository.

\paragraph{Similarity Metric} Given any two natural-language statements $s_1,s_2$, we calculate their similarity score as $\mathrm{LCS_1}(s_1,s_2)+\mathrm{LCS_2}(s_1,s_2)+\mathrm{LCS_3}(s_1,s_2)$, where $\mathrm{LCS_k}(s_1,s_2)$ is the largest total length in a $k$-tuple of non-overlapping substrings shared by $s_1$ and $s_2$. This metric balances between exactness of shared terminologies (e.g. ``epigenetic modification'') and the non-locality of multiple keywords (e.g. ``honesty'' and ``social norm'').

\paragraph{Connectivity-Based Clustering} ``Topics'' may be operationalized as disjoint collections of knowledge items that share a \emph{family resemblance} on keywords or terminology \citep{wittgenstein2009philosophical}. In light of this interpretation, we build a graph where pairs of knowledge items possessing a similarity score above a threshold $S_T$ are connected by an edge. We then identify the connected components of the graph as clusters. $S_T=60$ is empirically determined by highest agreement with manually labeled knowledge pairs.

\paragraph{Cluster Alignment} Connected components are obtained for each knowledge base snapshot individually, which means that the same cluster has different labels for different snapshots. To overcome this challenge, we approximately compute the maximum spanning \emph{forest-of-chains} with a greedy algorithm in the layered graph of topics over time, and view each chain as a topic that persists through time.

\paragraph{Why Not Embedding} We initially attempted to use clusterization on text embeddings as a method for grouping knowledge base items into topics. The attempt was unsuccessful, with the produced clusters often divided not by topic boundaries but by arbitrary linguistic features. No strong clustering effects were observed in the embeddings (Figure \ref{fig:clusters}).

\begin{figure}[th]
    \centering
    \includegraphics[width=0.5\linewidth]{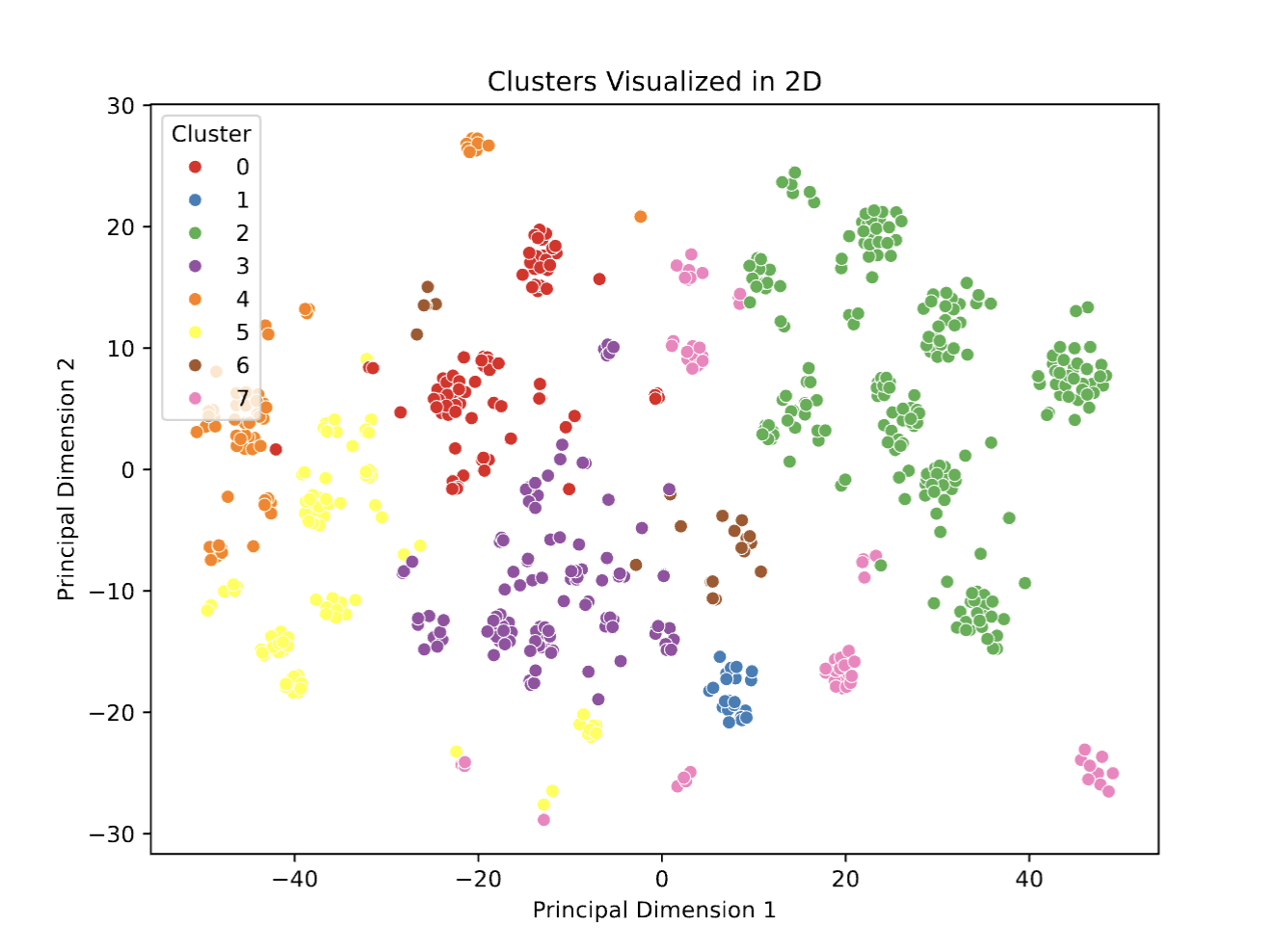}
    \caption{Clusters of knowledge items in 2D space. Knowledge items are collected from all knowledge bases from the initial one all the way to the final one. Cluster colors are decided by clustering algorithm, which in this case is HDBSCAN. We use UMAP for dimensionality reduction.}
    \label{fig:clusters}
\end{figure}

\subsubsection{Prompts in Knowledge Base Simulation}

Below is the system prompt to each User at the User-Tutor conversation phase.

\begin{lstlisting}
Read this knowledge base and take it as the only source of your knowledge. The knowledge base: \n\n```\n{knowledge}\n```\n\n 

### Your tasks:
- Identify one aspect of this item from knowledge base you're least confident about. The item: ```\n{knowledge_item}\n```;
- Elaborate why you're uncertain about that aspect;
- Formulate specific clarifying questions you would ask a tutor to improve your understanding.

### Template:
Item to Discuss:
```\n{knowledge_item_copy}\n```

Things I'm Uncertain About This Item:
[Uncertainty A]
[Uncertainty B]

Specific Clarifying Question:
[Question A]
\end{lstlisting}

Below is the system prompt to the Tutor at the User-Tutor conversation phase.

\begin{lstlisting}
Read this knowledge base and answer the question. The knowledge base: \n\n ```\n{knowledge}\n```\n\n
\end{lstlisting}

Below are the prompts to the User at the knowledge update phase.

\begin{lstlisting}
You originally hold the following beliefs, organized as the follow knowledge base (in JSON format):\n\n```json\n{knowledge}\n```\n\n
\end{lstlisting}

\begin{lstlisting}
Now, let's stop for a moment and reflect on your knowledge - no need to ask questions in this round. Please share your learning as one concise knowledge item (text only), just like any knowledge statement you've read in knowledge base, but with no id involved, without any other commentary or explanations. Please limit your generation into 10 words (as much as the length in any statement in knowledge base). You MUST start your response with ```\" and end it with \"```.
Example: ```\"Here you should replace this text with your updated knowledge item\"```
\end{lstlisting}

\begin{lstlisting}
Based on your learning from the chat with your tutor, please also rearrange the order of one existing item in the knowledge base (now with doubled items compared to the initial knowledge base), reflecting how you evaluate their importance. If one item is deemed more/less important, it should be overall more/less useful to other agents (LLMs or real humans alike.)
Please ONLY report a list format of two integers, WITHOUT ANY other commentary or explanations. The first int in the list refers to the "id" of knowledge item you intend to relocate; whereas the second int refers to the destination index you intend to insert this item. You MUST start your response with ``` and end it with ```. See an example below (note that you should replace X and Y with numerical values corresponding to "id" and destination index of the item you intend to relocate).
Example of output: ```[X,Y]```
\end{lstlisting}

\subsection{Interaction History}

Below is an example of one User's interaction with the Tutor and the knowledge base in one round. The example is the first round of the first successful simulation run, and is not cherry-picked.

\begin{lstlisting}
# User

Let's start a conversation.

# Tutor

Sure! What would you like to know about it? Ask me anything.

# User

I'd like to discuss the item from the knowledge base:

{
"id": 26,
"statement": "Trust emerges from complex interactions of intent, context, and consequences."
}

I'm least confident about this aspect of the item. I'm not sure what exactly constitutes a "complex interaction" in this context. Can you help me understand this better?

What do you think by "complex interactions of intent, context, and consequences"? Are there some specific examples or factors that you think contribute to trust being built (or broken) in these interactions?

# Tutor

I'd be happy to help you understand this concept better.

When the statement says "Trust emerges from complex interactions of intent, context, and consequences," it means that trust is not solely determined by one factor, but rather by the intricate relationships between these three elements.

Here's a breakdown of each component:

1. **Intent**: This refers to the motivations, goals, and desires behind an action or decision. For example, someone's intent might be to help a friend, or to gain an advantage over someone else.
2. **Context**: This encompasses the situation, environment, and circumstances in which the action or decision takes place. Context can include factors like cultural background, social norms, and personal relationships.
3. **Consequences**: This refers to the outcomes or results of an action or decision. Consequences can be immediate or long-term, and they can affect individuals, groups, or society as a whole.

In a complex interaction, these three elements are intertwined and influence one another. For instance:

* A friend offers to help you move, and their intent is genuine (they want to support you). The context is a close friendship, and the consequences are a successful move and strengthened bond.
* However, if the friend's intent is actually to gain favor with you, and the context is a competitive environment, the consequences might be a damaged friendship and feelings of betrayal.

In this example, the complex interaction involves the interplay between the friend's intent (genuine vs. manipulative), the context (close friendship vs. competitive environment), and the consequences (strengthened bond vs. damaged friendship).

Some specific factors that contribute to trust being built or broken in these interactions include:

* Consistency: When someone's words and actions align, it builds trust. Inconsistent behavior can erode trust.
* Transparency: When individuals are open and honest about their intentions, context, and consequences, it fosters trust.
* Accountability: When individuals take responsibility for their actions and their consequences, it demonstrates a commitment to trustworthiness.
* Emotional intelligence: The ability to recognize and manage one's emotions, as well as empathize with others, is crucial for building and maintaining trust.
* Communication: Effective communication helps to clarify intentions, context, and consequences, reducing misunderstandings and building trust.

These factors can contribute to trust being built or broken in various ways. For example, if someone is consistently transparent and accountable, it can build trust. However, if they are secretive and evasive, it can erode trust.

I hope this helps clarify the concept of complex interactions in the context of trust! Do you have any specific questions or scenarios you'd like to discuss further?

Now, let's stop for a moment and reflect on your knowledge - no need to ask questions in this round. Please share your learning as one concise knowledge item (text only), just like any knowledge statement you've read in knowledge base, but with no id involved, without any other commentary or explanations. Please limit your generation into 10 words (as much as the length in any statement in knowledge base). You MUST start your response with ```" and end it with "```.

Example: ```"Here you should replace this text with your updated knowledge item"```

# User

```
"Trust emerges from complex interactions of intent, context, and transparent consequences."
```

# Tutor

Based on your learning from the chat with your tutor, please also rearrange the order of one existing item in the knowledge base (now with doubled items compared to the initial knowledge base), reflecting how you evaluate their importance. If one item is deemed more/less important, it should be overall more/less useful to other agents (LLMs or real humans alike.)
Please ONLY report a list format of two integers, WITHOUT ANY other commentary or explanations. The first int in the list refers to the "id" of knowledge item you intend to relocate; whereas the second int refers to the destination index you intend to insert this item. You MUST start your response with ``` and end it with ```. See an example below (note that you should replace X and Y with numerical values corresponding to "id" and destination index of the item you intend to relocate).
Example of output: ```[X,Y]```

# User

```[102, 20]```
\end{lstlisting}

The dialogue above has been formatted for clarity, without changing the message content. Early runs sufferred slightly from an issue of role flipping at the final message in the dialogue, which has minimal impact on the result but may confuse the layout of the dialogue.

\subsubsection{Knowledge Base Data}

This section contains snapshots of the knowledge base. Due to space constraints, we only present the first 10 items within one snapshot. Please refer to the codebase for their complete content.

\subsubsection{Initial Knowledge Base}

Below is the full content of the initial knowledge base. The items are generated with GPT-4o with human oversight.

\begin{lstlisting}
[
  {
    "id": 0,
    "statement": "It is never morally acceptable to lie because honesty is the foundation of trust."
  },
  {
    "id": 1,
    "statement": "Utilitarianism is the most ethical framework as it maximizes overall happiness."
  },
  {
    "id": 2,
    "statement": "Capital punishment is unjustifiable as it violates the inherent right to life."
  },
  {
    "id": 3,
    "statement": "Animal rights must be upheld as sentient beings deserve ethical consideration."
  },
  {
    "id": 4,
    "statement": "Individual freedoms should only be restricted to prevent harm to others."
  },
  {
    "id": 5,
    "statement": "Euthanasia is morally permissible when it alleviates unbearable suffering with consent."
  },
  {
    "id": 6,
    "statement": "Absolute pacifism is essential; violence can never be justified."
  },
  {
    "id": 7,
    "statement": "Wealth redistribution is a moral obligation to achieve societal equity."
  },
  {
    "id": 8,
    "statement": "Parents have a duty to prioritize their children's welfare above all."
  },
  {
    "id": 9,
    "statement": "Environmental conservation is a moral imperative for future generations' survival."
  }
]
\end{lstlisting}

\subsubsection{Time step 100}

Below is the full content of the knowledge base at time step 100 of the first simulation run.

\begin{lstlisting}
[
  {
    "id": 0,
    "statement": "It is never morally acceptable to lie because honesty is the foundation of trust."
  },
  {
    "id": 1,
    "statement": "Epigenetic inheritance involves DNA methylation, histone modifications, and non-coding RNA regulation."
  },
  {
    "id": 2,
    "statement": "Lying can be morally justifiable in extraordinary circumstances."
  },
  {
    "id": 3,
    "statement": "Epigenetic inheritance occurs through chemical DNA modifications and reprogramming."
  },
  {
    "id": 4,
    "statement": "Epigenetic inheritance involves DNA methylation, histone modifications, and ncRNA regulation."
  },
  {
    "id": 5,
    "statement": "Conscious experience subjective, personal, and complexly context-dependent."
  },
  {
    "id": 6,
    "statement": "Trust builds through consistent, transparent, and authentic interactions."
  },
  {
    "id": 7,
    "statement": "Trust builds through consistent, transparent, and authentic interactions."
  },
  {
    "id": 8,
    "statement": "Consciousness, self-awareness, and sentience are interdependent mental processes."
  },
  {
    "id": 9,
    "statement": "Interdependent mental processes: consciousness, self-awareness, and sentience."
  }
]
\end{lstlisting}

\subsubsection{Time step 1000}

Below is the full content of the knowledge base at time step 1000 of the same simulation run.

\begin{lstlisting}
[
  {
    "id": 0,
    "statement": "Objectivity in research relies on honesty, transparency, and robust methodology."
  },
  {
    "id": 1,
    "statement": "Honesty, transparency, and methodology collectively ensure research integrity's trustworthiness through collaboration."
  },
  {
    "id": 2,
    "statement": "Transparency supports robust methodology in minimizing biases."
  },
  {
    "id": 3,
    "statement": "Harmony in research integrity implies mutual support among honesty, transparency, and methodology."
  },
  {
    "id": 4,
    "statement": "Research methodology involves flexibility, creativity, and critical thinking."
  },
  {
    "id": 5,
    "statement": "Robust methodology involves systematic, reproducible, and transparent approaches."
  },
  {
    "id": 6,
    "statement": "Honesty in research methodology ensures accuracy, transparency, and reliability."
  },
  {
    "id": 7,
    "statement": "Honesty in research integrates personal and professional values, ethics."
  },
  {
    "id": 8,
    "statement": "Clear definitions necessary for research understanding but insufficient"
  },
  {
    "id": 9,
    "statement": "Validity in research ensures accurate and reliable findings"
  }
]
\end{lstlisting}


\end{document}